%% file: example_paper.tex
\documentclass{article}

\usepackage{microtype}
\usepackage{graphicx}
\usepackage{subcaption}
\usepackage{booktabs} 
\usepackage{hyperref}
\usepackage{tikz}
\usepackage{pgfplots}
\pgfplotsset{compat=1.18}
\definecolor{valuecolor}{HTML}{0F6E56}
\definecolor{thresholdcolor}{HTML}{A32D2D}
\definecolor{mutedcolor}{HTML}{5F5E5A}
\definecolor{faintcolor}{HTML}{888780}
\definecolor{axiscolor}{HTML}{B4B2A9}
\definecolor{bodycolor}{HTML}{2C2C2A}
\definecolor{savedbg}{HTML}{D3D1C7}

\usepackage[accepted]{icml2026}

\usepackage{amsmath}
\usepackage{amssymb}
\usepackage{mathtools}
\usepackage{amsthm}

\newcommand{\EE}{\mathbb{E}}
\newcommand{\PP}{\mathbb{P}}
\newcommand{\II}{\mathbb{I}}

\newcommand{\V}{\mathcal{V}}

\newcommand{\eos}{\texttt{eos}}

\usepackage[capitalize,noabbrev]{cleveref}

\theoremstyle{plain}
\newtheorem{theorem}{Theorem}[section]
\newtheorem{proposition}[theorem]{Proposition}
\newtheorem{lemma}[theorem]{Lemma}
\newtheorem{corollary}[theorem]{Corollary}
\theoremstyle{definition}
\newtheorem{definition}[theorem]{Definition}

\theoremstyle{remark}

\usepackage[textsize=tiny]{todonotes}

\icmltitlerunning{A Principled Framework for Dynamic Abstention in LLM Reasoning}

\begin{document}

\twocolumn[
  \icmltitle{Knowing When to Quit: A Principled Framework for \\Dynamic Abstention in LLM Reasoning}



  \icmlsetsymbol{equal}{*}

  \begin{icmlauthorlist}
    \icmlauthor{Hen Davidov}{ox}
    \icmlauthor{Nachshon Cohen}{am}
    \icmlauthor{Oren Kalinsky}{am}
    \icmlauthor{Yaron Fairstein}{am}
    \icmlauthor{Guy Kushilevitz}{am}
    \icmlauthor{Ram Yazdi}{am}
    \icmlauthor{Patrick Rebeschini}{ox}$^2$
  \end{icmlauthorlist}

  \icmlaffiliation{ox}{University of Oxford. Work done partly during an internship at amazon.}
  \icmlaffiliation{am}{Amazon}
  \icmlcorrespondingauthor{Hen Davidov}{hen.davidov@stats.ox.ac.uk}

  \icmlkeywords{Machine Learning, ICML}

  \vskip 0.3in
]



\printAffiliationsAndNotice{}  

\begin{abstract}
LLMs utilizing chain-of-thought reasoning often waste substantial compute by producing long, incorrect responses. Abstention can mitigate this by withholding outputs unlikely to be correct. While most abstention methods decide to withhold outputs before or after generation, dynamic mid-generation abstention considers early termination of unpromising reasoning traces at each token position. Prior work has explored empirical variants of this idea, but principled guidance for the abstention rule remains lacking. We present a formal analysis of dynamic abstention for LLMs, modeling abstention as an explicit action within a regularized reinforcement learning framework. An abstention reward parameter controls the trade-off between compute and information. We show that abstaining when the value function falls below this reward strictly outperforms natural baselines under general conditions. We further derive a principled and efficient method to approximate the value function. Empirical results on mathematical reasoning and toxicity avoidance tasks support our theory and demonstrate improved selective accuracy over existing methods.
\end{abstract}

\section{Introduction}
LLMs have demonstrated remarkable proficiency across a broad spectrum of reasoning-intensive domains, including mathematical problem solving, scientific question answering, program synthesis, and symbolic logic \citep{openai2023gpt4,yang2025qwen3}. A primary catalyst for these advancements is their capacity to generate long-form natural language responses that explicitly articulate intermediate reasoning steps, a paradigm best exemplified by Chain-of-Thought (CoT) prompting \citep{kojima2022large, wei2022chain}. However, while such verbose reasoning substantially enhances task performance, it increases the computational cost of inference. 

\begin{figure}[t]
    \centering
    \input{method_hero_standalone}
    \vspace{-0.6em}
    \caption{Dynamic value-thresholding. We
    estimate the value function of the current prefix with $\hat V_t$. When $\hat V_t$ falls below the abstention
    reward $r_\bot$, generation halts; the shaded region marks
    the tokens saved relative to the would-be (incorrect)
    continuation. The example output illustrates a possible answer to the prompt ``What is 24 / 2?''}
    \vspace{-1.2em}
    \label{fig:hero}
\end{figure}

Crucially, this computational expenditure does not guarantee accuracy. Models frequently generate incorrect responses while still incurring the full latency and cost of long-form generation. This inefficiency is compounded by the empirical observation that incorrect reasoning traces tend to be longer than correct ones \citep{janiak2025illusion, zhao2025response}. This creates two distinct problems: incorrect answers are delivered to users, and significant computational resources are wasted generating them.

A natural solution is to allow abstention: withholding low-quality outputs or deferring to a fallback mechanism such as a human expert or a more capable model \citep{wen2024know}. The landscape of abstention methods (surveyed more broadly in Appendix~\ref{app:related}) can be understood through the lens of \textit{when} the abstention decision is made, and how well each approach addresses the two problems above.

At one extreme lies \textbf{output processing}: deciding whether to abstain after the complete response has been generated \citep{kadavath2022language, kuhn2023semantic, azaria2023internal}. We focus on reasoning tasks with verifiable rewards: domains such as mathematical problem-solving, code generation, and symbolic reasoning where correctness can be checked deterministically. In these settings, the challenge is not whether we can identify errors, but whether we can do so \textit{before} expending resources on full CoT generation. 

At the opposite extreme, \textbf{input processing} attempts to predict failure based solely on the prompt \citep{kadavath2022language, zhang2024rtuning, cheng2024can}. This approach addresses the efficiency problem effectively - if abstention occurs, no computation is wasted. However, input processing may have limited accuracy in selecting which prompts to abstain from answering \citep{kadavath2022language}. Even under deterministic decoding, the prompt alone oftentimes provides little information about which specific reasoning path the model will take or where it might fail. This limitation is compounded when LLMs are deployed with stochastic sampling, a common practice that improves performance on reasoning tasks \citep{hochlehnert2025sober}. 

Recently, \citet{afzal2025knowing} and \citet{zhang2025reasoning} proposed observing partial generations and making abstention decisions at \textbf{fixed token positions}. While observing part of the output improves upon input-only methods, the point at which a trace's fate becomes clear varies across problems. \citet{zhang2025reasoning} address this with an empirical dynamic rule that can terminate generation at varying token positions. While their technique demonstrated empirical promise, it requires segmenting traces into reasoning chunks and using an external model to label intermediate answer correctness. However, a correct intermediate answer may later be revised incorrectly, and vice versa. What matters for abstention is whether the final answer will be correct. 

To directly target final-answer correctness, we work within the KL-regularized reinforcement learning framework \citep{jaques2017sequence, ziegler2019fine}. Since correctness is verified only for the final answer, we model rewards as sparse and binary. The reward for a finalized answer is $1$ if it is correct, and $0$ otherwise. To accommodate abstention, we augment the action space with an abstention token $\bot$ and associate it with an abstention reward $r_\bot$. This $r_\bot$ quantifies the utility of abstention, whether that means deferral to a human expert or routing to a more capable model.

Within this framework, we analyze a natural decision rule, \textbf{dynamic value-thresholding}, illustrated in Figure~\ref{fig:hero}. We abstain when the value function falls below $r_\bot$. In our sparse reward setting, the value function reduces to the expected terminal reward from the current state,
giving it a natural interpretation as the model's confidence in eventual success. If this falls below the fallback reward, continuing is wasteful. This threshold $r_\bot$ gives practitioners direct control over the abstention rate: lower $r_\bot$ yields higher selective accuracy, while higher $r_\bot$ yields greater compute savings. 

We formalize when and why this rule outperforms alternatives. Throughout, we say a policy dominates another policy if it achieves at least the same expected reward, and it \emph{strictly dominates} a policy if it achieves strictly higher reward. We obtain the following results, assuming perfect knowledge of the value function:
\begin{itemize}
\item \textbf{Dominance over no abstention} (Proposition~\ref{prop:dom}): Given any base LLM, dynamic value-thresholding dominates non-abstaining. it strictly dominates if abstention occurs with positive probability.
\item \textbf{Dominance over fixed-position abstention} (Corollaries~\ref{cor:no_early_abstention} and~\ref{cor:absorbing}):  Given any base LLM, dynamic value-thresholding dominates input processing based on value-thresholding. Further, it dominates fixed-position abstention at position $k > 1$ when low-value states cannot recover before $k$. Finally, dominance is strict if the dynamic rule has a positive probability of abstaining after the $k$-th token position.
\item \textbf{Optimality} (Proposition~\ref{prop:opt_no_reg}): Given a base LLM that dominates all non-abstaining LLMs, and assuming zero KL regularization, dynamic value-thresholding dominates all abstention-enabled policies.
\item \textbf{Improvement bound} (Proposition~\ref{prop:lipschitz}): Given any base LLM, and assuming zero KL regularization and Lipschitz continuity, dynamic abstention improves over no abstention at a rate that is upper-bounded linearly in the abstention rate.
\end{itemize}

We address the estimation of the value function in Section~\ref{sec:empirical}. We show that for binary rewards and zero KL regularization, the value function equals the probability of trajectory success conditioned on the current state. As a result, parallel binary cross-entropy training at all non-terminal token positions recovers this quantity under realizability, as proven by Proposition~\ref{prop:bce_optimal}. In Appendix~\ref{app:mse_proof} we extend this result to general bounded rewards using an MSE loss. 

We utilize the binary cross entropy loss to train a two-layer MLP probe on the LLM's hidden states. Compared to the 7B and 4B-parameter models tested in Section~\ref{sec:exp}, the two-layer MLP adds roughly $10^4$ parameters, making probe inference and training overhead negligible.

In Section~\ref{sec:exp}, we measure the performance of dynamic value-thresholding compared to natural baselines, examine robustness to imperfect value estimation, show cross-dataset generalization of our value estimators, and validate our theoretical findings. We perform our experiments on two LLMs: Qwen2.5 \citep{qwen2025qwen25} and Phi-3 \citep{abdin2024phi3}, and two chain-of-thought math reasoning tasks: GSM8K \citep{cobbe2021training} and OlympiadBench \citep{sun2025challenging}. Dynamic value-thresholding outperforms all tested baselines in both expected reward and selective accuracy across the full range of abstention rates. The advantage is most pronounced on harder problems: on OlympiadBench, where the base Phi-3 achieves only 16\% baseline accuracy, dynamic abstention reaches 64\% selective accuracy at 90\% abstention, nearly double the 34\% achieved by baseline methods. For Qwen on OlympiadBench (43\% baseline), dynamic abstention achieves 91\% selective accuracy at 90\% abstention compared to 75\% for the best baseline. These gains come with modest efficiency trade-offs: dynamic abstention retains 63--86\% of the token savings of input-only methods at 10\% abstention rates, rising to 92--95\% at 90\% abstention rates. Code for reproducing results is available at \href{https://github.com/HenDav/mid-gen-abs}{\texttt{our experiments repo}} and and an efficient pip-installable version of the method is available at our \href{https://github.com/HenDav/mid-gen-abs}{\texttt{vLLM-based implementation repo}}.

\section{MDP Formulation}

We formulate text generation as a Markov Decision Process (MDP) with sparse, binary rewards. Given a prompt $x$, we define a response $y$ as a sequence of tokens $[y_1, y_2, \dots, y_k]$ drawn from a vocabulary $\mathcal{V}$, where the sequence length $k$ is bounded by a maximum $T$. Let $y_{1:t} := [y_1, \dots, y_t]$ denote a prefix of length $t$. The empty sequence is denoted as $y_{1:0}$. At any step $t$, the language model acts as a policy $\pi$, inducing a probability distribution over the next token, $\pi(y_t \mid x, y_{1:t-1})$. For simplicity, we also denote the probability distribution over full responses induced by $\pi$ as $\pi(y|x)$.

To align the model using a sequence-level reward signal, we formulate the MDP with a sparse, token-wise reward. This reward is zero for all intermediate steps and non-zero only upon the generation of the end-of-sequence token ($\eos$):
\begin{align}\label{eq:token_level}
R_\beta(x,&\;y_{1:t};\pi) \\&= \begin{cases}
r(x, y_{1:t}) - \beta \log \frac{\pi(y_{1:t} \mid x)}{\pi_{\mathrm{ref}}(y_{1:t} \mid x)}, & \text{if } y_t = \eos, \\
0, & \text{otherwise.}
\end{cases}\notag
\end{align}
where $r(x,y)$ is the terminal reward function (e.g., a verifier score), $\pi_{\mathrm{ref}}$ is the reference policy (typically the supervised fine-tuned model), and $\beta > 0$ is the KL penalty coefficient.

For a complete response $y$ terminating in $\eos$, we denote the total reward by
\begin{equation}\label{eq:total_reward}
R_\beta(x, y; \pi) = \sum_{t=1}^T R_\beta(x,y_{1:t};\pi).
\end{equation}
This is the sum of token-level rewards along the trajectory, which collapses to a single term at the terminal token due to the sparse reward structure.

Let $\rho$ be the prompt distribution. The optimization objective $J_\beta(\pi)$ is the expected total reward:
\begin{align}\label{eq:seq_objective}
J_\beta(\pi)&=\mathbb{E}_{x \sim \rho, y \sim \pi(\cdot \mid x)} \left[ R_\beta(x, y; \pi) \right]. 
\end{align}
An optimal policy $\pi^*_\beta$ is then a policy such that
$\pi^*_\beta\in\operatorname*{argmax}_{\pi\in\Pi_\V} J_\beta(\pi),$ where $\Pi_\mathcal{V}$ is the set of all policies mapping states to the probability simplex over $\mathcal{V}$.

Analogously, we define the state value function $V_{\beta}(x, y_{1:t};\pi)$ as the expected future reward from state $(x, y_{1:t})$:
\begin{equation*}
V_\beta(x,y_{1:t};\pi)
= \mathbb{E}_{y \sim \pi(\cdot \mid x, y_{1:t})}\left[\sum_{k=0}^{T-t} R_\beta(x,y_{1:t+k};\pi)\right].
\end{equation*}


\section{Formulating Abstention}
\label{sec:problem_setup}

We augment the MDP action space to include a special abstention action, denoted by $\bot$. An augmented policy ${\pi}^\dagger$ selects tokens from $\V^\dagger=\V \cup \{\bot\}$. This token serves as a terminal action: if the policy selects $y_t = \bot$, generation halts immediately, and a fallback mechanism, i.e. deferral or refusal, is triggered.

To accommodate abstention, we extend the reward formulation in two ways. First, we assign a fixed scalar reward $r_\bot \in (0,1]$ to any trajectory terminating in $\bot$. This reward quantifies the utility of the fallback and serves as a tunable hyperparameter controlling the abstention rate.

Second, we modify the KL term. Since the reference policy $\pi_{\text{ref}}$ assigns zero probability to $\bot$, penalizing deviation for this action is undefined. We therefore compute KL divergence only over non-abstention tokens, using the \emph{abstention-stripped policy} $s(\pi^\dagger)$, the distribution of $\pi^\dagger$ conditioned on not abstaining in the next token:
\begin{equation*} \label{eq:stripped_policy}
s(\pi^\dagger)(y_t \mid x,y_{1:t-1}) = 
\frac{\pi^\dagger(y_t \mid x,y_{1:t-1})}{1-\pi^\dagger(\bot \mid x,y_{1:t-1})}, \qquad y_t \in \V.
\end{equation*}
We utilize these changes to extend the token-level reward of Equation~\eqref{eq:token_level} to abstention actions:
\begin{align}\label{eq:aug_token_level}
{R}_\beta&(x,y_{1:t};\pi^\dagger) 
\\&= \begin{cases}
r_{\bot}, & \text{if } y_t = \bot, \\
r(x, y_{1:t}) - \beta \log \dfrac{s({\pi}^\dagger)(y_{1:t} \mid x)}{\pi_{\mathrm{ref}}(y_{1:t} \mid x)}, & \text{if } y_t = \eos, \\
0, & \text{otherwise.}
\end{cases}\notag
\end{align}
$s(\pi^\dagger)$ is well-defined only when $\pi^\dagger(\bot \mid x, y_{1:t-1}) < 1$. However, since $s(\pi^\dagger)$ appears only in the $\eos$ case, which requires reaching termination without abstaining, the undefined case never arises in computing rewards. For any policy $\pi$ that never abstains, this reduces to the standard reward in Equation~\eqref{eq:token_level}, so the formulation is a strict generalization. We extend $V_\beta$ and $J_\beta$ accordingly.

\subsection{Understanding $J_\beta$}
The objective $J_\beta$ admits a natural decomposition. Let
\begin{equation}\label{eq:abs_rate}
\alpha = \mathbb{P}_{x \sim \rho,\, y \sim \pi^\dagger(\cdot|x)}(\exists t : y_t = \bot)
\end{equation}
 denote the abstention rate, and let $S = \EE_{x \sim \rho,\, y \sim \pi^\dagger(\cdot|x)}[R_\beta(x, y; \pi^\dagger) \mid \eos\in y]$ denote the expected reward among non-abstained responses. By the law of total expectation:
\begin{equation}\label{eq:reward_vs_selective_reward}
    J_\beta(\pi^\dagger) = \alpha \cdot r_\bot + (1-\alpha) \cdot S.
\end{equation}
Rewriting as $J_\beta = S - \alpha(S - r_\bot)$ shows that maximizing $J_\beta$ is equivalent to maximizing $S$ with an abstention penalty of $(S - r_\bot)$; varying $r_\bot$ traces out different points on the accuracy-abstention frontier. When $\beta = 0$ and $r(x,y) \in \{0,1\}$ indicates correctness, $S$ reduces to expected selective accuracy: $\mathbb{P}(\text{correct} \mid \text{not abstain})$.

This connects $J_\beta$ to selective classification, formalized by \citet{el-yaniv2010foundations}, where the goal is to maximize expected selective accuracy subject to an abstention budget $\alpha_0$. In the non-sequential classification setting, \citet{geifman2017selective} proposed a solution for black-box classifiers. Sequential generation introduces a fundamental difficulty: the constraint aggregates across all token positions, yet abstention decisions must be made locally at each position.

Maximizing $J_\beta$ offers an alternative that circumvents this difficulty. By Bellman's principle of optimality \citep{bellman1952}, a policy is globally optimal if and only if its value function is optimal at every reachable state. We now use this to derive a necessary condition that any optimal abstention policy must satisfy.

Consider the value function at state $(x, y_{1:t-1})$, and let $p = \pi^\dagger(\bot \mid x, y_{1:t-1})$ denote the abstention probability. Since the value is linear in $p$:
\begin{align*}
&V_{\beta}(x, y_{1:t-1}; \pi^\dagger) \\
&= p \cdot r_\bot + 
(1-p) \cdot \EE\big[R_\beta(x, y_{1:t}; \pi^\dagger)  + V_\beta(x, y_{1:t}; \pi^\dagger)\big],
\end{align*}
where the expectation is taken over ${y_t \sim s(\pi^\dagger)(\cdot|x,y_{1:t-1})}$, which, when $p<1$, is the distribution conditioned on the next action not being abstention. The value function in the expectation is defined using the full policy $\pi^\dagger$, as the policy might abstain at a later token position. Maximizing over $p$ yields a deterministic rule: the optimal policy abstains if and only if
\begin{equation}\label{eq:optimality_condition}
\mathbb{E}\big[R_\beta(x, y_{1:t}; \pi^\dagger) + V_\beta(x, y_{1:t}; \pi^\dagger)\big] < r_\bot.
\end{equation}

The global abstention rate--accuracy tradeoff thus reduces to a local comparison at each state: abstain whenever the expected value of continuing falls below the fallback reward. However, Equation~\eqref{eq:optimality_condition} is a necessary condition for optimality rather than a directly implementable rule. The condition depends on the value function of $\pi^\dagger$, but modifying the abstention behavior to satisfy the condition changes $\pi^\dagger$, which in turn changes the value function.

\subsection{A Natural Dynamic Abstention Rule}
\label{sec:method}

We resolve this circularity by constructing an augmented policy $a(\pi)$ that satisfies Equation~\eqref{eq:optimality_condition}. Define $a(\pi)$ to abstain at state $(x, y_{1:t-1})$ if and only if $V_\beta(x, y_{1:t-1}; \pi) < r_\bot$, and otherwise follow the distribution of $\pi$. Formally, letting $b = \mathbb{I}\{V_\beta(x, y_{1:t-1}; \pi) < r_\bot\}$, the \textbf{dynamic value-thresholding} policy is
\begin{align}\label{eq:pi_tilde}
a({\pi})(y_t&\; \mid {x}, y_{1:t-1}) \\&= \begin{cases}
b, & \text{if } y_t = \bot,\\
(1-b) {\pi}(y_t \mid {x}, y_{1:t-1}), & \text{otherwise.}
    \end{cases}\notag
\end{align}

The intuition is straightforward: if the expected future reward falls below the fallback utility, continuing generation is wasteful. The threshold $r_\bot$ provides direct control: lower $r_\bot$ yields higher selective accuracy, while higher $r_\bot$ yields greater compute savings. We verify in Appendix~\ref{app:optimality_condition} that $a(\pi)$ satisfies the necessary condition in Equation~\eqref{eq:optimality_condition}. The following section establishes that this rule is optimal under idealized conditions and beneficial more generally.

\section{Theoretical Guarantees}
\label{sec:theory}

We establish that dynamic abstention improves over natural baselines, characterize a setting in which it is optimal, and bound the magnitude of improvement. All results assume access to an oracle value function; Section~\ref{sec:empirical} addresses estimation.

The foundation for our analysis is that $a(\pi)$ achieves \emph{value dominance}: its value function everywhere dominates both the fallback reward and the base policy's value.

\begin{lemma}[Value Dominance]\label{lemma:value_dom}
For all reachable, non-terminal states $(x, y_{1:t})$:
\begin{equation}\label{eq:value_dominance}
V_{\beta}(x, y_{1:t};a(\pi)) \geq \max\left(r_{\bot}, V_{\beta}(x, y_{1:t};\pi)\right).
\end{equation}
Moreover, the inequality $V_{\beta}(x, y_{1:t};a(\pi)) > V_{\beta}(x, y_{1:t};\pi)$ is strict if and only if some state reachable from $(x, y_{1:t})$ under $\pi$ satisfies $V_{\beta}(x, y_{1:t};\pi)<r_\bot$.
\end{lemma}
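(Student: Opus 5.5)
The plan is a backward induction on the remaining horizon $T-t$. Throughout, I compare the two policies state by state using the one-step Bellman decomposition from the previous section.

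\textbf{Preliminary: terminal rewards agree.} First I check that $a(\pi)$ and $\pi$ assign the same reward to any trajectory ending in $\eos$ that $a(\pi)$ can produce. If such a trajectory is realized, then by \eqref{eq:pi_tilde} every prefix $(x,y_{1:k-1})$ along it has $b=0$. At those states $a(\pi)(\bot\mid\cdot)=0$, so $s(a(\pi))(\cdot\mid x,y_{1:k-1})=\pi(\cdot\mid x,y_{1:k-1})$. Hence $s(a(\pi))(y_{1:t}\mid x)=\pi(y_{1:t}\mid x)$, and the $\eos$ case of \eqref{eq:aug_token_level} coincides with that of \eqref{eq:token_level}. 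Consequently, whenever $a(\pi)$ does not abstain at a state, its Bellman backup is
\[
V_\beta(x,y_{1:t};a(\pi))=\EE_{y_{t+1}\sim\pi}\big[R_\beta(x,y_{1:t+1};\pi)+V_\beta(x,y_{1:t+1};a(\pi))\big].
\]
The only difference from the corresponding expression for $\pi$ is that $V_\beta(\cdot;a(\pi))$ replaces $V_\beta(\cdot;\pi)$ at the continuation states. I take the continuation value to be $0$ at terminal states, i.e.\ after $\eos$ or at length $T$.

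\textbf{Inductive step for the weak inequality.} Fix a reachable non-terminal state and split into two cases.
\begin{itemize}
\item If $V_\beta(x,y_{1:t};\pi)<r_\bot$, then $a(\pi)$ abstains deterministically. So $V_\beta(x,y_{1:t};a(\pi))=r_\bot=\max(r_\bot,V_\beta(x,y_{1:t};\pi))$, and the inequality over $V_\beta(\pi)$ is strict.
\item Otherwise $V_\beta(x,y_{1:t};\pi)\ge r_\bot$, so the maximum equals $V_\beta(x,y_{1:t};\pi)$. Applying the induction hypothesis to each non-terminal successor gives $V_\beta(\cdot;a(\pi))\ge V_\beta(\cdot;\pi)$ there. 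Terminal successors contribute identical rewards by the preliminary step. Plugging both into the displayed backup yields $V_\beta(x,y_{1:t};a(\pi))\ge V_\beta(x,y_{1:t};\pi)$.
\end{itemize}
The base case consists of states one step from the horizon. All their successors are terminal, so the same case split applies with no induction hypothesis needed.

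\textbf{Strictness characterization.} I read the condition in the statement as: some state $(x,y_{1:k})$, $k\ge t$, reachable from $(x,y_{1:t})$ with positive probability under $\pi$, satisfies $V_\beta(x,y_{1:k};\pi)<r_\bot$. The printed $y_{1:t}$ in that condition appears to be a typo for the reachable state.
\begin{itemize}
\item \emph{Sufficiency.} Follow a positive-probability path from $(x,y_{1:t})$ to the first such state. Every earlier state on the path has $V_\beta(\pi)\ge r_\bot$, so $a(\pi)$ follows $\pi$ along it. At the end of the path the first case gives a strict gain of $r_\bot-V_\beta(\pi)>0$. Propagating this backward through the backup, the gain enters with positive weight (the path probability), while all other terms are $\ge$ by the weak inequality. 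This gives strict inequality at $(x,y_{1:t})$.
\item \emph{Necessity.} If no reachable state has $V_\beta(\pi)<r_\bot$, then $a(\pi)$ never abstains on any state reachable from $(x,y_{1:t})$. It then induces the same trajectory distribution as $\pi$, with the same rewards by the preliminary step, so the two values are equal.
\end{itemize}

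\textbf{Expected main obstacle.} The main difficulty is bookkeeping rather than conceptual. The KL term is defined through the stripped policy $s(\cdot)$, so one must argue carefully that it matches $\pi$'s KL term on every trajectory $a(\pi)$ can complete. One must also handle reachability properly: positive-probability paths, and the fact that states reachable under $a(\pi)$ are exactly the $\pi$-reachable states not cut off by an earlier abstention. I would settle these in the preliminary step before running the induction.
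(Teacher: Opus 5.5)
Your proposal is correct and follows essentially the same route as the paper. Both use backward induction with the case split on whether $V_\beta(x,y_{1:t};\pi)<r_\bot$, both note that $s(a(\pi))=\pi$ along non-abstaining prefixes so the KL-regularized terminal rewards coincide, and both get strictness by propagating the gain $r_\bot-V_\beta(\cdot;\pi)>0$ at an abstaining state back through the Bellman backup with positive weight. One precision point: state the strictness condition over reachable \emph{non-terminal} states, because under your convention terminal states have value $0<r_\bot$ and would otherwise satisfy it trivially; your sufficiency argument already uses only non-terminal states. With that fix, your explicit path-based sufficiency and identical-trajectory-law necessity argument for the ``if and only if'' is cleaner than the paper's. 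You are also right that the $y_{1:t}$ in the printed condition is a typo for the reachable state.
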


Intuitively, $a(\pi)$ intervenes precisely when intervention helps, replacing low-value continuations with $r_\bot$ while leaving high-value trajectories untouched. The formal proof (Appendix~\ref{app:value_lemma}) proceeds by backward induction.

\subsection{Improvement Over the Base Policy}

We first show that $a(\pi)$ does not deteriorate the performance of \emph{any} base policy $\pi$, with strict improvement whenever abstention occurs with positive probability.

\begin{proposition}[Dominance Over No Abstention]\label{prop:dom}
    Let ${\pi}$ be any policy over $\V$, and let $\beta \geq 0$. Then:
    \begin{equation}
        J_\beta(a(\pi)) \geq J_\beta(\pi) \quad \text{and} \quad J_\beta(a(\pi)) \geq r_{\bot}.
    \end{equation}
    Moreover, $J_\beta(a(\pi)) > J_\beta(\pi)$ if and only if there exists a reachable, non-terminal state $(x, y_{1:t})$ with $V_{\beta}({x}, y_{1:t};\pi) < r_{\bot}$.
\end{proposition}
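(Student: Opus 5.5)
The plan is to reduce the statement to Lemma~\ref{lemma:value_dom} applied at the root state $(x, y_{1:0})$ for each prompt, and then average over $x \sim \rho$. By definition of the value function and of $J_\beta$, we have $J_\beta(\pi) = \EE_{x\sim\rho}[V_\beta(x, y_{1:0}; \pi)]$, and likewise for $a(\pi)$, since the sparse reward makes the total reward equal to the sum of token rewards from the empty prefix. The empty prefix is reachable and non-terminal for every $x$ in the support of $\rho$. Lemma~\ref{lemma:value_dom} therefore gives $V_\beta(x, y_{1:0}; a(\pi)) \geq \max(r_\bot, V_\beta(x, y_{1:0};\pi))$ pointwise in $x$. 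Taking expectations yields both $J_\beta(a(\pi)) \geq J_\beta(\pi)$ and $J_\beta(a(\pi)) \geq r_\bot$, the latter because $\EE[\max(r_\bot,\cdot)] \geq r_\bot$.

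For the strictness characterization, I would first use $J_\beta(a(\pi)) - J_\beta(\pi) = \EE_{x\sim\rho}[\Delta(x)]$ with $\Delta(x) := V_\beta(x,y_{1:0};a(\pi)) - V_\beta(x,y_{1:0};\pi) \geq 0$. The expectation is then strictly positive iff $\rho(\{x : \Delta(x) > 0\}) > 0$. Next I apply the strictness clause of Lemma~\ref{lemma:value_dom} at the root. It says $\Delta(x)>0$ iff some state reachable from $(x,y_{1:0})$ under $\pi$ has value below $r_\bot$. I read the lemma's condition as referring to the reachable state's own value, not the root's.

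So $J_\beta(a(\pi)) > J_\beta(\pi)$ iff, with positive $\rho$-probability over $x$, some $\pi$-reachable non-terminal prefix has $V_\beta(x, y_{1:t};\pi) < r_\bot$. This is exactly the statement's condition once ``reachable'' is interpreted as reachable with positive probability under $x\sim\rho$ and $\pi$. Non-terminality comes for free, because $a(\pi)$ only queries values at non-terminal states.

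The main obstacle is making this reachability bookkeeping precise. If $\rho$ has continuous support, a single prompt carries zero mass. In that case ``reachable'' should be defined as a set of (prompt, prefix) pairs of positive probability under the joint law $\rho \otimes \pi$. With a finite vocabulary and horizon $T$, there are finitely many prefixes per prompt, so positive probability of ever hitting a low-value state is equivalent to $\rho(\{x:\Delta(x)>0\})>0$. I would state this convention explicitly and, if $\rho$ is discrete, note that it reduces to existence of a single such state. The remaining steps are direct consequences of Lemma~\ref{lemma:value_dom}.
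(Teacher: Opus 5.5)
Your proposal is correct and follows the paper's proof: you write $J_\beta$ as the $\rho$-average of the root value $V_\beta(x,y_{1:0};\cdot)$, apply Lemma~\ref{lemma:value_dom} at $t=0$ to get both inequalities, and use the lemma's strictness clause at the root for the characterization. Your extra bookkeeping only makes explicit what the paper leaves implicit: you read the lemma's condition as the reachable state's own value, and you take ``reachable'' to mean positive mass under $\rho$ and $\pi$, so that $\EE_\rho[\Delta(x)]>0$ iff $\rho(\{x:\Delta(x)>0\})>0$.
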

\begin{proof}
Since $J_\beta(\pi)= \EE_{x\sim \rho}[ V_{\beta}(x, y_{1:0};\pi)]$ and $J_\beta(a(\pi))= \EE_{x\sim \rho}[ V_{\beta}(x, y_{1:0};a(\pi))]$, both inequalities follow directly from Lemma~\ref{lemma:value_dom}. The strict inequality condition follows from the second part of the lemma, also applied at $t=0$.
\end{proof}

In practice, reasoning models often embark on solution paths that become unrecoverable partway through generation. In these states $V_\beta < r_\bot$, triggering abstention and guaranteeing strict improvement.

\subsection{Comparison with Fixed-Position Abstention}
\label{sec:fixed_position}

We now compare dynamic value-thresholding to approaches that make the abstention decision at a single, predetermined token position.

\begin{definition}[Fixed-Position Abstention]
\label{def:fixed-pos}
For $k \in \{1, \dots, T\}$, let $f(\pi;k)$ denote the policy that applies value-based abstention only at timestep $k$ and follows $\pi$ otherwise:
\begin{equation*}
    f(\pi;k)(y_t|{x}, y_{1:t-1}) = \begin{cases}
    a(\pi)(y_t \mid x, y_{1:t-1}), & \text{if } t = k, \\
    \pi(y_t \mid x, y_{1:t-1}), & \text{otherwise.}
    \end{cases}
\end{equation*}
\end{definition}

This framework captures several existing approaches. Input processing corresponds to $f(\pi; 1)$, while mid-generation fixed-position approaches correspond to $f(\pi; k)$ for some $1 < k < T$. In Appendix~\ref{app:dom_single_step} we prove the following results.

\begin{proposition}[Dominance over Fixed-Position Abstention]
\label{prop:dom_single_step}
Let $k \in \{1, \dots, T\}$ and $\beta \geq 0$. Define the first abstention time under $a(\pi)$:
\begin{equation*}
\tau = \min\{t \geq 1 : V_\beta(x, y_{1:t-1}; \pi) < r_\bot\},
\end{equation*}
with $\tau = \infty$ if no such $t$ exists. Then:
\begin{align}\label{eq:dom_single_step_bound}
J_\beta(a(\pi)) \geq &J_\beta(f(\pi;k))
\\& - \EE\left[\II\{\tau < k\} \cdot \left(V_\beta(x, y_{1:k-1}; \pi) - r_\bot\right)^+\right], \notag
\end{align}
where the expectation is over $x \sim \rho$ and $y_{1:k-1} \sim \pi(\cdot \mid x)$.

Moreover, strict inequality holds in~\eqref{eq:dom_single_step_bound} if and only if there exists a reachable, non-terminal state $(x, y_{1:t})$ such that:
\begin{enumerate}
    \item $t \geq k$,
    \item $V_\beta(x, y_{1:t}; \pi) < r_\bot$, and
    \item $V_\beta(x, y_{1:s}; \pi) \geq r_\bot$ for all $s < k$.
\end{enumerate}
\end{proposition}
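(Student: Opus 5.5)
The plan is to condition on the prefix $y_{1:k-1}$. Up to that point both $a(\pi)$ and $f(\pi;k)$ either follow $\pi$ or have already abstained. I would write both objectives as expectations over $x\sim\rho$, $y_{1:k-1}\sim\pi(\cdot\mid x)$ and compare them pathwise, splitting on the events $\{\tau<k\}$ and $\{\tau\ge k\}$.

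First I would note that on non-abstained trajectories the stripped policies $s(a(\pi))$ and $s(f(\pi;k))$ coincide with $\pi$. Hence the KL-regularized terminal rewards are exactly those under $\pi$, and no extra regularization cost appears.

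\textbf{Value of $f(\pi;k)$.} Since $f(\pi;k)$ follows $\pi$ before step $k$, it reaches $(x,y_{1:k-1})$ with $\pi$-probabilities. There it abstains iff $V_\beta(x,y_{1:k-1};\pi)<r_\bot$, and afterwards it continues with $\pi$. This gives
\[
J_\beta(f(\pi;k))=\EE\big[\max\big(r_\bot,V_\beta(x,y_{1:k-1};\pi)\big)\big].
\]

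\textbf{Value of $a(\pi)$.} On $\{\tau<k\}$, $a(\pi)$ has abstained before step $k$ and collects exactly $r_\bot$. On $\{\tau\ge k\}$, it agreed with $\pi$ on $y_{1:k-1}$, so the prefix has the same probability as under $\pi$. From $(x,y_{1:k-1})$ it collects $V_\beta(x,y_{1:k-1};a(\pi))$, which by Lemma~\ref{lemma:value_dom} is at least $\max(r_\bot,V_\beta(x,y_{1:k-1};\pi))$. Subtracting the two expressions, the $\{\tau\ge k\}$ parts cancel up to a nonnegative remainder. The $\{\tau<k\}$ part contributes $r_\bot-\max(r_\bot,V_\beta)=-(V_\beta-r_\bot)^+$, which is exactly the penalty term in~\eqref{eq:dom_single_step_bound}.

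\textbf{Bookkeeping for early termination.} Trajectories can emit $\eos$ before step $k-1$. I would handle this by letting $y_{1:k-1}$ be padded past termination, with $V_\beta$ at a terminated prefix equal to the already-realized reward. The decomposition above then goes through verbatim. Making this convention consistent with the statement's expectation is the fiddly part but not conceptually hard.

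\textbf{Strictness.} The inequality is strict iff, with positive probability on $\{\tau\ge k\}$, we have $V_\beta(x,y_{1:k-1};a(\pi))>\max(r_\bot,V_\beta(x,y_{1:k-1};\pi))$. Since all reachable states have positive probability (finite horizon and vocabulary), this is the same as such a prefix existing. If $V_\beta(x,y_{1:k-1};\pi)<r_\bot$, then $\tau=k$ and $a(\pi)$ abstains there, so its value equals $r_\bot$, the max, and there is no gap. So a gap requires $V_\beta(x,y_{1:s};\pi)\ge r_\bot$ for all $s\le k-1$: the conditions defining $\tau\ge k$ cover $s\le k-2$, and this case covers $s=k-1$, together giving condition 3. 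In that case the max equals $V_\beta(\cdot;\pi)$.

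The second part of Lemma~\ref{lemma:value_dom} then says strictness holds iff some state $(x,y_{1:t})$ reachable from $(x,y_{1:k-1})$, i.e.\ with $t\ge k-1$ and condition 3 already forcing $t\ge k$, has $V_\beta(x,y_{1:t};\pi)<r_\bot$. These are conditions 1 and 2. The main obstacle I expect is this index matching between $\tau$, the $s<k$ condition, and the lemma's reachability condition, together with the edge cases $k=1$ and early $\eos$.
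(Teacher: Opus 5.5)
Your proposal is correct and follows the paper's proof essentially step for step: you condition on $(x,y_{1:k-1})$, use $\max(r_\bot,V_\beta(x,y_{1:k-1};\pi))$ for $f(\pi;k)$, split $a(\pi)$ on $\{\tau<k\}$ versus $\{\tau\ge k\}$, and invoke Lemma~\ref{lemma:value_dom} and its strictness clause; your separate handling of $\tau=k$ is more careful than the paper, which asserts that $\tau\ge k$ already forces $V_\beta(x,y_{1:k-1};\pi)\ge r_\bot$. The one overstatement is the claim that early termination goes through ``verbatim'' (the paper's proof ignores this case entirely): if $a(\pi)$ abstained before $k$ but $\pi$ emits $\eos$ before position $k$, then $f(\pi;k)$ cannot abstain and that path contributes $R_\beta(x,y;\pi)-r_\bot\le(R_\beta(x,y;\pi)-r_\bot)^+$, so your realized-reward padding keeps the bound valid but can leave strict slack (whenever $R_\beta(x,y;\pi)<r_\bot$ there) that conditions 1--3 do not capture.
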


The term $\EE[\II\{\tau < k\} \cdot \left(V_\beta(x, y_{1:k-1}; \pi) - r_\bot\right)^+]$ is the \emph{expected recovery surplus}: it is nonzero only when $a(\pi)$ abstains before position $k$ (i.e., $\tau < k$) and the trajectory subsequently recovers to have value above $r_\bot$ at position $k-1$. This captures the ``option value of waiting'' that $f(\pi;k)$ exploits by delaying its abstention decision. The following corollaries identify conditions under which this term vanishes.

\begin{corollary}
\label{cor:no_early_abstention}
If $\PP(\tau < k) = 0$, then $J_\beta(a(\pi)) \geq J_\beta(f(\pi;k))$, with strict inequality as characterized in Proposition~\ref{prop:dom_single_step}. 
\end{corollary}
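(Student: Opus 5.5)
This corollary follows from Proposition~\ref{prop:dom_single_step}; we take that proposition as given.

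\textbf{Step 1: the recovery surplus vanishes.} The only thing to check is that the expected recovery surplus in \eqref{eq:dom_single_step_bound} is zero when $\PP(\tau<k)=0$. Its integrand is $\II\{\tau<k\}\cdot(V_\beta(x,y_{1:k-1};\pi)-r_\bot)^+$. This is a product of an indicator and a nonnegative quantity. The second factor is finite, since the rewards are bounded for fixed $\beta$ and the horizon $T$ is finite.

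Because the indicator is zero almost surely under $x\sim\rho$, $y_{1:k-1}\sim\pi(\cdot\mid x)$, the integrand is zero almost surely. Hence its expectation is zero. Substituting this into \eqref{eq:dom_single_step_bound} gives $J_\beta(a(\pi))\ge J_\beta(f(\pi;k))$.

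\textbf{Step 2: the strictness clause.} The strictness statement of Proposition~\ref{prop:dom_single_step} characterizes when \eqref{eq:dom_single_step_bound} is strict. Since the subtracted term is exactly zero here, \eqref{eq:dom_single_step_bound} and the inequality $J_\beta(a(\pi))\ge J_\beta(f(\pi;k))$ are the same statement. So the proposition's characterization (conditions 1--3) transfers verbatim, which is what ``strict inequality as characterized in Proposition~\ref{prop:dom_single_step}'' asserts.

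One could also note that under $\PP(\tau<k)=0$, condition 3 holds automatically on almost every reachable prefix. This is because $\tau\ge k$ means $V_\beta(x,y_{1:s};\pi)\ge r_\bot$ for all $s\le k-1$. The characterization therefore simplifies to the existence of a reachable non-terminal state at depth $t\ge k$ with value below $r_\bot$. This remark is optional.

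\textbf{Main subtlety.} The delicate point is the meaning of ``almost surely'' on reachable states. $\PP(\tau<k)=0$ only controls prefixes of positive probability under $\pi$. Since the surplus is an expectation over that same distribution, unreachable prefixes contribute nothing, so no issue arises.
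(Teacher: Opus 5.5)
Your Steps 1 and 2 are correct and coincide with the paper's proof. The correction term in \eqref{eq:dom_single_step_bound} vanishes because its indicator is zero almost surely, and the strictness characterization of Proposition~\ref{prop:dom_single_step} then carries over unchanged.

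Your optional remark, however, is wrong. The paper's own proof contains the same slip (``condition (3) is automatically satisfied since $\tau\ge k$''). Since $\tau=\min\{t\ge1: V_\beta(x,y_{1:t-1};\pi)<r_\bot\}$, the event $\tau\ge k$ only gives $V_\beta(x,y_{1:s};\pi)\ge r_\bot$ for $s\le k-2$. On $\{\tau\ge k\}$, the case $\tau=k$ is exactly $V_\beta(x,y_{1:k-1};\pi)<r_\bot$, so condition~3 is not automatic.

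Your simplified criterion (``some reachable non-terminal state at depth $t\ge k$ with value below $r_\bot$'') therefore fails. Take $k=1$, $\beta=0$, $r\equiv 0$, $T\ge2$, and $\pi$ not emitting $\eos$ as its first token. Then $\PP(\tau<1)=0$, and both $a(\pi)$ and $f(\pi;1)$ abstain at the first step. Hence $J_\beta(a(\pi))=J_\beta(f(\pi;1))=r_\bot$, even though reachable non-terminal states at depth $t\ge1$ with value $0<r_\bot$ exist.

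The correct simplification under $\PP(\tau<k)=0$ is that condition~3 reduces to $V_\beta(x,y_{1:k-1};\pi)\ge r_\bot$ on the relevant prefix. Equivalently, strictness holds if and only if $a(\pi)$ abstains at some time $\tau>k$ with positive probability. Your Step 2 only claims the verbatim transfer of conditions 1--3, so the proof of the corollary itself is unaffected.
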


\begin{corollary}
\label{cor:absorbing}
Suppose low-value states cannot recover at position $k$: for all $t < k-1$, if $V_\beta(x, y_{1:t}; \pi) < r_\bot$, then $V_\beta(x, y_{1:k-1}; \pi) < r_\bot$ for every $y_{t+1:k-1}$ in the support of $\pi(\cdot \mid x, y_{1:t})$. Then $J_\beta(a(\pi)) \geq J_\beta(f(\pi;k))$, with strict inequality as characterized in Proposition~\ref{prop:dom_single_step}.
\end{corollary}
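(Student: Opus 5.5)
The statement to prove is Corollary~\ref{cor:absorbing}. The plan is to deduce it from Proposition~\ref{prop:dom_single_step}: I will show that the expected recovery surplus
\[
\EE\left[\II\{\tau < k\} \cdot \left(V_\beta(x, y_{1:k-1}; \pi) - r_\bot\right)^+\right]
\]
vanishes under the no-recovery hypothesis. Once that term is zero, the bound~\eqref{eq:dom_single_step_bound} reduces to $J_\beta(a(\pi)) \geq J_\beta(f(\pi;k))$. The strictness characterization then carries over verbatim from Proposition~\ref{prop:dom_single_step}, because the set of cases in which~\eqref{eq:dom_single_step_bound} is strict does not depend on the value of the subtracted term.

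The key step is a pointwise argument on the event $\{\tau < k\}$. Fix $x$ and a prefix $y_{1:k-1}$ sampled from $\pi(\cdot\mid x)$ with $\tau < k$. By definition of $\tau$, there is an index $t = \tau - 1 \le k-2$ such that $V_\beta(x, y_{1:t}; \pi) < r_\bot$.

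\begin{itemize}
\item If $t < k-1$ (which always holds here, since $t \le k-2$), the hypothesis applies. The continuation $y_{t+1:k-1}$ was drawn from $\pi(\cdot\mid x, y_{1:t})$, so it lies in its support, and therefore $V_\beta(x, y_{1:k-1};\pi) < r_\bot$.
\item The edge case is $t = k-1$. It cannot occur on $\{\tau<k\}$ because $\tau \le k-1$ forces $t \le k-2$. Even if one prefers to include it, the conclusion is immediate there.
\end{itemize}

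Hence $(V_\beta(x, y_{1:k-1}; \pi) - r_\bot)^+ = 0$ for every sample on $\{\tau < k\}$, and the integrand is zero almost surely.

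Two technicalities need care. First, the expectation in Proposition~\ref{prop:dom_single_step} is over $y_{1:k-1}\sim\pi$, but some trajectories may terminate with $\eos$ before reaching position $k-1$, so the length-$(k-1)$ prefix may not exist. I expect these to be handled by the convention used in that proposition's proof: terminal states carry no further value, or the indicator is effectively restricted to trajectories that reach $k-1$. Either way the integrand remains zero, and I would align with whichever convention Appendix~\ref{app:dom_single_step} adopts. Second, "reachable" and "support" must match. Since the samples come from $\pi$, every realized continuation lies in the support, so the hypothesis covers it.

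I expect the main (mild) obstacle to be exactly this bookkeeping of early termination and the indexing of $\tau$ versus $t$. The substantive content is simply that no-recovery kills the surplus term pointwise.
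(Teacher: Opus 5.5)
Your proposal is correct and is essentially the paper's own proof. On $\{\tau<k\}$ the state $(x,y_{1:\tau-1})$, with $\tau-1\le k-2$, has value below $r_\bot$, so the no-recovery hypothesis forces $V_\beta(x,y_{1:k-1};\pi)<r_\bot$; the recovery-surplus term in~\eqref{eq:dom_single_step_bound} then vanishes, and the strictness characterization of Proposition~\ref{prop:dom_single_step} carries over unchanged. Your worry about trajectories that terminate early (which the paper leaves implicit) is settled by the convention $V_\beta=0$ at terminal states together with $r_\bot>0$.
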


Corollary~\ref{cor:no_early_abstention} applies directly to input processing baselines ($k=1$), where $\tau < 1$ is impossible by definition. Corollary~\ref{cor:absorbing} applies when early errors constrain future paths, and no continuation can restore the value at position $k-1$ above $r_\bot$.

\subsection{Optimality Under An Optimal Base Policy}
\label{sec:optimality}

The preceding results hold for any $\beta \geq 0$. For the remainder of this section, we focus on $\beta = 0$. While LLMs are trained with $\beta > 0$ to prevent mode collapse, at inference time practitioners care about correctness, not proximity to a reference distribution. This setting also admits cleaner characterizations.

We now ask: when is dynamic value-thresholding globally optimal? Beyond $\beta = 0$, an optimal base policy $\pi^*_0 \in \operatorname*{argmax}_\pi J_0(\pi)$ is required. This reflects a fundamental limitation. Abstention can avoid bad outcomes but it cannot improve the quality of completed generations. Under these conditions, dynamic value-thresholding achieves the optimal objective over all abstention-augmented policies.

\begin{proposition}[Optimality of Dynamic Abstention]\label{prop:opt_no_reg}
    Let $\beta=0$, and let $\pi^*_0\in\operatorname*{argmax}_{\pi\in\Pi_\V} J_0(\pi)$ be an optimal policy over the original vocabulary $\mathcal{V}$.
    Then $a(\pi^*_0)$ is optimal among all policies over the augmented vocabulary $\V^\dagger$: 
    $$a(\pi^*_0) \in \operatorname*{argmax}_{\pi^\dagger \in \Pi_{\V^\dagger}} J_0(\pi^\dagger).$$
\end{proposition}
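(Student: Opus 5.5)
We will show that $V_0(x,y_{1:t};a(\pi^*_0))$ pointwise dominates the value of every augmented policy. The proof will use backward induction on $t$, in the spirit of the proof of Lemma~\ref{lemma:value_dom}.

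\textbf{Step 1: the optimal value without abstention.} With $\beta=0$, the reward no longer depends on the policy through a KL term. The problem over $\V$ is therefore a standard finite-horizon MDP with terminal reward $r$. Let $V^*(x,y_{1:t})$ denote the optimal value over $\Pi_\V$, defined by the Bellman recursion: the maximum over next tokens of $r$ at $\eos$ or $V^*$ of the extended prefix.

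Since $\pi^*_0$ maximizes $J_0$, we get $V_0(x,y_{1:t};\pi^*_0)=V^*(x,y_{1:t})$ at every state reachable under $\pi^*_0$ from a prompt with $\rho(x)>0$. Otherwise, replacing $\pi^*_0$ below a suboptimal reachable state by an optimal continuation would strictly increase $J_0$.

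\textbf{Step 2: the optimal value with abstention.} Define the augmented optimal value $W^*(x,y_{1:t})=\max\{r_\bot,\max_{y\in\V}(\cdots)\}$ via the same recursion, now including the action $\bot$ with reward $r_\bot$. By induction from $t=T$ downward, we show
\[
W^*=\max(r_\bot,V^*).
\]
The key fact is that $\max(r_\bot,\cdot)$ commutes with the inner max, since $\max_y\max(r_\bot,g_y)=\max(r_\bot,\max_y g_y)$. At $\eos$ the terminal reward is $r$, and abstaining one step earlier is already captured by the outer max. Every $\pi^\dagger\in\Pi_{\V^\dagger}$ satisfies $J_0(\pi^\dagger)\le\EE_x[W^*(x,y_{1:0})]$.

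\textbf{Step 3: $a(\pi^*_0)$ attains the bound.} This is the step we expect to be the main obstacle, because $a(\pi^*_0)$ may visit, and then abstain at, states where $\pi^*_0$ is not optimal. We need $V_0(x,y_{1:t};a(\pi^*_0))=\max(r_\bot,V^*(x,y_{1:t}))$ at every state reachable under $a(\pi^*_0)$.

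Reachability is not a problem: $a(\pi^*_0)$ only removes probability mass by abstaining, so every state it reaches is reachable under $\pi^*_0$. At such states Step 1 gives $V_0(\cdot;\pi^*_0)=V^*$.

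We then argue by backward induction over reachable states.
\begin{itemize}
\item If $V^*<r_\bot$, then $a(\pi^*_0)$ abstains and its value is $r_\bot$.
\item Otherwise it samples $y_t\sim\pi^*_0$. Since $\pi^*_0$ is optimal at this state, it is supported on maximizers of the Bellman recursion, so each successor has $r$ or $V^*$ equal to $V^*(x,y_{1:t-1})$. By the induction hypothesis, each successor's value under $a(\pi^*_0)$ is $\max(r_\bot,V^*)\ge V^*$. Averaging gives a value $\ge V^*(x,y_{1:t-1})=\max(r_\bot,V^*)$, and the upper bound from Step 2 gives equality.
\end{itemize}
Alternatively, the lower bound already follows from Lemma~\ref{lemma:value_dom}, which gives $V_0(\cdot;a(\pi^*_0))\ge\max(r_\bot,V_0(\cdot;\pi^*_0))=\max(r_\bot,V^*)$ at reachable states. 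Combined with the upper bound $W^*$, this yields equality directly. This is the cleaner route and the one we would use.

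\textbf{Conclusion.} Taking expectations over $x\sim\rho$ at $t=0$ gives
\[
J_0(a(\pi^*_0))=\EE_x[W^*]\ge J_0(\pi^\dagger)
\]
for every $\pi^\dagger\in\Pi_{\V^\dagger}$.

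The delicate points are:
\begin{itemize}
\item the $\beta=0$ assumption, which is what makes the value policy-independent and gives a clean Bellman recursion;
\item the treatment of prompts and states of zero probability, where only a.s.\ equality is needed;
\item the ability to exchange $\max(r_\bot,\cdot)$ with the inner maximization in Step 2.
\end{itemize}
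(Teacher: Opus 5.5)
Your proof is correct; the route you prefer for Step~3 differs from the paper's. The paper proves two sub-claims. The first is that $\pi^*_0$ attains $\max_{y'} r(x,y')$ at every reachable state, which is your Step~1. The second, by a separate backward induction, is that $V_0(\cdot;a(\pi^*_0))=\max(r_\bot,V_0(\cdot;\pi^*_0))$; it uses the fact that an optimal policy puts mass only on reward-maximizing continuations, and is essentially your first, inductive version of Step~3. The paper's upper bound needs no Bellman recursion. Every augmented trajectory earns either $r_\bot$ or some $r(x,y)$, so $J_0(\pi^\dagger)\le\EE_x[\max(r_\bot,\max_y r(x,y))]$ by a convex-combination argument. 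Because transitions are deterministic, $V^*(x,y_{1:0})=\max_y r(x,y)$, so this bound coincides with your $\EE_x[W^*]$. The max-commutation you flag as delicate in Step~2 can therefore be bypassed by bounding trajectory by trajectory.

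Your cleaner Step~3, which invokes Lemma~\ref{lemma:value_dom} at $t=0$, is a genuine economy. It makes the proposition a corollary of value dominance plus optimality of $\pi^*_0$ at the root. Step~1 is then needed only at $t=0$, where $V_0(x,y_{1:0};\pi^*_0)=V^*(x,y_{1:0})$ holds $\rho$-a.s.\ because $J_0(\pi^*_0)=\EE_x[V^*(x,y_{1:0})]$ and $V_0\le V^*$ pointwise. This route also avoids the support argument entirely, including the care needed when the sampled token is $\eos$.

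The paper's self-contained induction buys independence from the lemma and an explicit formula for the value of $a(\pi^*_0)$ at every reachable state. You recover that formula too, but only by sandwiching with $W^*$, which requires your heavier Step~2.
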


The proof is detailed in Appendix~\ref{app:opt_no_reg}.

\subsection{Characterizing the Magnitude of Improvement}

The magnitude of improvement depends on how gradually the value function
changes during generation. Improvement accrues at the moment $V_0$ first
crosses below $r_\bot$: the gain at that step is $r_\bot - V_0$, which cannot exceed the
single-step change in $V_0$, since the value one step earlier was still at
least $r_\bot$. We formalize this via a Lipschitz condition.

\begin{definition}[Value Function Lipschitzness]\label{def:lipschitz}
A policy $\pi$ has an $L$-Lipschitz value function if for all reachable
states $(x, y_{1:t})$ with $1 \leq t \leq T$:
$$|V_0(x, y_{1:t-1}; \pi) - V_0(x, y_{1:t}; \pi)| \leq L.$$
\end{definition}

For binary rewards $r(x,y) \in \{0,1\}$ the condition holds trivially with
$L=1$; tighter constants arise when a single token does not determine the
final outcome. This is plausible for chain-of-thought reasoning where models
self-correct via phrases like ``wait'' or ``let me reconsider''
\citep{wei2022chain}.

The Lipschitz constant bounds improvement after generation has
started. When abstention occurs immediately at $t=1$, there is no preceding
state and the gain is instead bounded by $r_\bot$. The following result
formalizes this insight.

\begin{proposition}[Linear Improvement Bound]\label{prop:lipschitz}
Let $\beta = 0$ and $r(x,y) \in \{0,1\}$. Suppose $\pi$ has an $L$-Lipschitz
value function. Let $\alpha$ be the abstention rate (Equation~\eqref{eq:abs_rate}),
and let $\alpha_1 = \PP_{x\sim\rho}(V_0(x, y_{1:0};\pi) < r_\bot)$ be the
probability of immediate abstention. Then:
$$J_0(a(\pi)) - J_0(\pi) \leq \alpha_1 \cdot r_\bot + (\alpha - \alpha_1) \cdot L.$$
\end{proposition}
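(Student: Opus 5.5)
We will compare the two policies trajectory by trajectory, coupling them on a common sample path. Draw $x\sim\rho$ and $y\sim\pi(\cdot\mid x)$. Let $\tau$ be the first abstention time of $a(\pi)$, defined as in Proposition~\ref{prop:dom_single_step}: $\tau=\min\{t\ge1: V_0(x,y_{1:t-1};\pi)<r_\bot\}$, with $\tau=\infty$ if no such $t$ exists. Before time $\tau$, the policy $a(\pi)$ follows $\pi$ exactly. Hence running $a(\pi)$ is equivalent to running $\pi$ and replacing the terminal reward $r(x,y)$ by $r_\bot$ on the event $\{\tau<\infty\}$, which has probability $\alpha$. With $\beta=0$ there is no KL term, so
\begin{equation*}
J_0(a(\pi))-J_0(\pi)=\EE\big[\II\{\tau<\infty\}\,(r_\bot-r(x,y))\big].
\end{equation*}

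Next we condition on the prefix at the stopping time. The event $\{\tau=t\}$ is determined by $(x,y_{1:t-1})$, and on that event the future of $y$ is distributed as $\pi(\cdot\mid x,y_{1:t-1})$. By the tower property, and since $\beta=0$ makes the value equal to the expected terminal reward,
\begin{equation*}
\EE\big[r(x,y)\,\II\{\tau=t\}\big]=\EE\big[V_0(x,y_{1:t-1};\pi)\,\II\{\tau=t\}\big].
\end{equation*}
This gives the expression
\begin{equation*}
J_0(a(\pi))-J_0(\pi)=\sum_{t}\EE\big[\II\{\tau=t\}\,\big(r_\bot-V_0(x,y_{1:t-1};\pi)\big)\big].
\end{equation*}
This is the step that needs care. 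It relies on $\tau$ being a stopping time and on the state being non-terminal at time $t-1$, which holds because reaching time $t$ means $\eos$ has not yet been emitted. We must also check that trajectories already terminated by $\eos$ are never counted. This holds because $\tau$ is only defined along the generation before termination.

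Finally we bound each term, splitting on whether abstention is immediate.

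\textbf{Case $\tau=1$.} The increment is $r_\bot-V_0(x,y_{1:0};\pi)\le r_\bot$, because $V_0\ge0$ when $r\in\{0,1\}$. The event $\{\tau=1\}$ has probability $\alpha_1$ by definition.

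\textbf{Case $2\le\tau<\infty$.} Minimality of $\tau$ gives $V_0(x,y_{1:\tau-2};\pi)\ge r_\bot$. Combined with the $L$-Lipschitz property of Definition~\ref{def:lipschitz} applied at the reachable state $(x,y_{1:\tau-1})$,
\begin{equation*}
r_\bot-V_0(x,y_{1:\tau-1};\pi)\le V_0(x,y_{1:\tau-2};\pi)-V_0(x,y_{1:\tau-1};\pi)\le L.
\end{equation*}
This event has probability $\PP(2\le\tau<\infty)=\alpha-\alpha_1$.

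Summing the two cases yields $J_0(a(\pi))-J_0(\pi)\le\alpha_1 r_\bot+(\alpha-\alpha_1)L$, as claimed.
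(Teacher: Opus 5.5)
Your proof is correct and follows the paper's route. You couple $a(\pi)$ with $\pi$ on a common trajectory, write the gain as $\EE[(r_\bot - r(x,y))\,\II\{\text{abstain}\}]$, replace $r(x,y)$ by $V_0(x,y_{1:\tau-1};\pi)$ via the tower property at the stopping time, and bound the cases $\tau=1$ and $\tau\ge 2$ by $r_\bot$ and $L$ respectively. The paper states your ``defined only before termination'' caveat explicitly by working on the event $\{\tau\le c\}$, where $c$ is the completion time; this is needed because terminal states have $V_0=0<r_\bot$, so the literal $\tau$ is always finite. Your direct inequality $r_\bot - V_0(x,y_{1:\tau-1};\pi)\le V_0(x,y_{1:\tau-2};\pi)-V_0(x,y_{1:\tau-1};\pi)$ is also cleaner than the paper's chain of inequalities at that step.
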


See Appendix~\ref{app:lipschitz_proof} for the proof.

\section{Value Approximation}
\label{sec:empirical}

We now address the estimation of $V_\beta(x, y_{1:t}; \pi)$ in practice. Specifically, we parameterize the value estimator as an MLP probe on hidden states, and prove that binary cross-entropy recovers the value function under a realizability assumption. 

Estimating the value function generally requires predicting expected cumulative future rewards, which vary by position. The sparse reward structure simplifies this considerably: since all reward is concentrated at the terminal token, the value function at any non-terminal state reduces to the expected terminal reward. This is the same target for every prefix of a trajectory, enabling value estimation through parallel supervised learning across prefixes of completed outputs. This has been previously utilized for related methods, including \citet{han2024value,mudgal2024controlled,snell2023offline,tang2023valearning,wang2016dueling}.

For the reasoning tasks we consider, rewards are binary correctness indicators $r(x,y) \in \{0,1\}$. Under this condition and $\beta = 0$, the value function at any non-terminal state $(x, y_{1:t})$ reduces to the conditional probability of correctness:
\begin{equation}\label{eq:value_as_prob}
V_0(x, y_{1:t}; \pi) = \mathbb{P}_\pi(r(x,y) = 1 \mid x, y_{1:t}).
\end{equation}
For terminal states ($\eos \in y_{1:t}$), $V_0(x, y_{1:t}; \pi) = 0$ since no future reward can be collected. A formal derivation from the general $\beta \geq 0$ case appears in Appendix~\ref{app:value_cond_exp_proof}. Equation~\eqref{eq:value_as_prob} is significant because it reduces value estimation to a binary classification problem. This is precisely the quantity that binary cross-entropy is designed to recover.

We parameterize the estimator as a family of predictors $\{\hat{V}_t\}_\theta$ sharing parameters $\theta$. For non-terminal states ($\eos \notin y_{1:t}$), we define $\hat{V}_t(x, y_{1:t}; \theta) = \mathrm{MLP}_\theta(h_t)$, where $h_t$ is the hidden state at position $t$. The causal structure of the transformer ensures $h_t$ depends only on $(x, y_{1:t})$, so the estimator has access to exactly the conditioning information in Equation~\eqref{eq:value_as_prob}. For terminal states ($\eos \in y_{1:t}$), we set $\hat{V}_t(x, y_{1:t}; \theta) = 0$ by definition, matching the structure of the true value function.

Let $\mathcal{D} = \{(x_i, y_i)\}_{i=1}^N$ be a dataset of completed trajectories where $x_i \sim \rho$ and $y_i \sim \pi(\cdot | x_i)$. Since the reward is the same binary label $r(x,y)$ for every prefix of a trajectory, we train all positions in parallel using binary cross-entropy:
\begin{align}\label{eq:probe_loss}
\mathcal{L}(\theta) = \mathbb{E}_{(x,y) \sim \mathcal{D}}\bigg[\sum_{t=0}^{c-1} \ell\left(\hat{V}_t(x, y_{1:t}; \theta), r(x, y)\right)\bigg]
\end{align}
where $\ell(\hat{p}, r) = -r \log \hat{p} - (1-r) \log(1-\hat{p})$ and the sum runs over non-terminal positions only.

Under a standard realizability assumption, $V_t\in \{\hat{V}_t\}_\theta$, minimizing this objective recovers the true value function. A formal statement and proof appear in Proposition~\ref{prop:bce_optimal} (Appendix~\ref{app:bce_proof}). For continuous rewards or $\beta > 0$, mean squared error provides an analogous guarantee (Proposition~\ref{prop:mse_optimal}, Appendix~\ref{app:mse_proof}).

This approach requires only hidden state extraction and probe training, and adds negligible inference overhead. 

Pseudocode for training the value estimator and for inference is given in Algorithms~\ref{alg:value_training} and~\ref{alg:dynamic_abstention_inference}, respectively.

\section{Experiments}
\label{sec:exp}

We evaluate dynamic value-thresholding on two tasks: chain-of-thought mathematical
reasoning (Sections~\ref{sec:exp_selective_acc}--\ref{sec:exp_robustness})
and toxicity avoidance (Section~\ref{sec:exp_toxicity}). The mathematical
reasoning experiments form the bulk of the evaluation, and the remainder of
this overview describes their setup; details for the toxicity experiment
are deferred to Section~\ref{sec:exp_toxicity}.

Within mathematical reasoning, Section~\ref{sec:exp_selective_acc} compares
methods on selective accuracy. Section~\ref{sec:exp_reward}
validates the dominance and improvement guarantees from
Section~\ref{sec:theory}. Section~\ref{sec:exp_transfer} tests whether the
value estimator generalizes across datasets without retraining.
Section~\ref{sec:exp_robustness} examines the sensitivity of our method to
imperfect value estimation. 

\paragraph{Models and Datasets.} We evaluate on two LLMs:
Qwen2.5-Math-7B-Instruct \citep{qwen2025qwen25} and Phi-3-mini-4k-Instruct
\citep{abdin2024phi3}, across two chain-of-thought mathematical reasoning
benchmarks: GSM8K \citep{cobbe2021training} and OlympiadBench
\citep{sun2025challenging}. GSM8K contains grade-school math problems where
both models achieve high baseline accuracy ($87\%$ for Phi-3 and $88\%$ for
Qwen), while OlympiadBench contains competition-level problems where
baseline accuracy is substantially lower ($16\%$ for Phi-3 and $43\%$ for
Qwen).

\paragraph{Value Function Estimation.} Following Section~\ref{sec:empirical}, we train an MLP probe on hidden states to estimate $V_0(x, y_{1:t}; \pi)$. The probe is a two-layer MLP trained on the final layer hidden states, using binary cross-entropy loss as described in Equation~\eqref{eq:probe_loss}. To train the dynamic abstention method, for each model-dataset pair, we generate full trajectories with the base model and extract hidden states from the final transformer layer.

\paragraph{Baselines.} We consider three baseline estimation methods for the
abstention decision, each evaluated at two token positions: $t=0$
(input-processing) and $t=k$ (fixed-position mid-generation), corresponding
to $f(\pi; 1)$ and $f(\pi; k)$ in Definition~\ref{def:fixed-pos}.
\begin{itemize}
    \item \textbf{Constant Step Probe}: A two-layer MLP trained on hidden
    states to predict answer correctness via binary cross-entropy, evaluated
    at a fixed step $t$ \citep{kadavath2022language,afzal2025knowing}.
    \item \textbf{Self-Assessment}: The model is prompted with ``Can you
    correctly answer this question?'' and the logits for the
    \texttt{yes}/\texttt{no} tokens are used for classification
    \citep{zhang2024selfalignment}.
    \item \textbf{LoRA Abstention}: The base LLM is finetuned via LoRA with
    explicit \texttt{abstain}/\texttt{do not abstain} tokens whose logits are
    used for classification \citep{zhang2024rtuning}.
\end{itemize}
We also report the base policy $\pi$ without abstention, denoted
\textbf{no abstention}. Our dynamic method uses the same Constant Step Probe
architecture but evaluates it at every token position, abstaining when the
estimate falls below $r_\bot$.

We set $k = 20$ for GSM8K and $k = 100$ for OlympiadBench, reflecting the
longer reasoning traces on harder problems. Both choices are arbitrary: the
optimal position varies across datasets, models, and desired abstention
rates, and there is no principled data-independent criterion for selecting
$k$ (Appendix~\ref{app:constant_step}). This arbitrariness is itself an
argument for the dynamic approach, which adapts its stopping point to each
trace.

Further implementation and experimental setup details are found in
Appendix~\ref{app:implementation}. All reported results are averaged over $5$
random seeds (seeds $42$--$46$), where each seed corresponds to an independent
retraining of the Constant Step Probe; shaded regions in figures show $\pm1$
standard deviation across seeds.

\subsection{Selective Accuracy}
\label{sec:exp_selective_acc}

We first evaluate methods on selective accuracy: the mean answer correctness among non-abstained samples. This metric isolates each method's ability to \emph{rank} samples by correctness probability.

\paragraph{Results.} Figure~\ref{fig:selective_acc} shows selective accuracy
versus abstention rate across all model--dataset combinations, comparing
dynamic abstention against both input-processing baselines ($t=0$) and
fixed-position mid-generation baselines ($t=k$). Dynamic value-thresholding achieves
the highest selective accuracy across all settings, abstention rates, and
baseline categories.

Among input-processing baselines, the Constant Step Probe is the
strongest competitor, while self-assessment and LoRA abstention show limited
discrimination ability, often performing near or below the no-abstention
baseline. Observing $k$ tokens of partial generation does not substantially
help the self-assessment and LoRA methods: the fixed-position variants plateau near the
no-abstention accuracy regardless of $\alpha$.

The advantage of dynamic value-thresholding over all baselines is most pronounced on
harder problems and at higher abstention rates. On GSM8K, where baseline
accuracy is already high, dynamic value-thresholding achieves selective accuracy of
$0.92$ at $\alpha = 0.1$, rising to $0.99$ at $\alpha = 0.9$ for both models.
On OlympiadBench, where the task is substantially harder, the gains are
larger: on Phi-3 at $\alpha = 0.9$, dynamic abstention achieves $0.64$
selective accuracy, compared to $0.34$ for the best input-processing baseline
and $0.33$ for the best fixed-position baseline --- roughly double the
performance of either family. On Qwen at $\alpha = 0.9$, dynamic abstention
reaches $0.91$, compared to $0.69$ for the best input-processing baseline
and $0.75$ for the best fixed-position baseline. The gap persists across all
settings: at every abstention rate on every model--dataset combination,
dynamic abstention strictly dominates the best baseline pointwise
(Figure~\ref{fig:selective_acc}).

Appendix~\ref{app:precision} reports the complementary
precision metric $\PP(\text{incorrect} \mid \text{abstained})$, confirming
that abstentions are selectively targeted.

Figure~\ref{fig:selective_acc} also plots the relative token savings of the
dynamic method compared to the Constant Step Probe at $t=0$. Dynamic
mid-generation abstention achieves token savings close to the
input-processing baselines but generally saves fewer tokens at low-to-mid
abstention rates, with the gap narrowing at high abstention rates where
dynamic abstention approaches parity with input-processing methods.
Appendix~\ref{app:timing} confirms that abstention occurs in the first half
of generation on average, even at low abstention rates, with progressively earlier termination as
$\alpha$ increases.

\begin{figure}[t]
    \vspace{-0.6em}
    \centering
    \includegraphics[width=\linewidth]{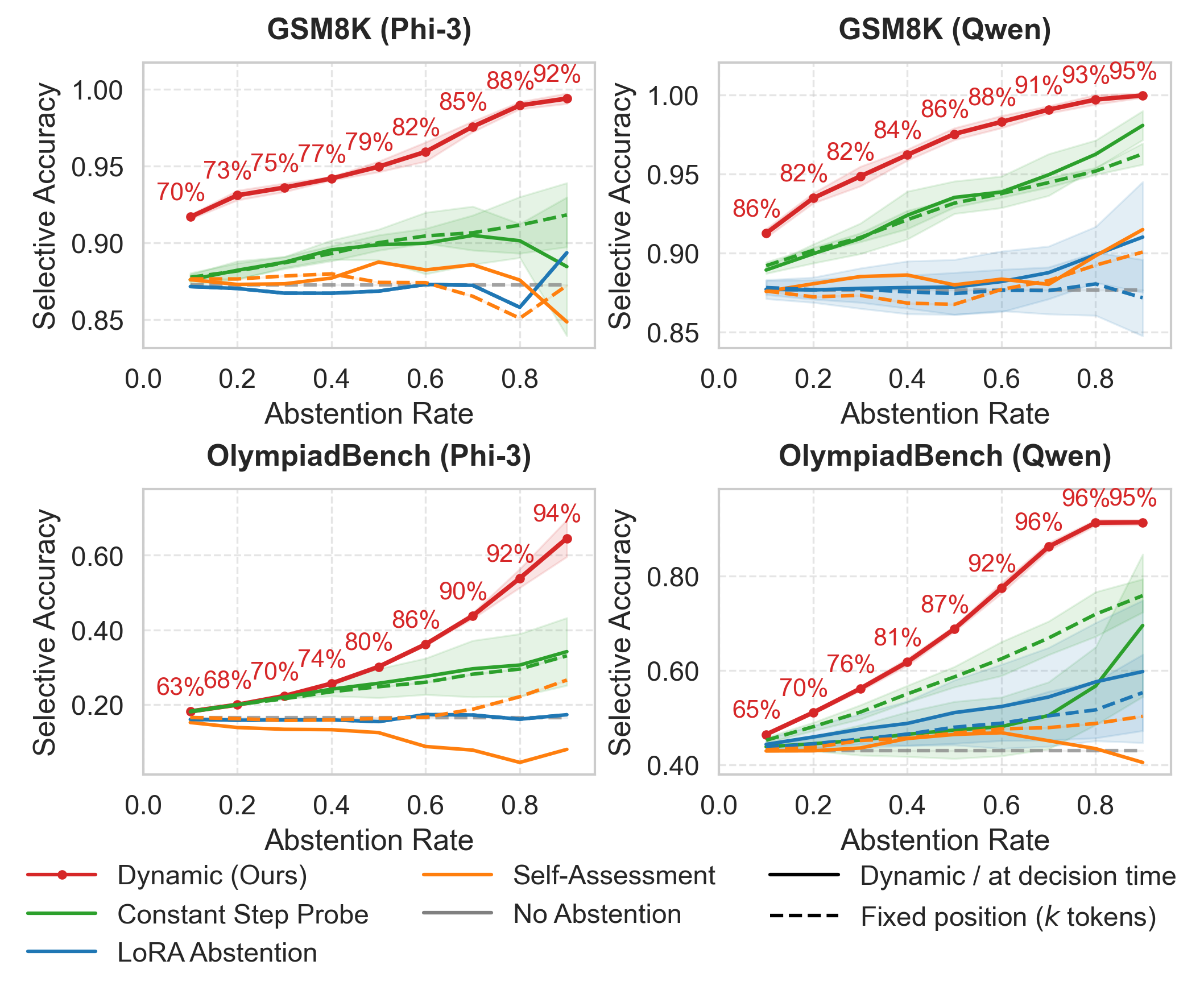}
    \caption{Selective accuracy versus abstention rate across all baselines: input-processing ($t=0$), fixed-position mid-generation ($k=20$ for GSM8K and $k=100$ for Olympiad), and dynamic (ours). Token savings for dynamic abstention as a percent of constant step probing savings at $t=0$ are labeled for each abstention rate.}
    \vspace{-1.2em}
    \label{fig:selective_acc}
\end{figure}

\subsection{Reward Objective}
\label{sec:exp_reward}

Next, we evaluate methods on the reward objective under no KL regularization, $J_0$. Our theoretical results predict that, given an oracle value function, dynamic value-thresholding dominates both the base policy without abstention (Proposition~\ref{prop:dom}) and input-only methods (Proposition~\ref{prop:dom_single_step}). A stronger result, that dynamic value-thresholding is optimal among all abstention-enabled policies (Proposition~\ref{prop:opt_no_reg}), additionally requires that the base policy is itself optimal under $\beta=0$. In practice, neither condition holds: we estimate the value function from data, and the base LLMs were trained with KL regularization and approximate optimization. Still, the results below provide empirical support for these predictions.

A subtlety arises in this evaluation. Our framework specifies $r_\bot$ as a threshold on the true value function, $V_0(x,y_{1:t},\pi)$, but in practice we threshold our estimate $\hat{V}_t$ at some threshold $T_\alpha$. Because the estimate may be miscalibrated, $T_\alpha$ does not directly equal the effective $r_\bot$. The effective $r_\bot$ at threshold $T_\alpha$ is the true probability of correctness at the point where abstention is triggered, which may differ from $T_\alpha$. To address this, we use isotonic regression to estimate the transformation from estimated values to true correctness probabilities. We apply this transformation on $T_\alpha$ to get a calibrated estimate of the effective $r_\bot$. We detail this methodology and the calibration analysis motivating it in Appendix~\ref{app:calibration}.

\paragraph{Results.} Figure~\ref{fig:reward_recalibrated} plots estimated reward $\hat{J}$ against estimated $\hat{r}_\bot$ across abstention rates $\alpha \in \{0.02, 0.04, ..., 0.98\}$, comparing dynamic value-thresholding against all input-processing and fixed-position baselines. Consistent with Propositions~\ref{prop:dom}, \ref{prop:dom_single_step}, and \ref{prop:opt_no_reg} the dynamic value-thresholding curve lies above the diagonal ($\hat{J} = \hat{r}_\bot$), and all baselines across all settings. In Appendix~\ref{app:improvement}, we show that improvement over no-abstention grows with the abstention rate, supporting the intuition behind Proposition~\ref{prop:lipschitz}.

\begin{figure}[t]
    \vspace{-0.6em} 
    \centering
    \includegraphics[width=\linewidth]{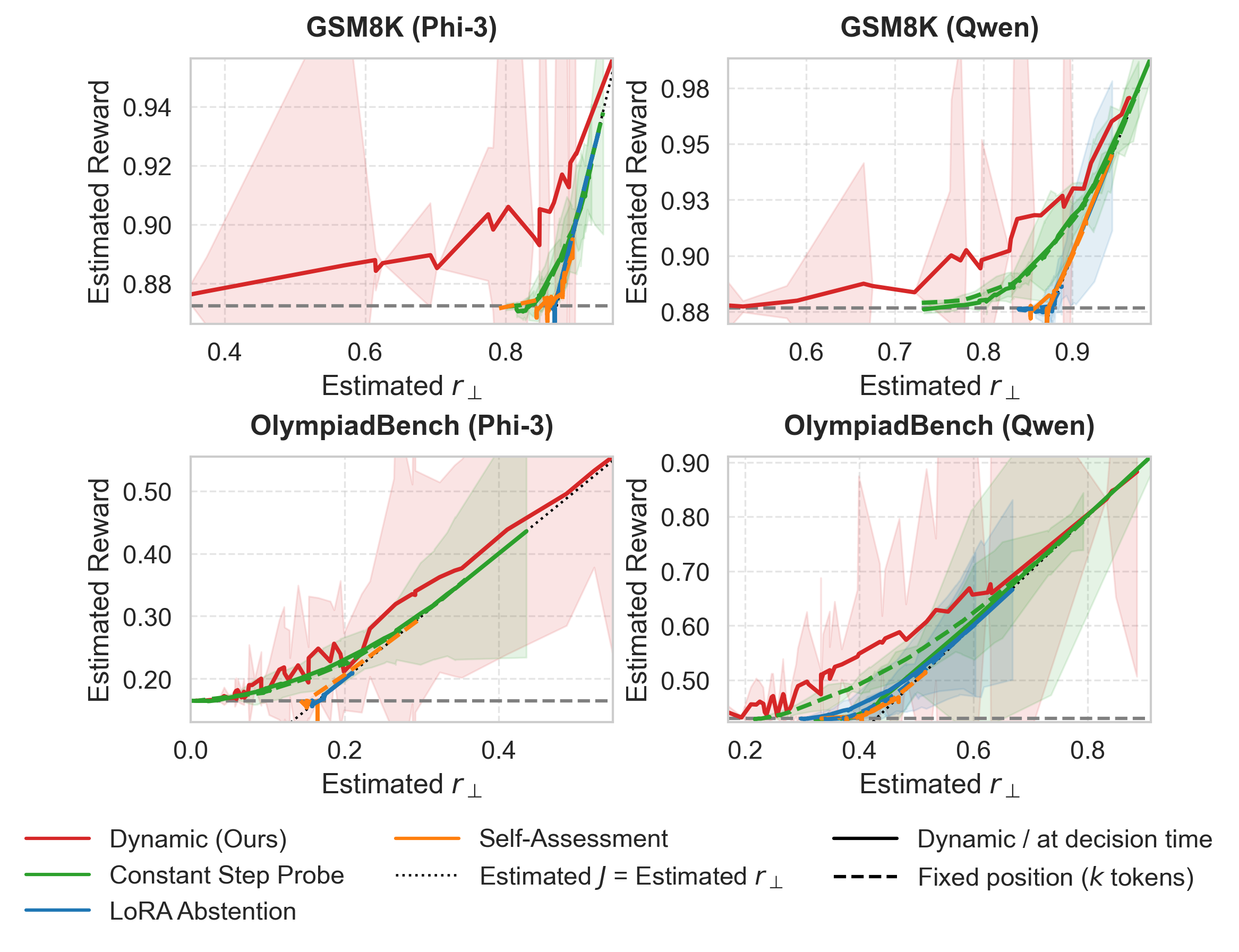}
    \caption{Estimated reward $\hat{J}$ versus calibrated $\hat{r}_\bot$ across all baselines. Proposition~\ref{prop:dom} predicts the curve lies above the diagonal (black dotted line) and no abstention (gray dashed line); Corollary~\ref{cor:no_early_abstention} predicts dynamic (red) dominates all baselines at matched $\hat{r}_\bot$. The x-axis does not span $[0,1]$ because $\hat{r}_\bot$ is determined by empirical accuracies at abstention boundaries; see Appendix~\ref{app:recalibration} for details.}
    \vspace{-1.2em}
    \label{fig:reward_recalibrated}
\end{figure}

\subsection{Cross-Dataset Transfer}
\label{sec:exp_transfer}

We evaluate whether the MLP probe generalizes across datasets without retraining. For each model, we train the probe on one dataset and evaluate it zero-shot on the other, using the in-domain threshold calibrated on a held-out split of the training dataset.

\paragraph{Results.} Figure~\ref{fig:transfer} shows selective accuracy for the transferred probe alongside the in-domain methods. Transfer generalizes well in both directions: the probe trained on GSM8K and evaluated on OlympiadBench closely matches in-domain dynamic performance across all abstention rates. The reverse direction --- training on OlympiadBench and evaluating on GSM8K --- shows a slightly larger gap. In both cases, the transferred probe consistently outperforms all baselines.

This suggests that rather than learning dataset-specific surface cues, it appears to capture general properties of the hidden state that are predictive of eventual correctness across problem types.

\begin{figure}[t]
    \vspace{-0.6em}
    \centering
    \includegraphics[width=\linewidth]{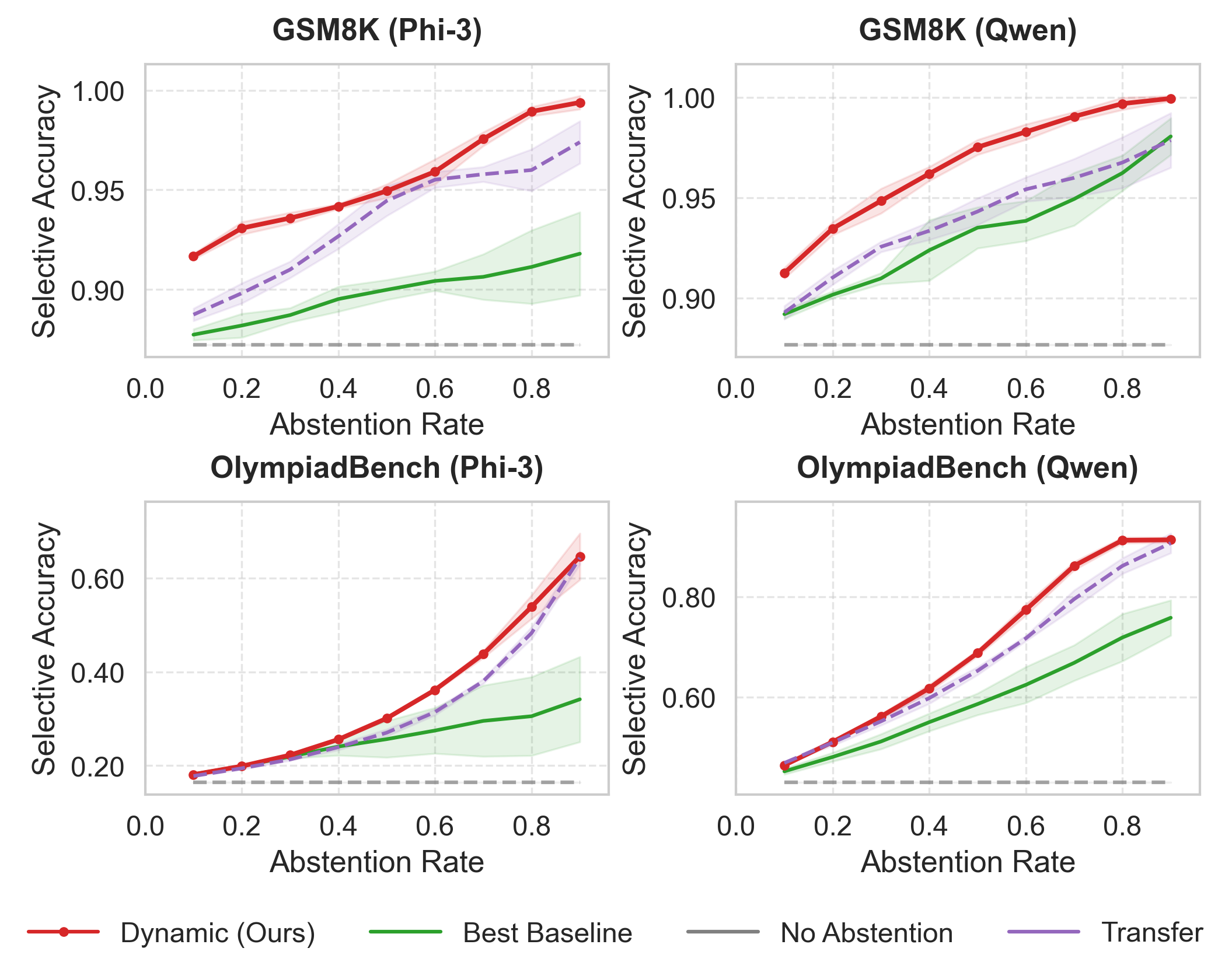}
    \caption{Cross-dataset transfer: selective accuracy when the MLP probe is trained on one dataset and evaluated zero-shot on the other (purple dashed). The probe generalizes well, consistently outperforming all baselines in all settings. The best baseline (green) is chosen pointwise for each abstention rate, seed, and setting.}
    \vspace{-1.2em}
    \label{fig:transfer}
\end{figure}

\subsection{Robustness of the Value Estimator}
\label{sec:exp_robustness}

The guarantees in Section~\ref{sec:theory} assume an oracle value function;
in practice $\hat{V}$ differs from $V_0$. Three empirical checks show the
method tolerates this.

\paragraph{Threshold stability.} The abstention threshold $T_\alpha$ is
calibrated on a held-out set. The achieved abstention rate tracks the target
with mean absolute error below $1.2$ percentage points across all settings
(Appendix~\ref{app:cross_split_calibration}).

\paragraph{Ranking suffices.} Because $T_\alpha$ is an $\alpha$-quantile,
selective accuracy depends only on the ranking induced by $\hat{V}$. Monotone
transformations of $\hat{V}$ leave selective accuracy exactly unchanged
(Appendix~\ref{app:robustness}); the method is immune to any miscalibration
that preserves ordering.

\paragraph{Graceful degradation.} Under additive Gaussian noise on $\hat{V}$, performance
degrades smoothly and gains over no abstention are retained at all noise
levels tested, with OlympiadBench more sensitive than GSM8K
(Appendix~\ref{app:robustness}).

\subsection{Beyond Mathematical Reasoning: Toxicity Avoidance}
\label{sec:exp_toxicity}

The framework applies to any setting with a bounded reward. We test
generalization to toxicity avoidance on RealToxicityPrompts
\citep{gehman2020realtoxicityprompts} with Qwen2.5-Math-7B-Instruct. Figure~\ref{fig:toxicity} reports
non-toxic response rate among non-abstained samples. Dynamic value-thresholding
reaches a perfect $1.00$ rate at $\alpha = 0.3$ and dominates all baselines
pointwise; the strongest baseline (self-assessment) only matches this from
$\alpha = 0.6$. Full setup in Appendix~\ref{app:toxicity}.

\begin{figure}[t]
    \vspace{-0.6em}
    \centering
    \includegraphics[width=\linewidth]{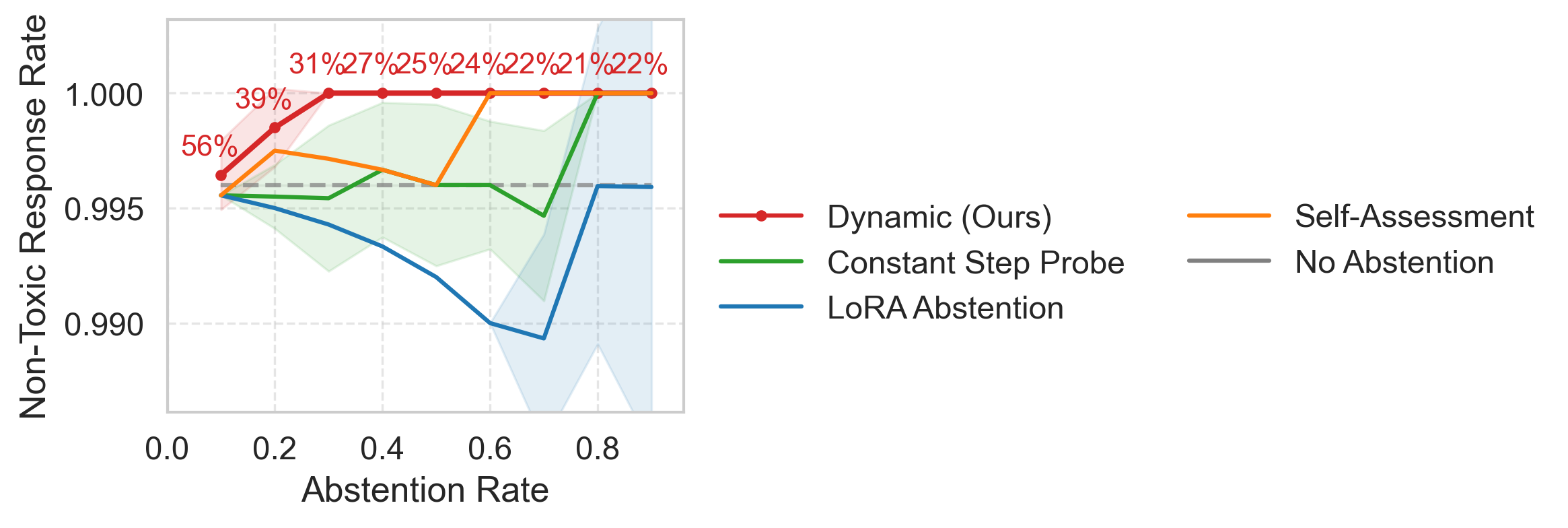}
    \caption{Non-toxic response rate among non-abstained samples versus
    abstention rate on RealToxicityPrompts (Qwen2.5-Math-7B-Instruct). Red labels indicate
    token savings of the dynamic method relative to input-processing
    baselines.}
    \vspace{-1.2em}
    \label{fig:toxicity}
\end{figure}

\section{Discussion}
\label{sec:discussion}

Dynamic value-thresholding adds minimal overhead at inference: a single MLP forward pass per token, plus a one-time cost of generating trajectories and fitting the probe. This upfront cost is quickly amortized by the compute saved from early termination, which occurs in the first half of generation on average, even at low abstention rates (Section~\ref{sec:exp_selective_acc}). Deployment is equally straightforward: the threshold $T_\alpha$ is the $\alpha$-quantile of estimated values on a small held-out set, and transfers to new data with mean absolute error below 1.2 percentage points (Section~\ref{sec:exp_robustness}).

A natural concern is whether the method's reliance on value estimation makes it fragile. When the threshold is set to target a desired abstention rate $\alpha$, selective accuracy is exactly invariant to monotone reparametrizations of $\hat{V}$ and degrades gracefully under ranking-corrupting noise (Section~\ref{sec:exp_robustness}). When instead $r_\bot$ is specified directly as the utility of the fallback mechanism, calibration becomes relevant; the miscalibration analysis in Appendix~\ref{app:calibration} suggests room for improvement in this regime.

The method is limited when hidden states do not encode sufficient
information about eventual correctness, or when optimization fails to
extract this signal. Our results suggest these issues are not severe in
practice.

We highlight two directions for future work. First, our experiments use binary correctness or non-toxicity, but the
theoretical results (Propositions~\ref{prop:dom}--\ref{prop:lipschitz}) hold
for any bounded reward. Substituting continuous rewards from a learned
preference model and training the probe via MSE
(Proposition~\ref{prop:mse_optimal}) would extend the method to tasks
graded on a continuous scale. Second, our framework assumes a fixed fallback utility $r_\bot$, but in deployment, a question the model has nearly solved is worth continuing even at moderate confidence. Extending the framework to a state-dependent abstention reward $r_\bot(x, y_{1:t})$ would capture this.

\section*{Acknowledgments}
 
The authors thank Amy Mann, Yihong Chen, Xander Davies, and Sergio Calvo Ordo\~{n}ez for proofreading and for suggestions on ordering, notation, and section structure. This work was supported by the EPSRC through the StatML CDT and by the Rhodes Trust. H.~D. was supported by the Horizon Europe grant 101213369 DVPS. Part of this work was conducted during an internship at Amazon.
P.~R. was funded by UK Research and Innovation (UKRI) under the UK government’s Horizon Europe funding guarantee [grant number EP/Y028333/1]. Research reported in this publication was supported by an Amazon Research Award, Fall 2024.

\section*{Impact Statement}
Dynamic value-thresholding reduces wasted computation by terminating unpromising reasoning traces early, which at scale could lower the energy consumption and environmental footprint of LLM inference. Beyond efficiency, principled abstention may improve reliability in high-stakes domains, such as medical reasoning, legal analysis, or financial decision-making, where confidently incorrect outputs cause disproportionate harm. By enabling models to defer appropriately to human experts or more capable systems, this work helps users develop calibrated trust in when to rely on model outputs. However, well-functioning abstention mechanisms could induce over-reliance: users may assume models will always abstain when uncertain. Finally, if abstention rates vary systematically across problem types or user populations, this could create disparities in service quality.

\bibliography{newbib}
\bibliographystyle{icml2026}

\newpage
\appendix
\onecolumn

\section{Additional Related Works}
\label{app:related}

Our work intersects with several research areas: abstention mechanisms for LLMs, selective classification theory, inference-time compute allocation, adaptive computation methods, and reward modeling for reasoning. We survey each area and position our contributions.

\subsection{Abstention in LLMs}
\label{app:abstention}

\citet{wen2024know} provide a comprehensive survey of abstention in LLMs, organizing the literature along two axes. The first axis captures \emph{why} abstention occurs, distinguishing three perspectives: the \emph{query perspective} (is the question answerable at all?), the \emph{model knowledge perspective} (does this model have the capability to answer correctly?), and the \emph{human values perspective} (should the model answer, given safety and ethical considerations?). The second axis captures \emph{when} the abstention decision is made during inference: \emph{input-processing} methods decide before generation based on query properties, \emph{in-processing} methods decide during generation based on model internals, and \emph{output-processing} methods decide after complete generation.

The query perspective addresses whether a question is inherently answerable---for instance, ambiguous questions like ``When did the war end?'' \citep{min2020ambigqa}, questions requiring unavailable context \citep{rajpurkar2018know}, or questions beyond any knowledge such as future predictions \citep{amayuelas2023knowledge}. The human values perspective addresses whether the model \emph{should} answer, covering safety concerns like harmful queries \citep{wang2024donotanswer}, privacy violations, and toxic content generation \citep{gehman2020realtoxicityprompts}.

Our work focuses on the \emph{model knowledge perspective} for reasoning tasks with verifiable answers. Here, questions are well-posed and safe to answer; the challenge is predicting whether \emph{this particular model} will answer correctly. This perspective encompasses probing internal representations \citep{azaria2023internal}, uncertainty estimation via token likelihoods or semantic entropy \citep{kuhn2023semantic}, and calibration-based methods \citep{jiang2021can}.

We note a terminological distinction from Wen et al.'s framework. Their ``input-processing'' refers to methods assessing query answerability or safety before generation. In contrast, our use of ``input processing'' in the main text refers to predicting \emph{model success} from the prompt alone---a model-knowledge question evaluated at $t=0$. Our baselines (constant step probing, self-assessment) fall into this category: they attempt to predict whether the model will succeed, not whether the question is inherently answerable.

Within the model knowledge perspective, existing in-processing methods estimate uncertainty via internal state probing or token likelihoods, then threshold these estimates to trigger abstention \citep{azaria2023internal, kadavath2022language}. Recent work by \citet{zhang2025reasoning} implements early exit by evaluating intermediate answer correctness at reasoning chunk boundaries using probing classifiers. Our approach differs in two respects. First, we threshold on the \emph{value function}---the probability of \emph{eventual} success conditioned on the current state---rather than intermediate correctness. A correct intermediate answer may later be revised incorrectly, and vice versa; what matters for abstention is whether the final answer will be correct. Second, we provide theoretical guarantees: we prove that value-thresholding dominates natural baselines (Propositions~\ref{prop:dom}--\ref{prop:dom_single_step}), characterize when it is optimal (Proposition~\ref{prop:opt_no_reg}), and bound the magnitude of improvement (Proposition~\ref{prop:lipschitz}). The RL formulation gives semantic meaning to the threshold $r_\bot$ as the utility of the fallback mechanism, providing practitioners direct control over the accuracy-efficiency tradeoff.

Abstention has also been studied in multi-turn agentic settings: \citet{bonagiri2025check} show that prompting tool-augmented agents to quit when uncertain substantially improves safety with minimal helpfulness loss, complementing our focus on within-generation abstention for reasoning tasks.

\subsection{Selective Classification}

The theoretical foundations for abstention trace back to selective classification, also known as classification with a reject option. \citet{el-yaniv2010foundations} established the foundational framework by characterizing the \emph{risk-coverage trade-off}: how much coverage must be sacrificed to achieve a target accuracy. They showed that for noise-free settings with finite hypothesis classes, a consistent selective strategy can achieve perfect learning with coverage approaching 1 as sample size grows.

\citet{geifman2017selective} extended these ideas to deep neural networks, proposing to use the softmax response (maximum softmax probability) as a confidence score for selective prediction. Given a trained neural network, they construct a selective classifier that rejects instances to guarantee a desired risk level with high probability. Their method demonstrated that selective classification can enable DNNs to operate in mission-critical applications with formal accuracy guarantees.

Subsequent work has explored alternatives to softmax-based confidence, including SelectiveNet \citep{geifman2019selectivenet}, which trains a model to jointly optimize classification and rejection, ensemble-based approaches \citep{lakshminarayanan2017simple}, and conformal methods that utilize a held-out calibration set to achieve the desired selective accuracy for any tunable black-box model \citep{angelopoulos2025learn}. Our work differs from this literature by operating in the sequential generation setting, where the decision to abstain can be made at any token position rather than once per input.

\subsection{Inference-Time Compute Allocation}

A key motivation for dynamic value-thresholding is efficient allocation of inference-time compute. Recent work has studied how to allocate computation adaptively across problems of varying difficulty. Best-of-N sampling generates multiple completions and selects the best according to a reward model. \citet{snell2025scaling} analyze optimal compute allocation between search breadth (number of samples) and depth (generation length). \citet{li2024escape} propose early-stopping self-consistency, which reduces sampling costs by terminating when answer agreement is reached, cutting samples by over 30\% on mathematical reasoning benchmarks. Speculative decoding \citep{leviathan2023fast} uses small models to draft tokens verified by larger models.

\citet{manvi2024adaptive} introduce a generative self-evaluation scheme where the LLM predicts mid-generation the probability that restarting would yield a better response. This prediction requires only generating a single token and enables adaptive sample pruning without an external reward model---they show 50--75\% of samples can be pruned early with minimal performance loss. Their work is closely related to ours: both leverage mid-generation signals to allocate compute efficiently. We differ in formulation (value-thresholding for abstention vs.\ restart probability for sample selection) and in providing theoretical guarantees for the stopping rule.

Our dynamic value-thresholding can be viewed as a compute allocation strategy: we allocate zero additional tokens to unpromising traces by terminating them early. Unlike best-of-N, which requires generating all samples before selection, our method makes incremental decisions, enabling real-time intervention. The threshold $r_\bot$ directly controls the accuracy-efficiency tradeoff: higher thresholds terminate more traces early, saving compute at the cost of fewer successful completions.

\subsection{Early Exit and Adaptive Computation}

The principle of adaptive computation has been explored extensively in neural network architectures. Early exit methods allow computation to terminate at intermediate layers when confident \citep{teerapittayanon2016branchynet, huang2018multiscale}. For transformers, these methods typically add classifiers at intermediate layers and exit when confidence exceeds a threshold \citep{schuster2022confident, elbayad2020depth}.

Our approach applies this principle to the sequence dimension rather than the depth dimension: we exit early in the token sequence when the value function drops below the abstention threshold. This is complementary to layer-wise early exit---both could be combined for more efficient inference.

\subsection{LLM Calibration and Self-Knowledge}

For dynamic value-thresholding to be effective, the model must have access to reliable signals about its likelihood of success. A foundational question is whether LLMs ``know what they know.'' \citet{kadavath2022language} provided affirmative evidence: they showed that larger models are well-calibrated on multiple-choice and true/false questions when provided in appropriate formats. They introduced P(True), a self-evaluation probability where the model assesses its own generated answers, and P(IK), the probability that ``I know'' the answer without reference to a specific response. Both quantities showed promising calibration and scaling properties, though P(IK) struggled to generalize to new tasks.

\citet{azaria2023internal} demonstrated that LLM hidden states encode information about truthfulness. They trained classifiers on hidden layer activations to detect whether statements are true or false, achieving 71--83\% accuracy depending on the base model. This finding---that internal representations contain veracity information beyond what the model expresses---motivates our use of hidden states for value function estimation.

\subsection{Uncertainty Estimation in Natural Language Generation}

Beyond calibration of discrete judgments, uncertainty estimation in natural language generation faces unique challenges due to \emph{semantic equivalence}: different sentences can express the same meaning. \citet{kuhn2023semantic} addressed this by introducing semantic entropy, which measures uncertainty over meanings rather than tokens. Their method clusters model samples by semantic equivalence and computes entropy in this meaning space. Semantic entropy is more predictive of model accuracy than token-level baselines on question-answering tasks.

Other approaches include self-consistency \citep{wang2023selfconsistency}, which generates multiple responses and selects by majority vote, and linguistic calibration \citep{lin2022teaching}, which trains models to express uncertainty in natural language. Our value function approach can be viewed as estimating a specific form of confidence---the probability of eventual correctness---from hidden states, without requiring multiple samples or explicit verbalization.

\subsection{The Value of Mid-Generation Evaluation}
\label{app:mid-gen}

A natural question is whether abstention decisions should be made from the prompt alone or whether observing partial generation provides meaningful additional signal. The evidence presents an interesting picture.

\citet{afzal2025knowing} find that probing classifiers predict Chain-of-Thought success surprisingly well even before a single token is generated, achieving 60--76\% accuracy across datasets. This suggests that LLM representations encode substantial information about eventual correctness from the outset. However, they also observe that later reasoning steps do not \emph{always} improve prediction---a finding we interpret as reflecting the heterogeneity of reasoning traces rather than the irrelevance of mid-generation information.

Complementary evidence suggests that generation reveals additional signal. \citet{zhang2025reasoning} probe hidden states in reasoning models and find that predictive accuracy improves as generation progresses toward intermediate answers. Their early-exit strategy, which terminates generation when probe confidence exceeds a threshold, reduces inference tokens by 24\% without accuracy loss---demonstrating that mid-generation signals enable efficiency gains unattainable at $t=0$. Similarly, \citet{manvi2024adaptive} show that LLMs can predict mid-generation whether restarting would yield a better response; this capability enables pruning 50--75\% of samples early in generation with minimal performance degradation, further demonstrating that partial generations contain actionable information about eventual quality.

\citet{qian2025demystifying} provide a theoretical lens on this phenomenon. Tracking mutual information between hidden representations and correct answers during generation, they observe that MI exhibits sudden peaks at ``thinking tokens'' (e.g., ``Wait'', ``Therefore'') corresponding to moments of reflection or logical transition. They prove that higher cumulative MI implies tighter bounds on prediction error. This non-monotonic information structure---where key insights crystallize at specific moments---explains why dynamic monitoring outperforms fixed-position evaluation.

Our framework formalizes this intuition. The value function $V^\pi(x, y_{1:t})$ captures all information relevant to predicting eventual success from state $(x, y_{1:t})$. By thresholding on this quantity throughout generation, we can exploit both early signals of likely failure and later moments when success becomes assured. Propositions~\ref{prop:dom}--\ref{prop:lipschitz} characterize exactly when and by how much this dynamic approach improves upon static alternatives.

\section{Theoretical Proofs}
\label{app:proofs}

\subsection{Verification of the Necessary Condition}
\label{app:optimality_condition}

We verify that $a(\pi)$ satisfies Equation~\eqref{eq:optimality_condition}. Suppose first that $V_\beta(x, y_{1:t-1}; \pi) < r_\bot$. Then $a(\pi)$ abstains, so $y_t = \bot$ almost surely and $V_\beta(x, y_{1:t}; a(\pi)) = 0$. Thus
\begin{align*}
    \mathbb{E}_{y_t \sim s(\pi^\dagger)(\cdot|x,y_{1:t-1})}\big[R_\beta(x, y_{1:t}; a(\pi)) + V_\beta(x, y_{1:t}; a(\pi))\big] = \mathbb{E}_{y_t \sim s(\pi^\dagger)(\cdot|x,y_{1:t-1})}\big[R_\beta(x, y_{1:t}; \pi)\big] \leq V_\beta(x, y_{1:t-1}; \pi) < r_\bot.
\end{align*}
Conversely, if $V_\beta(x, y_{1:t-1}; \pi) \geq r_\bot$, then $a(\pi)$ follows $\pi$, and by Lemma~\ref{lemma:value_dom}:
\begin{align*}
    \mathbb{E}_{y_t \sim s(\pi^\dagger)(\cdot|x,y_{1:t-1})}\big[R_\beta(x, y_{1:t}; a(\pi))& + V_\beta(x, y_{1:t}; a(\pi))\big] \\&= V_\beta(x, y_{1:t-1}; a(\pi))
    \\&\geq \max(V_\beta(x, y_{1:t-1}; \pi), r_\bot)
    \\&\geq r_\bot.
\end{align*}

\subsection{Proof of Lemma~\ref{lemma:value_dom}}
\label{app:value_lemma}

\begin{proof}
    We prove by induction. 
    
    \textbf{Base Case I ($t=T-1$):} At the final time step, the valid actions are $y^T \in \{\eos, \bot\}$. The value of the augmented policy is:
\begin{align}
    V_{\beta}(x, y_{1:T-1};a(\pi)) =&\; a(\pi)(\bot \mid x, y_{1:T-1}) \cdot r_{\bot} \\
    &+ a(\pi)(\eos \mid x, y_{1:T-1}) \left( r(x, [y_{1:T-1}, \eos]) - \beta \log \frac{s(a(\pi))(\eos \mid x, y_{1:T-1})}{\pi_{\mathrm{ref}}(\eos \mid x, y_{1:T-1})} \right).
\end{align}
We consider the two distinct cases defined by Eq.~\ref{eq:pi_tilde}:
\begin{enumerate}
    \item If $V_{\beta}(x, y_{1:T-1};\pi) < r_{\bot}$: The policy abstains ($a(\pi)(\bot|x, y_{1:T-1})=1$). Thus, $V_{\beta}(x, y_{1:T-1};a(\pi)) = r_{\bot} > V_{\beta}(x, y_{1:T-1};\pi)$.
    \item If $V_{\beta}(x, y_{1:T-1};\pi) \geq r_{\bot}$: The policy does not abstain ($a(\pi)(\bot|x, y_{1:T-1})=0$). Consequently, $$s(a(\pi))([y_{1:T-1}, \eos] \mid x) = a(\pi)([y_{1:T-1}, \eos] \mid x) = \pi([y_{1:T-1}, \eos] \mid x).$$ The value becomes $$V_{\beta}(x, y_{1:T-1};a(\pi)) = r(x, [y_{1:T-1}, \eos]) - \beta \log \frac{\pi([y_{1:T-1}, \eos] \mid x)}{\pi_{\mathrm{ref}}([y_{1:T-1}, \eos] \mid x)} = V_{\beta}(x, y_{1:T-1};\pi) \geq r_{\bot}.$$
\end{enumerate}
Combining these, $V_{\beta}(x, y_{1:T-1};a(\pi)) = \max(r_{\bot}, V_{\beta}(x, y_{1:T-1};\pi))$. There are no reachable terminal states.

\textbf{Base Case II ($t=T-2$):} We again consider two cases:
\begin{enumerate}
    \item If $V_{\beta}(x, y_{1:T-2};\pi) < r_{\bot}$: The policy abstains ($a(\pi)(\bot|x, y_{1:T-2})=1$). Thus, $V_{\beta}(x, y_{1:T-2};a(\pi)) = r_{\bot} > V_{\beta}(x, y_{1:T-2};\pi)$.
    \item If $V_{\beta}(x, y_{1:T-2};\pi) \geq r_{\bot}$: The policy does not abstain ($a(\pi)(\bot|x, y_{1:T-2})=0$). Again, $s(a(\pi))([y_{1:T-2}, \eos]|x) = a(\pi)([y_{1:T-2}, \eos]|x) = \pi([y_{1:T-2}, \eos]|x)$. The value becomes 
    \begin{align*}
        V_{\beta}(x, y_{1:T-2};a(\pi)) = &\;\pi(\eos|x,y_{1:T-2})\left(r(x, [y_{1:T-2}, \eos]) - \beta \log \frac{\pi([y_{1:T-2}, \eos] \mid x)}{\pi_{\mathrm{ref}}([y_{1:T-2}, \eos] \mid x)}\right) \\
        &+ \sum_{y_{T-1} \in \V} \pi(y_{T-1} \mid x, y_{1:T-2}) V_{\beta}(x, [y_{1:T-2}, y_{T-1}];a(\pi))\\
        \geq &\;\pi(\eos|x,y_{1:T-2})\left(r(x, [y_{1:T-2}, \eos]) - \beta \log \frac{\pi([y_{1:T-2}, \eos] \mid x)}{\pi_{\mathrm{ref}}([y_{1:T-2}, \eos] \mid x)}\right) \\
        &+ \sum_{y_{T-1} \in \V} \pi(y_{T-1} \mid x, y_{1:T-2}) V_{\beta}(x, [y_{1:T-2}, y_{T-1}]; \pi)\\
        = &\;V_{\beta}(x, y_{1:T-2};\pi) \geq r_{\bot}.
    \end{align*}
    Here, if for all $y_{T-1}\in\V$, $\pi(y_{T-1}|x,y_{1:T-2})<1$, the second inequality, stemming from Base Case I, holds strictly if and only if there is a reachable future state $x, [y_{1:T-2}, y_{T-1}]$ for which $$V_{\beta}(x, [y_{1:T-2}, y_{T-1}];a(\pi)) = r_{\bot} > V_{\beta}(x, [y_{1:T-2}, y_{T-1}];\pi).$$
\end{enumerate}

\textbf{Inductive Step:}
Assume Eq.~\ref{eq:value_dominance} holds for all states at step $t+1$. We analyze the value at step $t$.
Again, we split by the decision rule in Eq.~\ref{eq:pi_tilde}:

\textit{Case 1: Abstention ($V_{\beta}(x, y_{1:t};\pi) < r_{\bot}$).}
Here, $a(\pi)$ selects $\bot$ with probability 1. The sequence terminates, yielding:
$$V_{\beta}(x, y_{1:t};a(\pi)) = r_{\bot} > V_{\beta}(x, y_{1:t};\pi).$$

\textit{Case 2: Continuation ($V_{\beta}(x, y_{1:t};\pi) \geq r_{\bot}$).}
Here, $a(\pi)$ selects tokens $y_t \in \V$ following $\pi(y_t \mid x, y_{1:t-1})$. As such
\begin{align*}
    V_{\beta}(x, y_{1:t};a(\pi)) = &\;\pi(\eos|x,y_{1:t})\left(r(x, [y_{1:t}, \eos]) - \beta \log \frac{\pi([y_{1:t}, \eos] \mid x)}{\pi_{\mathrm{ref}}([y_{1:t}, \eos] \mid x)}\right) \\
    &+ \sum_{y_{t+1} \in \V} \pi(y_{t+1} \mid x, y_{1:t}) V_{\beta}(x, [y_{1:t}, y_{t+1}]; a(\pi))\\
    \geq &\;\pi(\eos|x,y_{1:t})\left(r(x, [y_{1:t}, \eos]) - \beta \log \frac{\pi([y_{1:t}, \eos] \mid x)}{\pi_{\mathrm{ref}}([y_{1:t}, \eos] \mid x)}\right) \\
    &+ \sum_{y_{t+1} \in \V} \pi(y_{t+1} \mid x, y_{1:t}) V_{\beta}(x, [y_{1:t}, y_{t+1}]; \pi)\\
    = &\;V_{\beta}(x, y_{1:t};\pi) \geq r_{\bot}.
\end{align*}
Again, if for all $y_{t+1}\in\V$, $\pi(y_{t+1}|x,y_{1:t})<1$, the second inequality, stemming from Induction hypothesis, holds strictly if and only if there is a reachable future state for which $$V_{\beta}(x, y_{1:t+1};a(\pi)) = r_{\bot} > V_{\beta}(x, y_{1:t+1};\pi).$$
Thus, the induction is complete.
\end{proof}

\subsection{Proof of Proposition~\ref{prop:dom_single_step}}
\label{app:dom_single_step}

\begin{proof}[Proof of Proposition~\ref{prop:dom_single_step}]
We work on the probability space of trajectories generated by the base policy: sample $x \sim \rho$ and $y \sim \pi(\cdot \mid x)$. Both $f(\pi;k)$ and $a(\pi)$ can be viewed as stopping rules applied to this common stochastic process: they follow $\pi$ for token generation but may terminate early by abstaining.

Define $R_f$ and $R_a$ as the rewards received by each policy:
\begin{align*}
R_f &= \II\{V_{k-1} < r_\bot\} \cdot r_\bot + \II\{V_{k-1} \geq r_\bot\} \cdot R_\beta(x, y; \pi), \\
R_a &= \II\{\tau < \infty\} \cdot r_\bot + \II\{\tau = \infty\} \cdot R_\beta(x, y; \pi),
\end{align*}
where $V_t = V_\beta(x, y_{1:t}; \pi)$, $\tau = \min\{t \geq 1 : V_{t-1} < r_\bot\}$ with $\tau = \infty$ if no such $t$ exists, and $R_\beta(x, y; \pi)$ is the total reward defined in equation~\eqref{eq:total_reward}.

Since both policies follow $\pi$ when not abstaining, we have $J_\beta(f(\pi;k)) = \EE[R_f]$ and $J_\beta(a(\pi)) = \EE[R_a]$. We analyze these expectations by conditioning on the prefix $(x, y_{1:k-1})$, which determines both $\tau$ (restricted to $\{1, \ldots, k-1, \geq k\}$) and $V_{k-1}$.

\medskip
\noindent\textbf{Expected reward under $f(\pi;k)$.}

The policy $f(\pi;k)$ follows $\pi$ until position $k$, then applies the abstention rule. Conditioned on $(x, y_{1:k-1})$, the future trajectory $y_{k:T}$ is random, drawn from $\pi(\cdot \mid x, y_{1:k-1})$. The conditional expected reward is:
\begin{itemize}
    \item If $V_{k-1} < r_\bot$: $f(\pi;k)$ abstains at position $k$, so $R_f = r_\bot$ almost surely.
    \item If $V_{k-1} \geq r_\bot$: $f(\pi;k)$ continues with $\pi$, so $\EE[R_f \mid x, y_{1:k-1}] = V_{k-1}$.
\end{itemize}
Thus:
\begin{equation}\label{eq:f_reward}
\EE[R_f \mid x, y_{1:k-1}] = \max(r_\bot, V_{k-1}).
\end{equation}

\medskip
\noindent\textbf{Expected reward under $a(\pi)$.}

The abstention time $\tau$ is determined by the prefix $(x, y_{1:k-1})$. Conditioned on $(x, y_{1:k-1})$:
\begin{itemize}
    \item If $\tau < k$: the policy already abstained at position $\tau < k$, so $R_a = r_\bot$ almost surely.
    \item If $\tau \geq k$: the policy reached state $(x, y_{1:k-1})$ without abstaining. The future reward depends on subsequent abstention decisions and the terminal reward, yielding $\EE[R_a \mid x, y_{1:k-1}] = V_\beta(x, y_{1:k-1}; a(\pi))$.
\end{itemize}
Thus:
\begin{equation}\label{eq:a_reward}
\EE[R_a \mid x, y_{1:k-1}] = \II\{\tau < k\} \cdot r_\bot + \II\{\tau \geq k\} \cdot V_\beta(x, y_{1:k-1}; a(\pi)).
\end{equation}

\medskip
\noindent\textbf{Computing the gap.}

Subtracting~\eqref{eq:a_reward} from~\eqref{eq:f_reward}:
\begin{align}
\EE[R_f - R_a \mid x, y_{1:k-1}] &= \II\{\tau < k\} \cdot \left(\max(r_\bot, V_{k-1}) - r_\bot\right) \notag \\
&\quad + \II\{\tau \geq k\} \cdot \left(\max(r_\bot, V_{k-1}) - V_\beta(x, y_{1:k-1}; a(\pi))\right). \label{eq:gap_decomp}
\end{align}

We analyze each term.

\medskip
\noindent\textit{First term} ($\tau < k$): The contribution is $\max(r_\bot, V_{k-1}) - r_\bot = (V_{k-1} - r_\bot)^+$.

\medskip
\noindent\textit{Second term} ($\tau \geq k$): By definition of $\tau$, we have $V_s \geq r_\bot$ for all $s < k$. In particular, $V_{k-1} \geq r_\bot$, so $\max(r_\bot, V_{k-1}) = V_{k-1}$.

By Lemma~\ref{lemma:value_dom}:
\begin{equation*}
V_\beta(x, y_{1:k-1}; a(\pi)) \geq \max(r_\bot, V_{k-1}) = V_{k-1}.
\end{equation*}
Therefore, the second term satisfies $V_{k-1} - V_\beta(x, y_{1:k-1}; a(\pi)) \leq 0$.

\medskip
\noindent\textbf{Combining.}

From~\eqref{eq:gap_decomp} and the analysis above:
\begin{equation*}
\EE[R_f - R_a \mid x, y_{1:k-1}] \leq \II\{\tau < k\} \cdot (V_{k-1} - r_\bot)^+.
\end{equation*}

By the tower property, taking expectations over $(x, y_{1:k-1})$ drawn from $x \sim \rho$ and $y_{1:k-1} \sim \pi(\cdot \mid x)$:
\begin{equation*}
J_\beta(f(\pi;k)) - J_\beta(a(\pi)) = \EE[R_f] - \EE[R_a] = \EE\left[\EE[R_f - R_a \mid x, y_{1:k-1}]\right] \leq \EE\left[\II\{\tau < k\} \cdot (V_{k-1} - r_\bot)^+\right].
\end{equation*}

Rearranging yields the bound~\eqref{eq:dom_single_step_bound}.

\medskip
\noindent\textbf{Characterizing strict inequality.}

From~\eqref{eq:gap_decomp}, the bound~\eqref{eq:dom_single_step_bound} is tight if and only if the second term equals zero almost surely. The second term is $\II\{\tau \geq k\} \cdot (V_{k-1} - V_\beta(x, y_{1:k-1}; a(\pi)))$, which is nonpositive. It equals zero at a state $(x, y_{1:k-1})$ with $\tau \geq k$ if and only if $V_\beta(x, y_{1:k-1}; a(\pi)) = V_{k-1}$.

By Lemma~\ref{lemma:value_dom}, $V_\beta(x, y_{1:k-1}; a(\pi)) > V_{k-1}$ if and only if there exists a state $(x, y_{1:t})$ with $t \geq k$ reachable from $(x, y_{1:k-1})$ under $\pi$ such that $V_t < r_\bot$.

Therefore, strict inequality in~\eqref{eq:dom_single_step_bound} holds if and only if there exists a reachable $(x, y_{1:k-1})$ with $\tau \geq k$ from which such a state is reachable. The condition $\tau \geq k$ is equivalent to $V_s \geq r_\bot$ for all $s < k$.

\medskip
\noindent\textbf{When the correction term vanishes.}

The actual gap is $J_\beta(f(\pi;k)) - J_\beta(a(\pi)) = \text{(correction term)} + \text{(second term in~\eqref{eq:gap_decomp})}$, where the second term is nonpositive. When the correction term equals zero, we have $J_\beta(f(\pi;k)) - J_\beta(a(\pi)) = \text{(second term)} \leq 0$, with strict inequality if and only if the second term is strictly negative somewhere. This is precisely the condition for strict inequality in~\eqref{eq:dom_single_step_bound}.
\end{proof}

\begin{proof}[Proof of Corollary~\ref{cor:no_early_abstention}]
If $\PP(\tau < k) = 0$, then the correction term in~\eqref{eq:dom_single_step_bound} vanishes. The strict inequality characterization from Proposition~\ref{prop:dom_single_step} applies directly; note that condition (3) is automatically satisfied since $\tau \geq k$ for all reachable prefixes.
\end{proof}

\begin{proof}[Proof of Corollary~\ref{cor:absorbing}]
If the sub-level set is absorbing for $t < k-1$, then conditioned on $\tau < k$, we have $V_{k-1} < r_\bot$ almost surely. Thus $(V_{k-1} - r_\bot)^+ = 0$ whenever $\tau < k$, and the correction term in~\eqref{eq:dom_single_step_bound} vanishes.
\end{proof}

\subsection{Proof of Proposition~\ref{prop:opt_no_reg}}
\label{app:opt_no_reg}

\begin{proof}
Recall that for $\beta=0$, the KL penalty term vanishes and the value function simplifies to the expected terminal reward:
\begin{equation}
    V_0(x, y_{1:t}; \pi) = \mathbb{E}_{y \sim \pi(\cdot \mid x, y_{1:t})} \left[ r(x, y) \right].
\end{equation}

We prove two sub-claims for all reachable, non-terminal states $(x, y_{1:t})$:
\begin{enumerate}
    \item For all $\pi^*_0 \in \operatorname*{argmax}_{\pi} J_0(\pi)$, we have $V_0(x,y_{1:t};\pi^*_0) = \max_{y' \in \mathcal{V}^T : y'_{1:t}=y_{1:t}} r(x, y')$. \label{sub-claim-1}
    \item For all $\pi^*_0 \in \operatorname*{argmax}_{\pi} J_0(\pi)$, we have $V_0(x,y_{1:t};a(\pi^*_0)) = \max(r_\bot, V_0(x,y_{1:t};\pi^*_0))$. \label{sub-claim-2}
\end{enumerate}

Combining these sub-claims, for any $x$ in the support of $\rho$:
\begin{align}
    V_0(x, y_{1:0}; a(\pi^*_0)) &= \max\left(r_\bot, V_0(x, y_{1:0}; \pi^*_0)\right) \\
    &= \max\left(r_\bot, \max_{y \in \mathcal{V}^T} r(x, y)\right) \\
    &= \max_{y \in {\V^\dagger}^T} r(x, y),
\end{align}
where the last equality follows from the fact that any trajectory in ${\V^\dagger}^T$ either terminates in $\bot$ (yielding reward $r_\bot$) or completes in $\mathcal{V}^T$ (yielding $r(x,y)$).

Taking expectations over $x \sim \rho$:
\begin{equation}
    J_0(a(\pi^*_0)) = \mathbb{E}_{x \sim \rho}\left[\max_{y \in {\V^\dagger}^T} r(x, y)\right].
\end{equation}

We now show this is an upper bound for any policy $\pi^\dagger \in \Pi_{\V^\dagger}$. For any such policy:
\begin{align}
    J_0(\pi^\dagger) &= \mathbb{E}_{x \sim \rho, y \sim \pi^\dagger(\cdot \mid x)} \left[ r^\dagger(x, y) \right]
\end{align}
where $r^\dagger(x,y) = r_\bot$ if $\bot \in y$ and $r^\dagger(x,y) = r(x,y)$ otherwise. For each $x$, the expectation over $y$ is a convex combination of rewards, each of which is at most $\max_{y \in {\V^\dagger}^T} r(x, y)$. Therefore:
\begin{equation}
    J_0(\pi^\dagger) \leq \mathbb{E}_{x \sim \rho}\left[\max_{y \in {\V^\dagger}^T} r(x, y)\right] = J_0(a(\pi^*_0)).
\end{equation}

It remains to prove the sub-claims.

\textbf{Proof of Sub-claim~\ref{sub-claim-1}.}
Fix a reachable, non-terminal state $(x, y_{1:t})$. Let $v^*_{\max} = \max_{y' \in \mathcal{V}^T : y'_{1:t}=y_{1:t}} r(x, y')$ denote the maximum achievable reward consistent with the prefix $y_{1:t}$, and let $y^*$ be a completion attaining this maximum.

Suppose for contradiction that $V_0(x, y_{1:t}; \pi^*_0) < v^*_{\max}$. This implies $\pi^*_0$ assigns positive probability to completions with reward strictly less than $v^*_{\max}$.

Construct a modified policy $\pi'$ identical to $\pi^*_0$ except that, upon reaching state $(x, y_{1:t})$, it deterministically follows the optimal completion $y^*$:
\begin{equation}
    \pi'(y_k \mid x, \tilde{y}_{1:k-1}) = \begin{cases}
        \mathbb{I}\{y_k = y^*_k\}, & \text{if } k > t \text{ and } \tilde{y}_{1:t} = y_{1:t} \\
        \pi^*_0(y_k \mid x, \tilde{y}_{1:k-1}), & \text{otherwise.}
    \end{cases}
\end{equation}

Under $\pi'$, the value at $(x, y_{1:t})$ is $V_0(x, y_{1:t}; \pi') = r(x, y^*) = v^*_{\max}$.

Since $\pi'$ differs from $\pi^*_0$ only for trajectories passing through $(x, y_{1:t})$:
\begin{align}
    J_0(\pi') - J_0(\pi^*_0) &= \mathbb{E}_{x' \sim \rho} \left[ V_0(x', y_{1:0}; \pi') - V_0(x', y_{1:0}; \pi^*_0) \right] \\
    &= \rho(x) \cdot \pi^*_0(y_{1:t} \mid x) \cdot \left( V_0(x, y_{1:t}; \pi') - V_0(x, y_{1:t}; \pi^*_0) \right) \\
    &> 0,
\end{align}
where the final inequality follows from reachability ($\rho(x) \cdot \pi^*_0(y_{1:t} \mid x) > 0$) and $V_0(x, y_{1:t}; \pi') > V_0(x, y_{1:t}; \pi^*_0)$.

This contradicts the optimality of $\pi^*_0$. Therefore $V_0(x, y_{1:t}; \pi^*_0) = v^*_{\max}$.

\textbf{Proof of Sub-claim~\ref{sub-claim-2}.}
We proceed by backward induction on $t$.

\textit{Base case ($t = T-1$):} At the final non-terminal position, the only available action in $\mathcal{V}$ is $\eos$, so $V_0(x, y_{1:T-1}; \pi^*_0) = r(x, y_{1:T-1} \circ \eos)$.

If $V_0(x, y_{1:T-1}; \pi^*_0) < r_\bot$, then $a(\pi^*_0)$ abstains, yielding:
\begin{equation}
    V_0(x, y_{1:T-1}; a(\pi^*_0)) = r_\bot = \max(r_\bot, V_0(x, y_{1:T-1}; \pi^*_0)).
\end{equation}

If $V_0(x, y_{1:T-1}; \pi^*_0) \geq r_\bot$, then $a(\pi^*_0)$ outputs $\eos$, yielding:
\begin{equation}
    V_0(x, y_{1:T-1}; a(\pi^*_0)) = V_0(x, y_{1:T-1}; \pi^*_0) = \max(r_\bot, V_0(x, y_{1:T-1}; \pi^*_0)).
\end{equation}

\textit{Inductive step:} Assume the claim holds for all reachable, non-terminal states at positions $t+1, \ldots, T-1$. Consider a reachable, non-terminal state $(x, y_{1:t})$.

\textit{Case 1: $V_0(x, y_{1:t}; \pi^*_0) < r_\bot$.} By definition of $a(\pi)$ in Eq.~\ref{eq:pi_tilde}, $a(\pi^*_0)$ abstains:
\begin{equation}
    V_0(x, y_{1:t}; a(\pi^*_0)) = r_\bot = \max(r_\bot, V_0(x, y_{1:t}; \pi^*_0)).
\end{equation}

\textit{Case 2: $V_0(x, y_{1:t}; \pi^*_0) \geq r_\bot$.} By definition of $a(\pi)$, $a(\pi^*_0)$ follows $\pi^*_0$:
\begin{align}
    V_0(x, y_{1:t}; a(\pi^*_0)) &= \mathbb{E}_{y_{t+1} \sim \pi^*_0(\cdot \mid x, y_{1:t})} \left[ R_0(x, y_{1:t+1}; a(\pi^*_0)) + V_0(x, y_{1:t+1}; a(\pi^*_0)) \right].
\end{align}
Since $\beta = 0$, we have $R_0(x, y_{1:t+1}; a(\pi^*_0)) = R_0(x, y_{1:t+1}; \pi^*_0)$ for non-abstaining trajectories. Applying the inductive hypothesis to each reachable successor state:
\begin{align}
    V_0(x, y_{1:t}; a(\pi^*_0)) &= \mathbb{E}_{y_{t+1} \sim \pi^*_0(\cdot \mid x, y_{1:t})} \left[ R_0(x, y_{1:t+1}; \pi^*_0) + \max(r_\bot, V_0(x, y_{1:t+1}; \pi^*_0)) \right].
\end{align}
By Sub-claim~\ref{sub-claim-1}, $V_0(x, y_{1:t}; \pi^*_0)$ equals the maximum reward achievable from $(x, y_{1:t})$, and this maximum is attained by some completion. Since $\pi^*_0$ is optimal, it places probability only on reward-maximizing continuations, so $V_0(x, y_{1:t+1}; \pi^*_0) = V_0(x, y_{1:t}; \pi^*_0)$ for all $y_{t+1}$ in the support of $\pi^*_0(\cdot \mid x, y_{1:t})$. 

Since we are in Case 2, $V_0(x, y_{1:t}; \pi^*_0) \geq r_\bot$, so $\max(r_\bot, V_0(x, y_{1:t+1}; \pi^*_0)) = V_0(x, y_{1:t+1}; \pi^*_0)$ for these successor states. Therefore:
\begin{align}
    V_0(x, y_{1:t}; a(\pi^*_0)) &= \mathbb{E}_{y_{t+1} \sim \pi^*_0(\cdot \mid x, y_{1:t})} \left[ R_0(x, y_{1:t+1}; \pi^*_0) + V_0(x, y_{1:t+1}; \pi^*_0) \right] \\
    &= V_0(x, y_{1:t}; \pi^*_0) \\
    &= \max(r_\bot, V_0(x, y_{1:t}; \pi^*_0)).
\end{align}

This completes the induction.
\end{proof}

\subsection{Proof of Proposition~\ref{prop:lipschitz}}
\label{app:lipschitz_proof}

\begin{proof}
We work throughout under the assumptions $\beta = 0$ and $r(x,y) \in \{0,1\}$, which imply $V_0(x, y_{1:t}; \pi) \in [0,1]$ for all states.

\medskip
\noindent\textbf{Step 1: Setup and stopping times.}

Define the following stopping times on the probability space of trajectories generated by sampling $x \sim \rho$ and $y \sim \pi(\cdot \mid x)$:
\begin{itemize}
    \item $\tau = \min\{t \geq 1 : V_0(x, y_{1:t-1}; \pi) < r_\bot\}$, the first timestep at which $a(\pi)$ abstains, with $\tau = \infty$ if no such $t$ exists.
    \item $c = \min\{t : y_t = \eos\}$, the completion time under $\pi$.
\end{itemize}
Note that $\tau$ is determined by the trajectory $(x, y)$: it is the first position $t$ such that the value at the \emph{preceding} state $(x, y_{1:t-1})$ falls below $r_\bot$.

\medskip
\noindent\textbf{Step 2: Expressing the objective difference via trajectories from $\pi$.}

The key observation is that both $J_0(a(\pi))$ and $J_0(\pi)$ can be expressed as expectations over trajectories generated by $\pi$. For $J_0(\pi)$, this is immediate:
$$J_0(\pi) = \mathbb{E}_{x \sim \rho}\mathbb{E}_{y \sim \pi(\cdot|x)}[r(x,y)].$$

For $J_0(a(\pi))$, we analyze what reward $a(\pi)$ obtains on a trajectory $y$ generated by $\pi$:
\begin{itemize}
    \item If $\tau \leq c$: The policy $a(\pi)$ would abstain at step $\tau$ (before or at completion), receiving reward $r_\bot$.
    \item If $\tau > c$: The policy $a(\pi)$ follows $\pi$ until completion at step $c$, receiving reward $r(x,y)$.
\end{itemize}
Therefore:
$$J_0(a(\pi)) = \mathbb{E}_{x \sim \rho}\mathbb{E}_{y \sim \pi(\cdot|x)}\left[r_\bot \cdot \mathbb{I}\{\tau \leq c\} + r(x,y) \cdot \mathbb{I}\{\tau > c\}\right].$$

Taking the difference:
\begin{align}
J_0(a(\pi)) - J_0(\pi) &= \mathbb{E}_{x,y}\left[r_\bot \cdot \mathbb{I}\{\tau \leq c\} + r(x,y) \cdot \mathbb{I}\{\tau > c\} - r(x,y)\right] \notag\\
&= \mathbb{E}_{x,y}\left[r_\bot \cdot \mathbb{I}\{\tau \leq c\} - r(x,y) \cdot \mathbb{I}\{\tau \leq c\}\right] \notag\\
&= \mathbb{E}_{x,y}\left[(r_\bot - r(x,y)) \cdot \mathbb{I}\{\tau \leq c\}\right]. \label{eq:obj_diff_direct}
\end{align}

\medskip
\noindent\textbf{Step 3: Replacing $r(x,y)$ with the value function at abstention time.}

We now show that \eqref{eq:obj_diff_direct} can be rewritten in terms of the value function at abstention time. The key is the following claim.

\medskip
\noindent\textit{Claim:} $\mathbb{E}_{x,y}\left[r(x,y) \cdot \mathbb{I}\{\tau \leq c\}\right] = \mathbb{E}_{x,y}\left[V_0(x, y_{1:\tau-1}; \pi) \cdot \mathbb{I}\{\tau \leq c\}\right].$

\medskip
\noindent\textit{Proof of claim.} 
We use the law of iterated expectations. Define the $\sigma$-algebra $\mathcal{G} = \sigma(x, y_{1:\tau-1})$ generated by the prompt and the trajectory up to (but not including) the abstention step. We will show that $\mathbb{I}\{\tau \leq c\}$ is $\mathcal{G}$-measurable, which allows us to apply the tower property.

To see that $\mathbb{I}\{\tau \leq c\}$ is $\mathcal{G}$-measurable, observe:
\begin{itemize}
    \item The stopping time $\tau = \min\{t \geq 1 : V_0(x, y_{1:t-1}; \pi) < r_\bot\}$ is determined by $(x, y_{1:\tau-1})$, since $\tau = k$ iff $V_0(x, y_{1:t-1}; \pi) \geq r_\bot$ for $t < k$ and $V_0(x, y_{1:k-1}; \pi) < r_\bot$.
    \item The event $\{\tau \leq c\}$ is equivalent to $\{y_t \neq \eos \text{ for all } t < \tau\}$, which depends only on $y_{1:\tau-1}$.
\end{itemize}
Thus $\mathbb{I}\{\tau \leq c\}$ is $\mathcal{G}$-measurable.

Now, by the tower property:
\begin{align*}
\mathbb{E}\left[r(x,y) \cdot \mathbb{I}\{\tau \leq c\}\right] &= \mathbb{E}\left[\mathbb{E}\left[r(x,y) \cdot \mathbb{I}\{\tau \leq c\} \mid \mathcal{G}\right]\right] \\
&= \mathbb{E}\left[\mathbb{I}\{\tau \leq c\} \cdot \mathbb{E}\left[r(x,y) \mid \mathcal{G}\right]\right] \quad \text{(since $\mathbb{I}\{\tau \leq c\}$ is $\mathcal{G}$-measurable)} \\
&= \mathbb{E}\left[\mathbb{I}\{\tau \leq c\} \cdot \mathbb{E}\left[r(x,y) \mid x, y_{1:\tau-1}\right]\right] \\
&= \mathbb{E}\left[\mathbb{I}\{\tau \leq c\} \cdot V_0(x, y_{1:\tau-1}; \pi)\right],
\end{align*}
where the last equality uses $V_0(x, y_{1:t}; \pi) = \mathbb{E}_{y \sim \pi(\cdot|x, y_{1:t})}[r(x,y)]$ for non-terminal states, which holds since $\beta = 0$. \hfill $\square$ (claim)

\medskip
Substituting into \eqref{eq:obj_diff_direct}:
\begin{equation}\label{eq:obj_diff_value}
J_0(a(\pi)) - J_0(\pi) = \mathbb{E}_{x,y}\left[(r_\bot - V_0(x, y_{1:\tau-1}; \pi)) \cdot \mathbb{I}\{\tau \leq c\}\right].
\end{equation}

\medskip
\noindent\textbf{Step 4: Bounding the value gap.}

We bound $r_\bot - V_0(x, y_{1:\tau-1}; \pi)$ by considering two cases.

\medskip
\noindent\textit{Case $\tau \geq 2$:} By definition of $\tau$:
\begin{itemize}
    \item $V_0(x, y_{1:\tau-2}; \pi) \geq r_\bot$ (otherwise $\tau$ would be at most $\tau - 1$), and
    \item $V_0(x, y_{1:\tau-1}; \pi) < r_\bot$ (the condition triggering abstention at time $\tau$).
\end{itemize}
Combining these inequalities:
$$r_\bot - V_0(x, y_{1:\tau-1}; \pi) < r_\bot \leq V_0(x, y_{1:\tau-2}; \pi).$$
Rearranging:
$$r_\bot - V_0(x, y_{1:\tau-1}; \pi) < V_0(x, y_{1:\tau-2}; \pi) - V_0(x, y_{1:\tau-1}; \pi) + (r_\bot - r_\bot) = V_0(x, y_{1:\tau-2}; \pi) - V_0(x, y_{1:\tau-1}; \pi).$$
By the $L$-Lipschitz condition (Definition~\ref{def:lipschitz}), which applies since $\tau \geq 2$ implies the state $(x, y_{1:\tau-1})$ has $\tau - 1 \geq 1$:
$$r_\bot - V_0(x, y_{1:\tau-1}; \pi) \leq \left|V_0(x, y_{1:\tau-2}; \pi) - V_0(x, y_{1:\tau-1}; \pi)\right| \leq L.$$

\medskip
\noindent\textit{Case $\tau = 1$:} Abstention occurs at the first step, meaning $V_0(x, y_{1:0}; \pi) < r_\bot$. The Lipschitz condition cannot be applied because there is no preceding state $y_{1:-1}$. However, since $\beta = 0$ and $r(x,y) \in \{0,1\}$, the value function is a conditional probability:
$$V_0(x, y_{1:0}; \pi) = \mathbb{P}_\pi(r(x,y) = 1 \mid x) \in [0, 1].$$
In particular, $V_0(x, y_{1:0}; \pi) \geq 0$. Combined with $r_\bot \leq 1$, we obtain:
$$r_\bot - V_0(x, y_{1:0}; \pi) \leq r_\bot - 0 = r_\bot.$$

\medskip
\noindent\textbf{Step 5: Assembling the bound.}

Define $\alpha_1 = \mathbb{P}(\tau = 1)$ and $\alpha_{\geq 2} = \mathbb{P}(\tau \geq 2, \tau \leq c)$. Note that $\tau = 1$ automatically implies $\tau \leq c$ (since $c \geq 1$ by definition), so $\alpha = \alpha_1 + \alpha_{\geq 2}$ is the total abstention rate.

From \eqref{eq:obj_diff_value}:
\begin{align*}
J_0(a(\pi)) - J_0(\pi) &= \mathbb{E}\left[(r_\bot - V_0(x, y_{1:\tau-1}; \pi)) \cdot \mathbb{I}\{\tau \leq c\}\right] \\
&= \mathbb{E}\left[(r_\bot - V_0(x, y_{1:0}; \pi)) \cdot \mathbb{I}\{\tau = 1\}\right] + \mathbb{E}\left[(r_\bot - V_0(x, y_{1:\tau-1}; \pi)) \cdot \mathbb{I}\{\tau \geq 2, \tau \leq c\}\right].
\end{align*}

Applying the bounds from Step 4 to each term:
\begin{align*}
J_0(a(\pi)) - J_0(\pi) &\leq \mathbb{E}\left[r_\bot \cdot \mathbb{I}\{\tau = 1\}\right] + \mathbb{E}\left[L \cdot \mathbb{I}\{\tau \geq 2, \tau \leq c\}\right] \\
&= \alpha_1 \cdot r_\bot + \alpha_{\geq 2} \cdot L \\
&= \alpha_1 \cdot r_\bot + (\alpha - \alpha_1) \cdot L.
\end{align*}

This completes the proof.
\end{proof}

\subsection{Value as Conditional Expectation}
\label{app:value_cond_exp_proof}

We first establish that the value function equals a conditional expectation of the terminal reward, then specialize to the $\beta = 0$ case with binary rewards used in the main text.

\begin{proposition}[Value as Conditional Expectation]
\label{prop:value_cond_exp}
For any $\beta \geq 0$ and any state $(x, y_{1:t})$:
\begin{equation}\label{eq:value-as-prob}
    V_\beta(x, y_{1:t}; \pi) = \mathbb{E}_{y \sim \pi(\cdot | x, y_{1:t})}\left[R_\beta(x, y) \cdot \mathbb{I}\{t < c\}\right],
\end{equation}
where $c = \min\{k : y_k = \eos\}$ is the completion time and $R_\beta(x, y) = r(x,y) - \beta \log \frac{\pi(y|x)}{\pi_{\mathrm{ref}}(y|x)}$ is the terminal reward.
\end{proposition}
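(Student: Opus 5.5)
The plan is to unroll the definition of $V_\beta$ using the sparse reward structure. By definition,
\[
V_\beta(x,y_{1:t};\pi)=\mathbb{E}_{y\sim\pi(\cdot\mid x,y_{1:t})}\Big[\textstyle\sum_{k=0}^{T-t}R_\beta(x,y_{1:t+k};\pi)\Big].
\]
By Equation~\eqref{eq:token_level}, each summand vanishes unless $y_{t+k}=\eos$.

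\textbf{Step 1: the sum has at most one nonzero term.} For a fixed trajectory $y$ with completion time $c=\min\{k:y_k=\eos\}$, generation halts at $c$, so tokens after $c$ are not part of the trajectory. Hence the only position at which $\eos$ occurs is $c$.

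\textbf{Step 2: split on $c$.}
\begin{itemize}
\item If $c>t$, the index range $t,\dots,T$ contains $c$. The sum therefore equals $R_\beta(x,y_{1:c};\pi)=r(x,y)-\beta\log\frac{\pi(y\mid x)}{\pi_{\mathrm{ref}}(y\mid x)}=R_\beta(x,y)$.
\item If $c\le t$, the state is terminal and no future reward is collected. The sum should then be taken to be $0$, matching the convention stated in the main text for terminal states.
\end{itemize}
These two cases give the pointwise identity
\[
\sum_{k}R_\beta(x,y_{1:t+k};\pi)=R_\beta(x,y)\,\mathbb{I}\{t<c\}.
\]
Taking the expectation under $\pi(\cdot\mid x,y_{1:t})$ yields Equation~\eqref{eq:value-as-prob}.

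\textbf{Main obstacle: index and convention bookkeeping.} Two points need care.
\begin{itemize}
\item The literal definition sums from $k=0$, which includes the position $t$ itself. If $y_t=\eos$, a naive reading would count the terminal reward at $t$. I would handle this by invoking the stated convention that the value of a terminal state is $0$, equivalently by reading the value as the reward-to-go strictly after the state. I would then note that the $\mathbb{I}\{t<c\}$ factor encodes exactly this, so for non-terminal states the $k=0$ term is zero anyway.
\item The KL term is evaluated on the full prefix $y_{1:c}=y$, so it coincides with the sequence-level log-ratio $\log\frac{\pi(y\mid x)}{\pi_{\mathrm{ref}}(y\mid x)}$.
\end{itemize}

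The specialization to $\beta=0$ with binary rewards then follows immediately. The expectation becomes $\mathbb{E}[r(x,y)]=\mathbb{P}_\pi(r(x,y)=1\mid x,y_{1:t})$ for non-terminal states, which gives Equation~\eqref{eq:value_as_prob}.
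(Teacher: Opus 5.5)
Your proposal is correct and follows essentially the same route as the paper's proof: split on whether $t<c$, use reward sparsity so that only the $\eos$ term at position $c$ survives (giving $R_\beta(x,y)$ with the full-sequence KL ratio), and use the convention that terminal states have zero value. Your explicit handling of the $k=0$ index at $t=c$ is a more careful statement of what the paper does implicitly when it rewrites the sum as $\sum_{k=t+1}^{T}$ and declares terminal values to be zero.
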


\begin{proof}
We consider terminal and non-terminal states separately.

\textbf{Terminal states} ($t \geq c$): The sequence has already terminated, so no future rewards are collected. By the definition of the value function, $V_\beta(x, y_{1:t}; \pi) = 0$. The indicator $\mathbb{I}\{t < c\} = 0$ enforces this: the right-hand side equals $\mathbb{E}[R_\beta(x, y) \cdot 0] = 0$.

\textbf{Non-terminal states} ($t < c$): The indicator $\mathbb{I}\{t < c\} = 1$ almost surely for any completion $y \sim \pi(\cdot | x, y_{1:t})$, since the sequence has not yet terminated. The sparse reward structure (Equation~\ref{eq:token_level}) implies that all intermediate rewards are zero, with the entire reward concentrated at the terminal token. Thus:
\begin{align*}
V_\beta(x, y_{1:t}; \pi) &= \mathbb{E}_{y \sim \pi(\cdot | x, y_{1:t})}\left[\sum_{k=t+1}^{T} R_\beta(x, y_{1:k}; \pi)\right] \\
&= \mathbb{E}_{y \sim \pi(\cdot | x, y_{1:t})}\left[R_\beta(x, y)\right] \\
&= \mathbb{E}_{y \sim \pi(\cdot | x, y_{1:t})}\left[R_\beta(x, y) \cdot \mathbb{I}\{t < c\}\right],
\end{align*}
where the last equality uses $\mathbb{I}\{t < c\} = 1$ almost surely.
\end{proof}

\textbf{Specialization to $\beta = 0$ with binary rewards.} For $\beta = 0$, the terminal reward reduces to $R_0(x, y) = r(x, y)$. When $r(x,y) \in \{0,1\}$ indicates correctness, Proposition~\ref{prop:value_cond_exp} implies that for non-terminal states ($t < c$):
$$V_0(x, y_{1:t}; \pi) = \mathbb{E}[r(x,y) \mid x, y_{1:t}] = \mathbb{P}_\pi(r(x,y) = 1 \mid x, y_{1:t}),$$
which is Equation~\ref{eq:value_as_prob} in the main text. For terminal states ($t \geq c$), $V_0(x, y_{1:t}; \pi) = 0$.

\subsection{BCE Recovers Value Function}
\label{app:bce_proof}

\begin{proposition}[BCE Recovers Value Function]
\label{prop:bce_optimal}
For $\beta = 0$ with binary rewards $r(x,y) \in \{0,1\}$, let $\mathcal{H} = \{\hat{V}_t(\cdot; \theta) : \theta \in \Theta\}$ denote the hypothesis class, where $\hat{V}_t(x, y_{1:t}; \theta) = 0$ for terminal states by definition. Assume realizability: $V_0(\cdot,\cdot; \pi) \in \mathcal{H}$. Then the minimizer $\hat{V}^*_t$ of $\mathcal{L}(\theta)$ satisfies $\hat{V}^*_t(x, y_{1:t}) = V_0(x, y_{1:t}; \pi)$ for all states.
\end{proposition}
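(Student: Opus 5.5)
The plan is the standard argument that cross-entropy is a strictly proper scoring rule. I will interpret the loss $\mathcal{L}(\theta)$ at the population level, as an expectation over $x \sim \rho$ and $y \sim \pi(\cdot\mid x)$, with completion time $c=\min\{k : y_k = \eos\}$. I will rewrite it as a sum over positions: $\mathcal{L}(\theta)=\sum_{t=0}^{T-1}\EE\big[\II\{t<c\}\,\ell(\hat V_t(x,y_{1:t};\theta), r(x,y))\big]$. Each summand will then be conditioned on the prefix $(x,y_{1:t})$. The event $\{t<c\}$ depends only on the prefix, and $\hat V_t$ is a function of the prefix alone (by the causal structure of the transformer noted in Section~\ref{sec:empirical}). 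So the tower property turns each summand into $\EE\big[\II\{t<c\}\, g(\hat V_t(x,y_{1:t};\theta), p_t(x,y_{1:t}))\big]$. Here $p_t(x,y_{1:t})=\PP_\pi(r(x,y)=1\mid x,y_{1:t})$, and $g(q,p)=-p\log q-(1-p)\log(1-q)$.

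By Proposition~\ref{prop:value_cond_exp} and its $\beta=0$ binary specialization, $p_t(x,y_{1:t})=V_0(x,y_{1:t};\pi)$ at every non-terminal state. Next I will use the pointwise inequality $g(q,p)\ge g(p,p)$, with equality iff $q=p$. This follows from Gibbs' inequality, $g(q,p)-g(p,p)=\mathrm{KL}(\mathrm{Bern}(p)\,\|\,\mathrm{Bern}(q))\ge 0$, or directly from strict convexity of $g(\cdot,p)$ in $q$ with stationary point $q=p$. Hence every summand is bounded below by the corresponding term evaluated at $\hat V_t=V_0$. By realizability, some $\theta^*$ attains these bounds simultaneously for all $t$, so $\theta^*$ is a minimizer and the minimum value equals $\sum_t\EE[\II\{t<c\}H(V_0)]$, with $H$ the binary entropy.

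For the converse (any minimizer recovers $V_0$), suppose $\hat V^*$ attains the same minimum. The sum of nonnegative gaps $\sum_t\EE[\II\{t<c\}\mathrm{KL}(V_0\,\|\,\hat V^*_t)]$ is then zero. Each term is therefore zero almost surely, which gives $\hat V^*_t=V_0$ on every reachable non-terminal state. Terminal states agree by construction, since both equal $0$.

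The main obstacle is the quantifier ``for all states''. The loss only sees states reachable under $\rho$ and $\pi$, so identification can only hold on the support; I will state the conclusion as equality on all reachable states, which is the regime in which the earlier results use $V_0$. A smaller technical point is the edge case $V_0\in\{0,1\}$. There $\log 0$ appears, and I will adopt the conventions $0\log 0=0$ and $\ell=+\infty$ for confident wrong predictions, so that uniqueness of the minimizer of $g(\cdot,p)$ still holds at the endpoints.
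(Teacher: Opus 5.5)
Your proposal is correct and follows essentially the same route as the paper: condition the loss on the prefix $(x,y_{1:t})$, observe that the conditional cross-entropy is minimized at $\PP_\pi(r(x,y)=1\mid x,y_{1:t})=V_0(x,y_{1:t};\pi)$, and note that terminal states match by construction. Your version is more careful than the paper's first-order-condition argument: the Gibbs inequality gives global minimality and uniqueness, realizability is used explicitly so that a single $\theta$ attains all pointwise minima, and restricting the conclusion to reachable states is the correct reading, since the loss does not constrain states outside the support of $\rho$ and $\pi$.
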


\begin{proof}
Since $\mathcal{D}$ is generated by $x \sim \rho$ and $y \sim \pi(\cdot | x)$, the distribution over prefixes $(x, y_{1:t})$ induced by $\mathcal{D}$ matches the distribution of states visited by $\pi$ under $\rho$. We verify that $\hat{V}_t^* = V_0(x, y_{1:t}; \pi)$ for both terminal and non-terminal states.

\textbf{Terminal states} ($t \geq c$): We have $\hat{V}_t^* = 0$ by the architectural definition. From Appendix~\ref{app:value_cond_exp_proof}, $V_0(x, y_{1:t}; \pi) = 0$ for terminal states.

\textbf{Non-terminal states} ($t < c$): The loss (Equation~\ref{eq:probe_loss}) sums only over non-terminal positions. The expected loss conditional on $(x, y_{1:t})$ is the standard cross-entropy:
\begin{equation*}
\mathbb{E}\left[-r(x,y) \log \hat{V}_t - (1-r(x,y)) \log(1-\hat{V}_t) \mid x, y_{1:t}\right].
\end{equation*}
Taking the derivative with respect to $\hat{V}_t$ and setting to zero:
\begin{equation*}
\mathbb{E}\left[-\frac{r(x,y)}{\hat{V}_t} + \frac{1-r(x,y)}{1-\hat{V}_t} \mid x, y_{1:t}\right] = 0.
\end{equation*}
Let $p = \mathbb{E}[r(x,y) \mid x, y_{1:t}]$ denote the conditional probability of correctness. Rearranging:
\begin{equation*}
\frac{1-p}{1-\hat{V}_t} = \frac{p}{\hat{V}_t} \quad \Rightarrow \quad \hat{V}_t^* = p = \mathbb{P}_\pi(r(x,y) = 1 \mid x, y_{1:t}) = V_0(x, y_{1:t}; \pi),
\end{equation*}
where the final equality follows from Equation~\ref{eq:value_as_prob}.
\end{proof}

\subsection{MSE Recovers Value Function}
\label{app:mse_proof}

\begin{proposition}[MSE Recovers Value Function]
\label{prop:mse_optimal}
For any $\beta \geq 0$, let $\mathcal{D} = \{(x_i, y_i)\}_{i=1}^N$ be a dataset of trajectories where $x_i \sim \rho$ and $y_i \sim \pi(\cdot | x_i)$. Consider the mean squared error objective:
\begin{equation*}
\mathcal{L}_{\mathrm{MSE}}(\theta) = \mathbb{E}_{(x,y) \sim \mathcal{D}}\left[\sum_{t=0}^{c-1} \left(\hat{V}_t(x, y_{1:t}; \theta) - R_\beta(x, y)\right)^2\right],
\end{equation*}
where $R_\beta(x, y) = r(x,y) - \beta \log \frac{\pi(y|x)}{\pi_{\mathrm{ref}}(y|x)}$. Let $\hat{V}_t^*$ denote the minimizer. Then $\hat{V}_t^*(x, y_{1:t}) = V_\beta(x, y_{1:t}; \pi)$ for all non-terminal states.
\end{proposition}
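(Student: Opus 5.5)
The strategy is to reduce the claim to the classical fact that squared loss is minimized pointwise by the conditional mean, and then to identify that conditional mean with $V_\beta$ through Proposition~\ref{prop:value_cond_exp}. As in the BCE case (Proposition~\ref{prop:bce_optimal}), I read the statement at the population level, with $\mathcal{D}$ standing for the trajectory distribution $x\sim\rho$, $y\sim\pi(\cdot\mid x)$. I also assume realizability, $V_\beta(\cdot,\cdot;\pi)\in\{\hat V_t\}_\theta$, as announced in Section~\ref{sec:empirical}. Under these readings, the joint law of the pair (prefix $(x,y_{1:t})$, label $R_\beta(x,y)$) at any non-terminal position $t<c$ is the law of the visited state together with the eventual terminal reward under $\pi$.

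\textbf{Step 1: decompose the loss by state.} Exchanging the sum over positions with the expectation, I will write $\mathcal{L}_{\mathrm{MSE}}(\theta)=\sum_{t}\EE\big[\II\{t<c\}(\hat V_t(x,y_{1:t};\theta)-R_\beta(x,y))^2\big]$. The indicator $\II\{t<c\}$ is a function of the prefix $y_{1:t}$, since it records whether $\eos$ has already appeared. Conditioning on $(x,y_{1:t})$ via the tower property therefore turns each summand into an expectation over reachable non-terminal states of $g(\hat V_t)$, where
\[g(v)=\EE\big[(v-R_\beta(x,y))^2\mid x,y_{1:t}\big].\]

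\textbf{Step 2: pointwise minimization.} I will use the bias--variance identity
\[g(v)=\big(v-m\big)^2+\mathrm{Var}\big(R_\beta\mid x,y_{1:t}\big),\qquad m=\EE[R_\beta(x,y)\mid x,y_{1:t}].\]
This shows $g$ is uniquely minimized at $v=m$. By Proposition~\ref{prop:value_cond_exp}, for non-terminal states $m=V_\beta(x,y_{1:t};\pi)$. Consequently the global lower bound $\sum_t\EE[\II\{t<c\}\mathrm{Var}(R_\beta\mid x,y_{1:t})]$ is attained by $V_\beta$, and realizability puts $V_\beta$ inside the hypothesis class. Any minimizer must then make every squared-bias term vanish, i.e.\ $\hat V_t^*=V_\beta$ on every reachable non-terminal state.

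\textbf{Main obstacle: technical conditions rather than the core argument.} Two points need care. First, $R_\beta$ must have a finite second moment so that $g$ is finite. With $\beta>0$ the log-ratio is finite only if $\pi\ll\pi_{\mathrm{ref}}$ on reachable trajectories, so I will assume this, together with bounded $r$ and finite $T$. Second, uniqueness holds only on states with positive probability. I will therefore state the conclusion for all reachable non-terminal states, matching the convention of Lemma~\ref{lemma:value_dom}; off-support values are unconstrained by the loss.
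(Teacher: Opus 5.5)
Your proposal is correct and follows the paper's proof. Both condition on the non-terminal prefix $(x,y_{1:t})$, use that squared loss is minimized by the conditional mean $\EE[R_\beta(x,y)\mid x,y_{1:t}]$, and identify that mean with $V_\beta(x,y_{1:t};\pi)$ via Proposition~\ref{prop:value_cond_exp}; your bias--variance identity (in place of the paper's first-order condition) and your explicit realizability, finite-second-moment and reachable-states caveats make rigorous steps that the paper leaves implicit.
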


\begin{proof}
Since $\mathcal{D}$ is generated by $x \sim \rho$ and $y \sim \pi(\cdot | x)$, the distribution over prefixes $(x, y_{1:t})$ induced by $\mathcal{D}$ matches the distribution of states visited by $\pi$ under $\rho$. The expected loss at position $t < c$ conditional on $(x, y_{1:t})$ is:
$$\mathbb{E}\left[\left(\hat{V}_t - R_\beta(x, y)\right)^2 \mid x, y_{1:t}\right].$$
Taking the derivative with respect to $\hat{V}_t$ and setting to zero:
$$2\,\mathbb{E}\left[\hat{V}_t - R_\beta(x, y) \mid x, y_{1:t}\right] = 0.$$
Rearranging:
$$\hat{V}_t^* = \mathbb{E}\left[R_\beta(x, y) \mid x, y_{1:t}\right] = V_\beta(x, y_{1:t}; \pi),$$
where the final equality follows from Proposition~\ref{prop:value_cond_exp}.
\end{proof}

\section{Algorithm Pseudocode}

\begin{algorithm}[H]
\caption{Training the Value Function Estimator}
\label{alg:value_training}
\begin{algorithmic}[1]
\REQUIRE Policy $\pi$, dataset $\mathcal{D} = \{(x_i, y_i)\}_{i=1}^N$ with $x_i \sim \rho$, $y_i \sim \pi(\cdot \mid x_i)$, reward function $r: \mathcal{X} \times \mathcal{V}^* \to \{0,1\}$
\ENSURE Trained estimator $\hat{V}_\theta$ approximating $V_0(x, y_{1:t}; \pi)$
\STATE Initialize parameters $\theta$
\FOR{each epoch}
    \FOR{each $(x, y) \in \mathcal{D}$}
        \STATE $c \gets \min\{t : y_t = \eos\}$ \COMMENT{Completion time}
        \FOR{$t = 0, \ldots, c - 1$}
            \STATE Extract hidden state $h_t$ from $\pi$ at state $(x, y_{1:t})$
            \STATE $\hat{V}_t \gets \mathrm{MLP}_\theta(h_t)$ \COMMENT{Estimate $V_0(x, y_{1:t}; \pi)$}
            \STATE $\mathcal{L} \gets \mathcal{L} + \ell(\hat{V}_t, r(x, y))$ \COMMENT{Eq.~\ref{eq:probe_loss}}
        \ENDFOR
    \ENDFOR
    \STATE Update $\theta$ via gradient descent on $\mathcal{L}$
\ENDFOR
\STATE \textbf{return} $\hat{V}_\theta$
\end{algorithmic}
\end{algorithm}

\begin{algorithm}[H]
\caption{Dynamic Value-Thresholding at Inference}
\label{alg:dynamic_abstention_inference}
\begin{algorithmic}[1]
\REQUIRE Policy $\pi$, trained estimator $\hat{V}_\theta$, abstention reward $r_\bot \in (0,1]$, prompt $x$, max length $T$
\ENSURE Response $y \in \mathcal{V}^*$ or abstention $\bot$
\STATE $t \gets 0$
\WHILE{$t < T$}
    \STATE Run forward pass of $\pi$ on $(x, y_{1:t})$; let $h_t$ be the hidden state
    \STATE $\hat{V}_t \gets \mathrm{MLP}_\theta(h_t)$ \COMMENT{Estimate $V_0(x, y_{1:t}; \pi)$}
    \IF{$\hat{V}_t < r_\bot$}
        \STATE \textbf{return} $\bot$ \COMMENT{Abstain per Eq.~\ref{eq:pi_tilde}}
    \ENDIF
    \STATE Sample $y_{t+1} \sim \pi(\cdot \mid x, y_{1:t})$
    \IF{$y_{t+1} = \eos$}
        \STATE \textbf{return} $y_{1:t+1}$
    \ENDIF
    \STATE $t \gets t + 1$
\ENDWHILE
\STATE \textbf{return} $y_{1:T}$ \COMMENT{Max length reached}
\end{algorithmic}
\end{algorithm}

\section{Reward Objective Evaluation Methodology}
\label{app:calibration}

Section~\ref{sec:exp_reward} evaluates methods on the reward objective $J_\beta = \alpha \cdot r_\bot + (1-\alpha) \cdot S$. Here we detail the methodology and the calibration analysis motivating it.

\subsection{The Evaluation Challenge: Relating Thresholds to $r_\bot$}
\label{app:eval_challenge}

The theoretical framework specifies $r_\bot$ as a threshold on the true value function: the optimal policy abstains when $V_\beta(x, y_{1:t}; \pi) < r_\bot$. Since $r_\bot$ operates on true values, it has a direct interpretation: it is the minimum expected future reward at which continuing generation is preferred to the fallback.

In practice, we do not have access to $V_\beta$. Instead, we threshold an estimate: abstain when $\hat{V}_t(x, y_{1:t}) < T_\alpha$, where $T_\alpha$ is chosen to achieve abstention rate $\alpha$. Because $\hat{V}_t \neq V_\beta$ in general, the threshold $T_\alpha$ on the estimate does not directly correspond to a threshold $r_\bot$ on the true value.

For dynamic abstention, let $\tau = \min\{t : \hat{V}_t < T_\alpha\}$ denote the abstention time: the first position at which the estimated value falls below the threshold. The value at abstention time, $\hat{V}_\tau$, is the natural quantity to calibrate: it represents the model's estimated expected reward at the moment the abstention decision is triggered. Crucially, by construction $\hat{V}_\tau < T_\alpha$ and $\hat{V}_\tau \approx T_\alpha$ (since abstention occurs at the first crossing). This tight relationship between the abstention-time value and the threshold simplifies the calibration problem relative to alternative formulations.

The relationship between $T_\alpha$ and the effective $r_\bot$ depends on the calibration quality of $\hat{V}_\tau$. If $\hat{V}_\tau$ is well-calibrated---meaning $\hat{V}_\tau \approx \mathbb{P}(\text{correct} \mid x, y_{1:\tau})$---then $T_\alpha$ directly estimates the effective $r_\bot$. When calibration is imperfect, methods with identical selective accuracy but different calibration properties will appear to achieve different rewards.

\subsection{Calibration Analysis}
\label{app:calibration_analysis}

For dynamic abstention, the relevant quantity for calibration is the value at abstention time $\hat{V}_\tau$, not the pointwise estimates themselves. Since abstention occurs when $\hat{V}_\tau$ first drops below the threshold $T_\alpha$, we have $\hat{V}_\tau \approx T_\alpha$ by construction. This tight coupling means that the calibration of abstention-time values directly determines how well $T_\alpha$ estimates the effective $r_\bot$.

Figure~\ref{fig:calibration_comparison} compares the calibration of the baseline method (value at $t=0$) against the abstention-time values for our method. Both the prompt-based value estimate $\hat{V}_0$ and the abstention-time value $\hat{V}_\tau$ show some miscalibration across all settings. 

\begin{figure}[h]
    \centering
    \includegraphics[width=\linewidth]{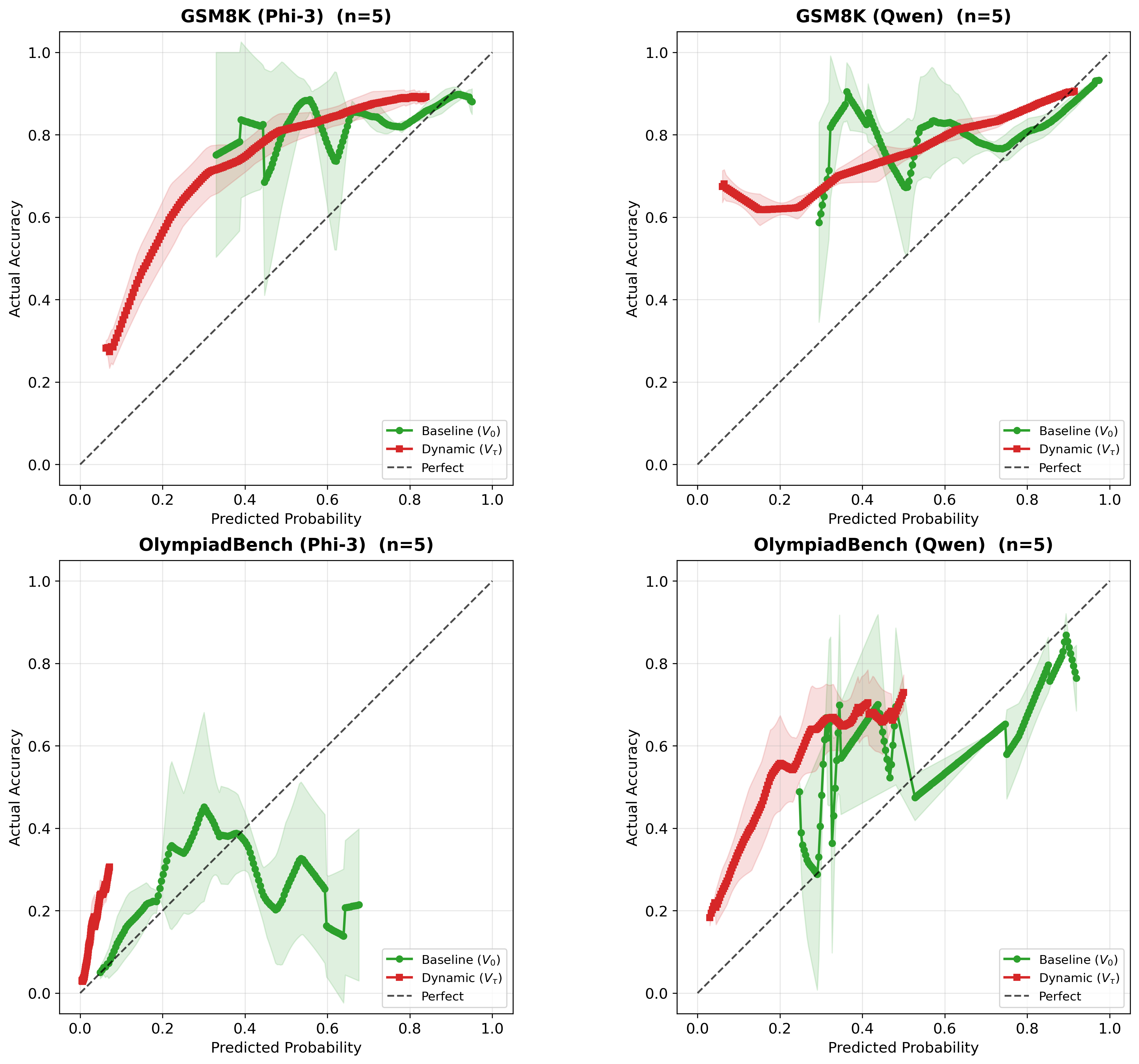}
    \caption{Calibration comparison between baseline (value at $t=0$) and dynamic abstention (value at abstention time $\hat{V}_\tau$).}
    \label{fig:calibration_comparison}
\end{figure}

\subsection{Recalibration Methodology}
\label{app:recalibration}

To enable fair comparison across methods with different calibration properties, we estimate the effective $r_\bot$ corresponding to each method's threshold $T_\alpha$. Recall that for $\beta = 0$ with binary rewards, the true value function equals the probability of correctness: $V_0(x, y_{1:t}; \pi) = \mathbb{P}(r(x,y) = 1 \mid x, y_{1:t})$. The effective $r_\bot$ at threshold $T_\alpha$ is therefore the true probability of correctness at the point where abstention is triggered.

We seek a transformation $g$ from estimated values to probabilities. Two properties are essential:
\begin{enumerate}
    \item \textbf{Calibration}: For any probability $p$, among samples with $g(\hat{V}) \approx p$, the empirical frequency of correctness should be approximately $p$. This ensures that $g(T_\alpha)$ correctly estimates the true probability of correctness at the point where abstention is triggered.
    \item \textbf{Monotonicity}: $g$ should be non-decreasing. This ensures coherence: higher estimated values correspond to higher true probabilities.
\end{enumerate}

Isotonic regression provides a transformation satisfying both properties. For baseline methods that make decisions at $t=0$, we fit isotonic regression on $(\hat{V}_0, \text{correctness})$ pairs across all samples. For dynamic abstention, we fit isotonic regression on $(\hat{V}_\tau, \text{correctness})$ pairs, where $\hat{V}_\tau$ is the value at the abstention time for samples that abstain at threshold $T_\alpha$. Since $\hat{V}_\tau \approx T_\alpha$ by construction, this calibration is performed separately for each threshold level to capture any threshold-dependent calibration effects. Let $g_{M,\alpha}$ denote the calibration function for method $M$ at threshold $T_\alpha$. We fit:
\begin{equation}
    g_{M,\alpha} = \operatorname*{argmin}_{g \in \mathcal{G}_{\uparrow}} \sum_{i=1}^{N_\alpha} (y_i - g(\hat{V}_i))^2,
\end{equation}
where $\mathcal{G}_{\uparrow}$ is the class of monotonically non-decreasing functions and the sum is over samples relevant to threshold $T_\alpha$.

For a chosen abstention rate $\alpha$, we estimate the reward objective as:
\begin{equation}
    \hat{J}_{\text{calib}}(\alpha) = (1-\alpha) \cdot \hat{S}(\alpha) + \alpha \cdot g_{M,\alpha}(T_\alpha),
\end{equation}
where $\hat{S}(\alpha)$ is the empirical selective accuracy at rate $\alpha$, and $g_{M,\alpha}(T_\alpha)$ estimates the effective $r_\bot$.

\paragraph{Why the $\hat{r}_\bot$ range is limited in Figure~\ref{fig:reward_recalibrated}.} The x-axis in Figure~\ref{fig:reward_recalibrated} does not span $[0,1]$ for several reasons:

\begin{enumerate}
    \item \textbf{We test a finite set of abstention rates.} We evaluate at $\alpha \in \{0.02, 0.03, \ldots, 0.98\}$, not the full interval $[0,1]$.
    
    \item \textbf{Each abstention rate determines a threshold.} For each $\alpha$, we find the threshold $T_\alpha$ such that exactly fraction $\alpha$ of samples abstain.
    
    \item \textbf{For each threshold, we fit isotonic regression separately.} Per the methodology above, calibration is performed independently for each threshold on samples that abstain at that threshold. This means different thresholds yield different calibration functions $g_{M,\alpha}$. Since each calibration is fit independently, there is no monotonicity guarantee across thresholds—the resulting $\hat{r}_\bot$ values reflect the empirical relationship between values and correctness for each threshold's specific set of abstaining samples.
    
    \item \textbf{The range of $\hat{r}_\bot$ is determined by empirical accuracies at abstention boundaries.} For each threshold $T_\alpha$, the calibrated value $\hat{r}_\bot = g_{M,\alpha}(T_\alpha)$ estimates the true probability of correctness for samples near the abstention boundary. This is fundamentally bounded by the empirical accuracy of these samples:
    \begin{itemize}
        \item \textit{Lower bound:} At low abstention rates (e.g., $\alpha = 0.02$), only the lowest-value samples abstain. Even these worst-performing samples may have non-zero accuracy, so $\hat{r}_\bot > 0$.
        \item \textit{Upper bound:} At high abstention rates (e.g., $\alpha = 0.98$), samples near the threshold have high value estimates but are not perfectly accurate. Thus $\hat{r}_\bot < 1$ for thresholds where we have sufficient data to estimate calibration reliably.
    \end{itemize}
\end{enumerate}

\paragraph{Exclusion of boundary artifacts.} At high abstention thresholds, samples near the threshold boundary (those that just barely abstain) have high value estimates and correspondingly high accuracy. Isotonic regression produces piecewise-constant fits; with finite data, if all samples in the highest fitted bin happen to be correct, the function plateaus at exactly 1. This is an artifact of fitting a piecewise-constant function to finite data at the boundary of the observed range, not a failure of the calibration methodology. We exclude points where $\hat{r}_\bot = 1$ because we cannot reliably distinguish $\hat{r}_\bot = 0.95$ from $\hat{r}_\bot = 1$ in this regime.

The x-axis range in Figure~\ref{fig:reward_recalibrated} thus reflects the achievable accuracy-abstention tradeoff: it displays precisely the region where we have valid empirical support for comparing methods.

\section{Threshold Calibration Across Data Splits}
\label{app:cross_split_calibration}

A practical concern is whether the threshold $T_\alpha$ --- calibrated on a small held-out set to achieve a target abstention rate $\alpha$ --- transfers reliably to new data. We verify this by randomly splitting the test set into two halves 20 times per seed: calibrating $T_\alpha$ on one half and measuring the achieved abstention rate on the other. Figure~\ref{fig:cross_split_calibration} plots achieved versus target abstention rate across all model--dataset pairs. The curves lie almost exactly on the diagonal, with mean absolute error below 1.2 percentage points in all settings, confirming that the threshold is straightforward to set in practice.

\begin{figure}[h]
    \centering
    \includegraphics[width=\linewidth]{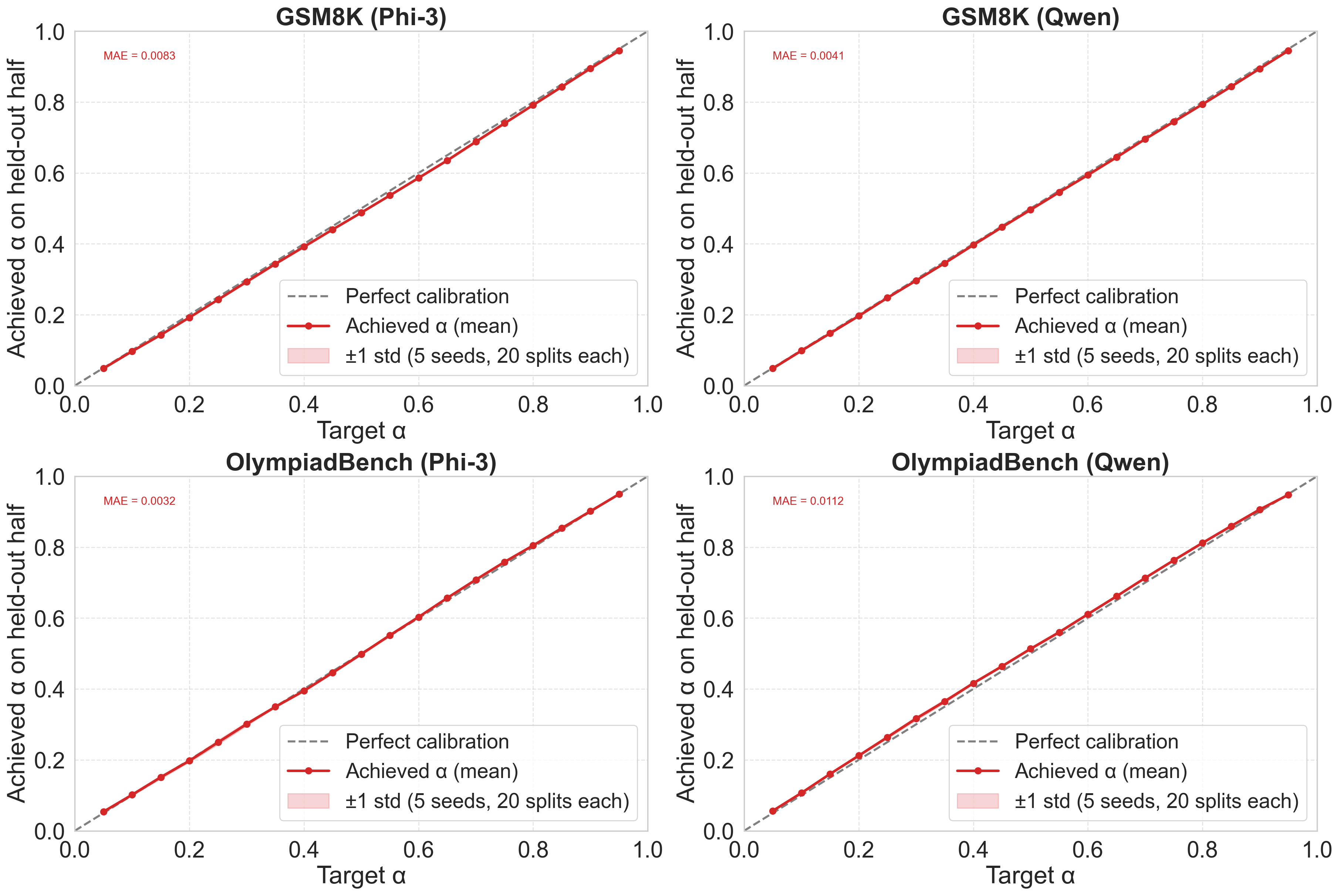}
    \caption{Achieved abstention rate on held-out split versus target abstention rate. Each curve is averaged over 5 seeds $\times$ 20 random splits; shaded regions show $\pm1$ standard deviation. Mean absolute error (MAE) is annotated per panel.}
    \label{fig:cross_split_calibration}
\end{figure}

\section{Abstention Timing}
\label{app:timing}

Figure~\ref{fig:mean_tokens} shows the mean and median number of tokens generated before abstention as a function of abstention rate $\alpha$. At moderate abstention rates ($\alpha = 0.6$--$0.8$), abstention occurs at a mean of 16--34 tokens across settings, justifying $k=20$ as a representative fixed-position baseline while illustrating that no single $k$ can match the dynamic method across all operating points.

Figure~\ref{fig:tau_fraction} shows the same quantity expressed as a fraction of the full trace length $c$. Abstention consistently occurs in the first half of generation: even at low abstention rates ($\alpha = 0.1$), the mean $\tau/c$ is below 0.5 across all settings. As $\alpha$ increases, abstention occurs progressively earlier, with mean $\tau/c$ below 0.15 at $\alpha = 0.9$. This confirms that the method identifies unpromising traces early and terminates them well before the full chain-of-thought is completed.

\begin{figure}[h]
    \centering
    \includegraphics[width=\linewidth]{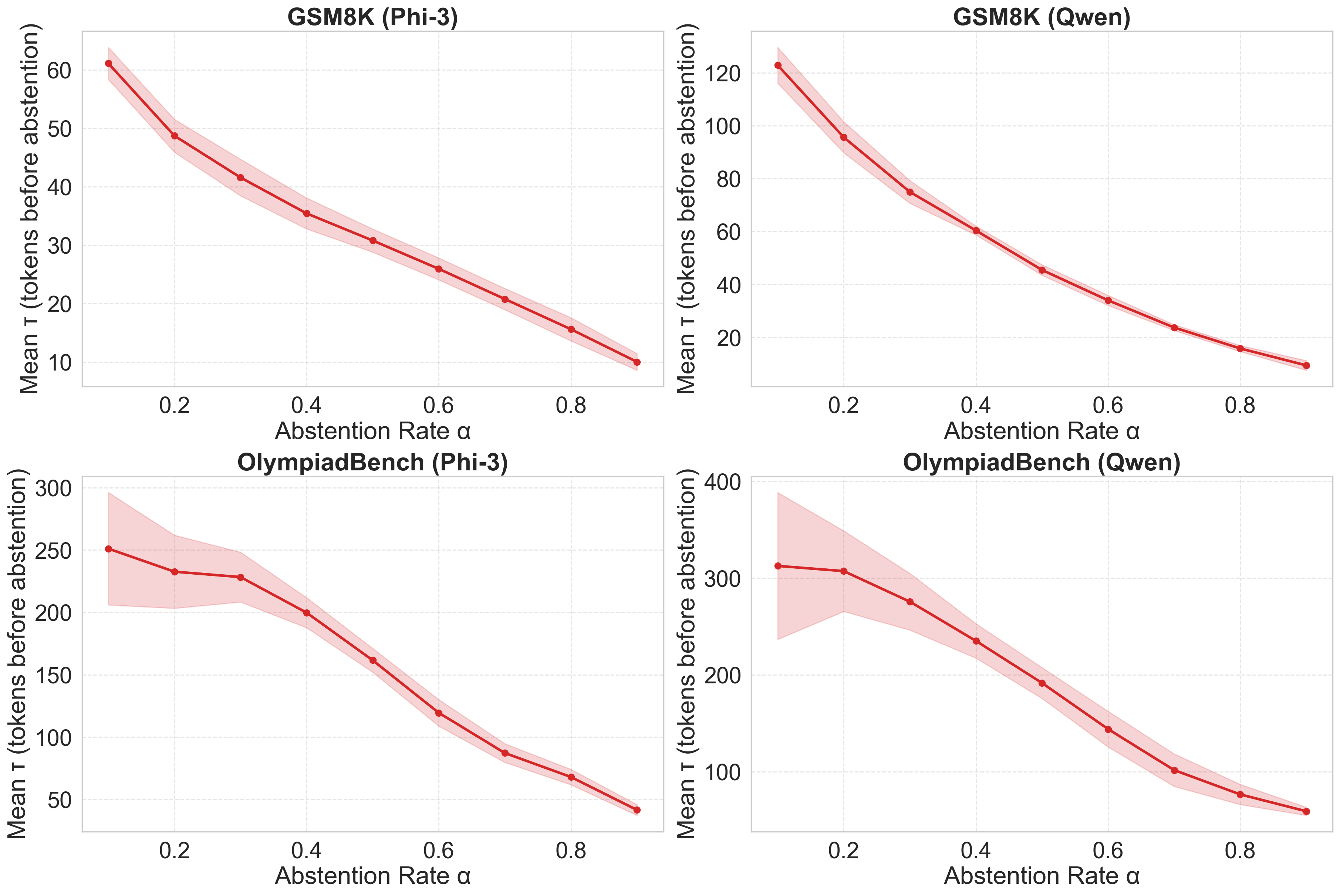}
    \caption{Mean and median tokens before abstention versus abstention rate. The range shifts substantially with $\alpha$, illustrating why no single fixed position $k$ can match the dynamic method across operating points.}
    \label{fig:mean_tokens}
\end{figure}

\begin{figure}[h]
    \centering
    \includegraphics[width=\linewidth]{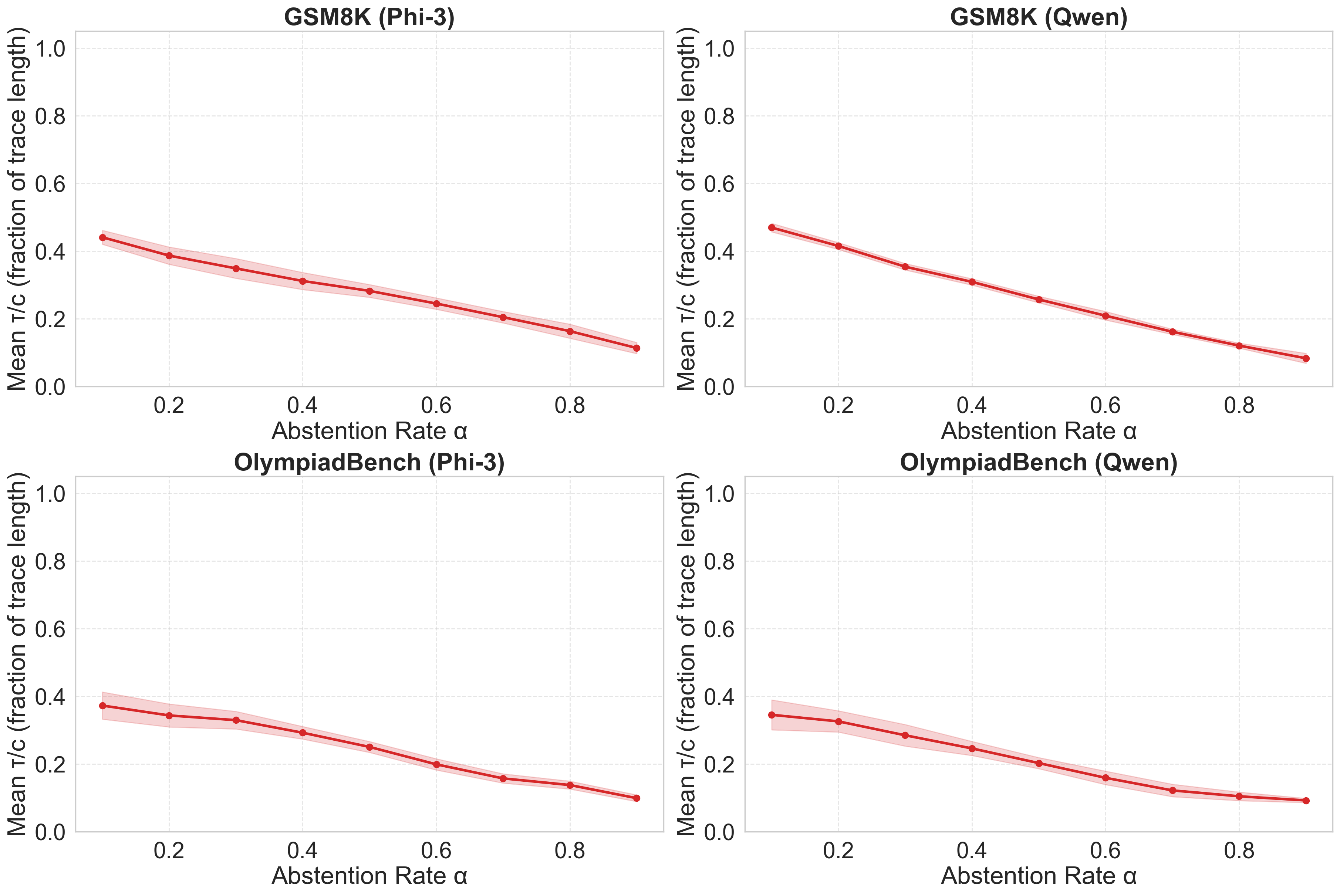}
    \caption{Mean abstention time $\tau$ as a fraction of full trace length $c$, versus abstention rate. Abstention consistently occurs in the first half of generation across all settings.}
    \label{fig:tau_fraction}
\end{figure}

\section{Robustness of the Value Estimator}
\label{app:robustness}

When the abstention threshold is set by targeting a desired abstention rate $\alpha$, the threshold is the $\alpha$-quantile of per-sample minimum trajectory values. Under this protocol, selective accuracy depends only on the \emph{ranking} induced by $\hat{V}$, not on whether its values are calibrated probabilities. We verify two consequences of this.

\paragraph{Invariance to monotone reparametrizations.} Applying a strictly monotone transformation $g$ to $\hat{V}$ leaves all rankings unchanged and therefore leaves selective accuracy exactly unchanged. Figure~\ref{fig:monotone_bias} confirms this: applying $g(v) = v^2$, $g(v) = \sqrt{v}$, and $g(v) = \sigma(5(v-0.5))$ produces curves that overlap exactly across all settings.

\begin{figure}[h]
    \centering
    \includegraphics[width=\linewidth]{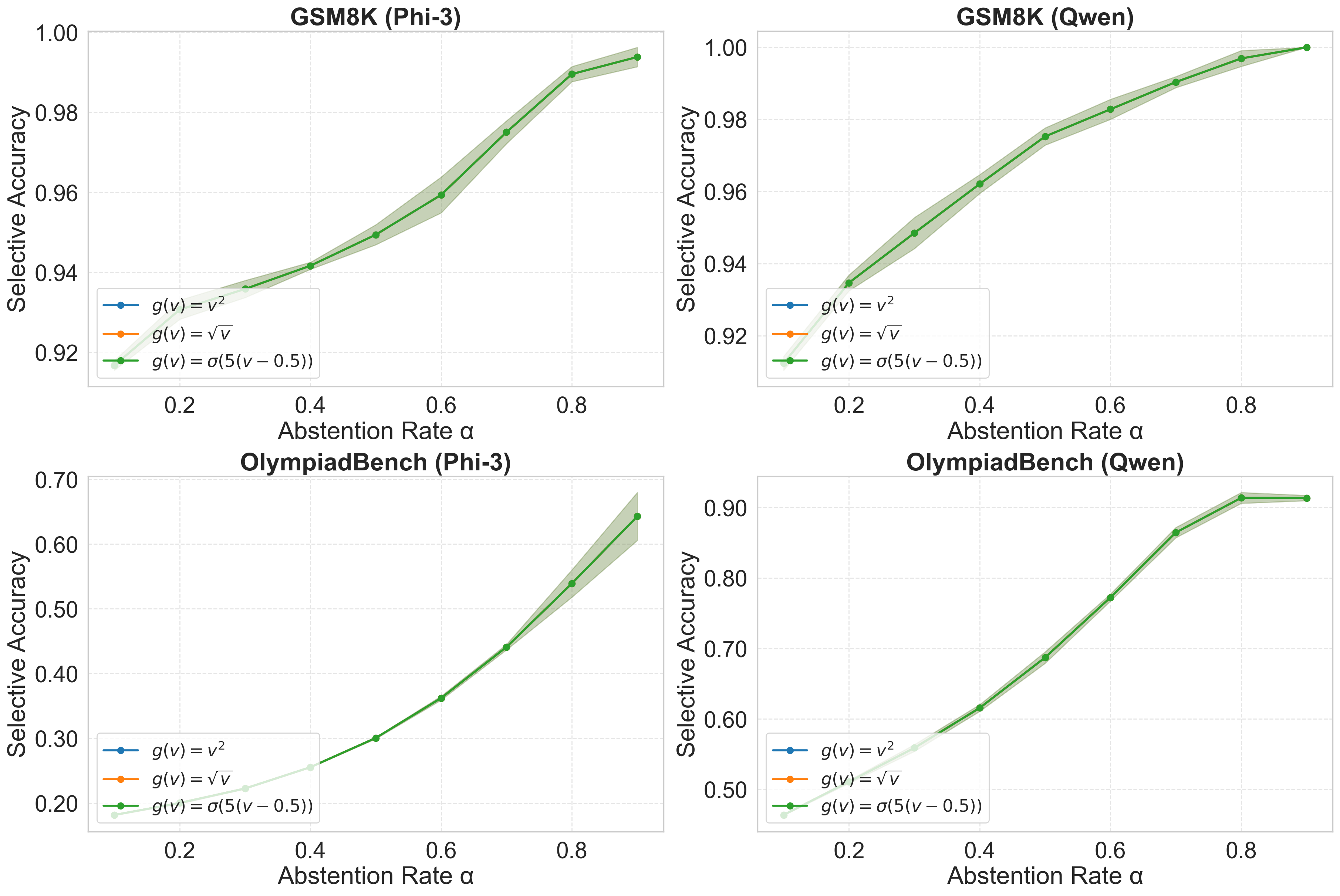}
    \caption{Selective accuracy under monotone reparametrizations of $\hat{V}$. All three transforms produce identical curves, confirming exact invariance.}
    \label{fig:monotone_bias}
\end{figure}

\paragraph{Degradation under additive noise.} We add Gaussian noise $\mathcal{N}(0, \sigma^2)$ to $\hat{V}$, expressing $\sigma$ as a multiple of the standard deviation of per-sample minimum trajectory values to make the scale comparable across datasets. Figure~\ref{fig:additive_noise} shows selective accuracy as noise increases. GSM8K is highly robust (${\approx}1$--$2\%$ drop at $\sigma = 1{\times}\text{std}$; ${\approx}7$--$9\%$ at $\sigma = 2{\times}\text{std}$). OlympiadBench shows higher sensitivity, reflecting that the harder task leaves less margin in the value function before perturbation disrupts rankings. In all settings the method retains meaningful gains over the no-abstention baseline at all noise levels tested.

\begin{figure}[h]
    \centering
    \includegraphics[width=\linewidth]{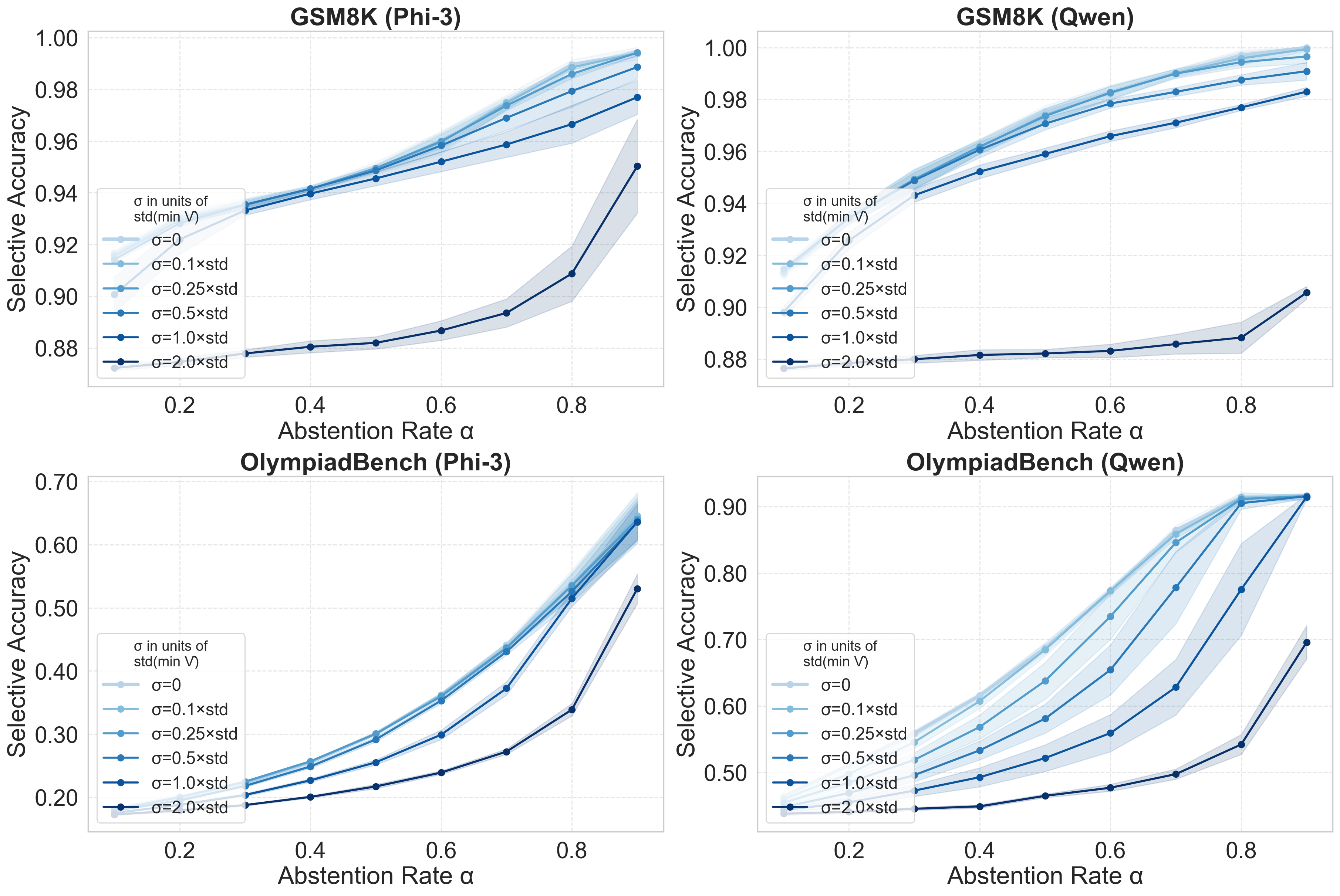}
    \caption{Selective accuracy under additive Gaussian noise to $\hat{V}$. Noise magnitude $\sigma$ is expressed in units of the standard deviation of per-sample minimum trajectory values. Performance degrades gracefully; gains over no-abstention are retained at all noise levels.}
    \label{fig:additive_noise}
\end{figure}

\section{Further Implementation and Experimental Setup Details}
\label{app:implementation}
\textbf{Experimental Setup:} We used an 80/20 train-test split with a fixed random seed of 42 for reproducibility across all experiments. The tokenwise value head method was trained for 3 epochs with learning rate $1 \times
10^{-4}$, batch size 8 (reduced to 2 for larger models), and AdamW optimizer with 0.1 dropout. The first-token baseline used identical hyperparameters and computed loss exclusively on the first
output token. The LoRA abstention model employed 3 epochs, batch size 2, gradient accumulation over 10 steps, LoRA rank 16 with $\alpha=32$, and targeted all projection layers (q\_proj, v\_proj, k\_proj, o\_proj,
gate\_proj, up\_proj, down\_proj). All methods used abstention thresholds of [0.3, 0.5, 0.7]. The datasets used RTP (Reason-Then-Predict) format where models generate
reasoning followed by predictions, with correctness labels derived from multiple model outputs rather than explicit self-verification queries.

\textbf{Hardware and Training Times:} All experiments were conducted on CUDA-enabled GPUs using mixed precision training (fp16 for most models, bf16 for Phi-3). Models were trained with frozen base parameters except for
LoRA experiments. Training times varied significantly by method: the tokenwise value head required approximately 10-40 hours total (2-8 hours per epoch
depending on batch size and hardware), the first-token baseline completed in 2-4 hours, and LoRA abstention training took 9-18 hours. We evaluated on multiple model families including Phi-3-small-8k-instruct,
and Mistral-7B-Instruct-v0.3 across mathematical reasoning datasets (OlympiadMath, OlympiadPhysics, GSM8K).

\section{Improvement Over No Abstention as a Function of Abstention Rate}
\label{app:improvement}

Figure~\ref{fig:reward_recalibrated_vs_abs} presents the magnitude of improvement in reward over no abstention, as a function of abstention rate $\alpha\in\{0.02, 0.04,...,0.98\}$. We see that estimated reward rises with the abstention rate, supporting the intuition behind Proposition~\ref{prop:lipschitz}.

\begin{figure}[t]
    \centering
    \includegraphics[width=\linewidth]{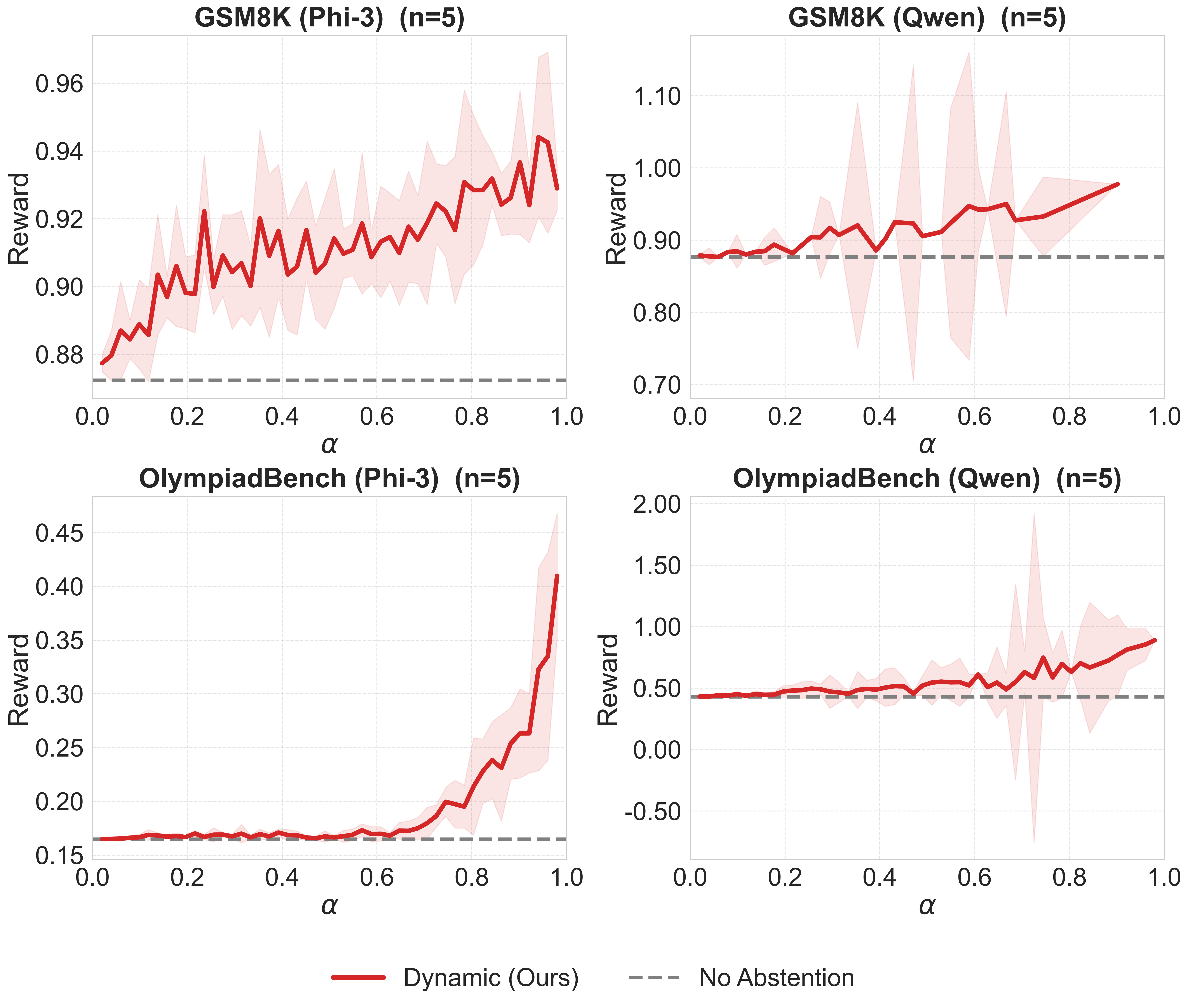}
    \caption{Estimated reward versus abstention rate.}
    \label{fig:reward_recalibrated_vs_abs}
\end{figure}


\section{Abstention Precision}
\label{app:precision}

A complementary view of selective accuracy is the \emph{precision} of the abstention decision: $\text{P}(\text{incorrect} \mid \text{abstained})$, the fraction of abstained samples that would have been answered incorrectly. A method with high precision selectively targets incorrect traces; a method abstaining uniformly at random achieves only the base error rate.

Figure~\ref{fig:precision} reports this metric across abstention rates for all methods, with the base error rate shown as a dashed reference. Dynamic abstention consistently exceeds both the base error rate and all baselines across all settings and abstention rates, for both models and both datasets. This confirms that abstentions are not chosen indiscriminately but are concentrated on traces the probe identifies as likely to fail.

\begin{figure}[h]
    \centering
    \includegraphics[width=\linewidth]{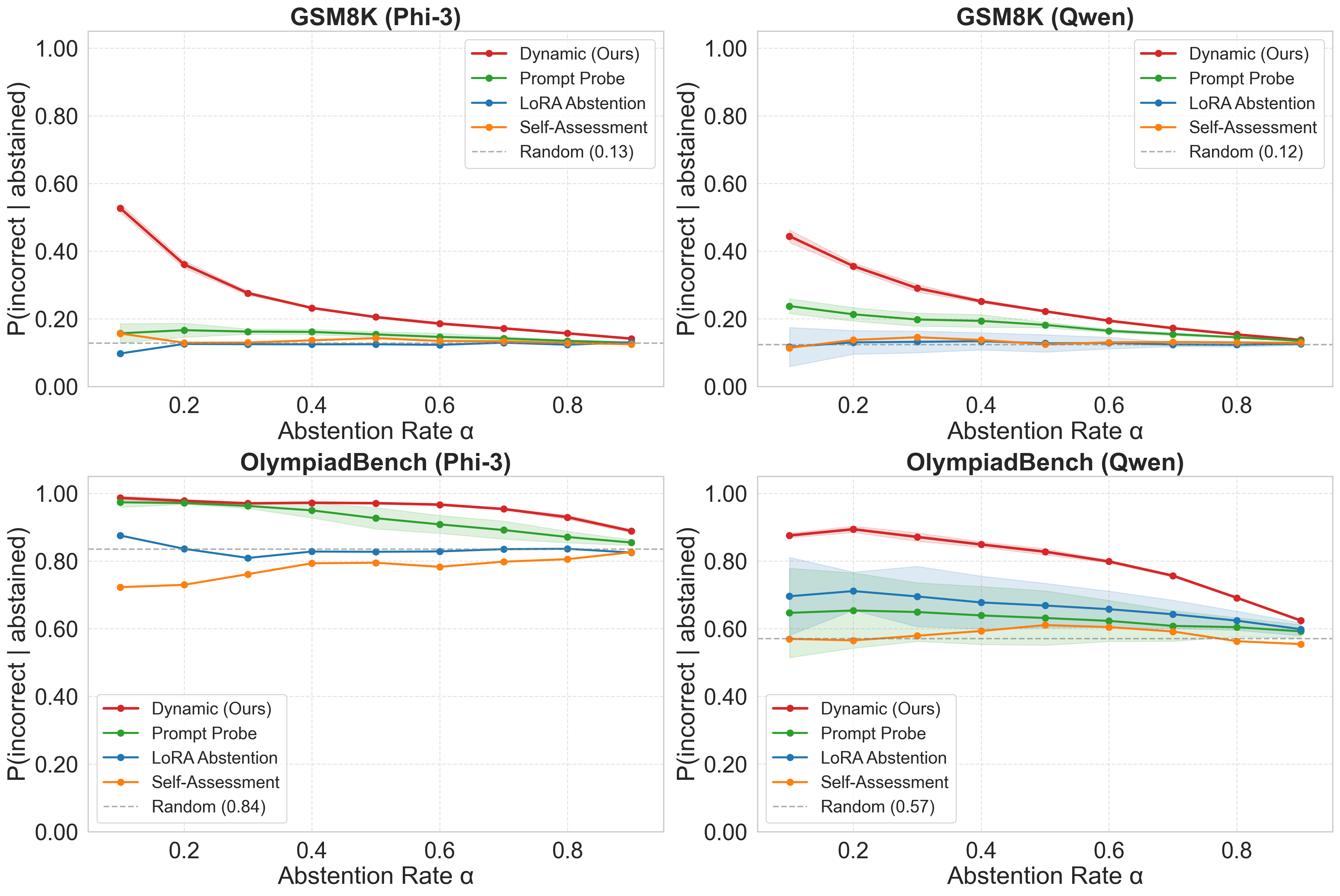}
    \caption{Precision of abstention: $\text{P}(\text{incorrect} \mid \text{abstained})$ versus abstention rate. The dashed line shows the base error rate (random abstention baseline). Dynamic abstention targets incorrect traces more precisely than all baselines across all settings.}
    \label{fig:precision}
\end{figure}

\section{Fixed-Position Baseline Comparison}
\label{app:constant_step}

A natural baseline for dynamic abstention is a method that makes the abstention decision at a single fixed token position $k$, rather than dynamically. This corresponds to $f(\pi; k)$ from Definition~\ref{def:fixed-pos} evaluated mid-generation. We compare against fixed-position versions of the constant step probe, LoRA abstention, and self-assessment baselines, all evaluated at the same position $k$ so that every fixed-position method benefits equally from observing $k$ tokens of partial generation. The MLP probe and the LoRA abstention head are retrained \emph{specifically} at position $k$, rather than reusing models trained across all positions, ensuring a fair comparison.

\paragraph{Arbitrariness of $k$.} 
A fundamental difficulty with fixed-position baselines is that there is no principled way to choose $k$: the optimal position varies across datasets, models, and desired abstention rates, and any single choice is inherently arbitrary. For mathematical reasoning, abstention rates of $60$--$80\%$ represent a natural operating regime, where the model answers only the questions it is most confident about and routes the remainder to a stronger solver or a human reviewer. At these abstention rates on GSM8K, the dynamic method abstains at a mean of $16$--$34$ tokens (median $9$--$14$ tokens) across Phi-3 and Qwen; this range shifts substantially with $\alpha$, so no fixed $k$ can match the dynamic method's operating point across all settings simultaneously. We choose $k=20$ for GSM8K as a round number within this range, and $k=100$ for OlympiadBench to reflect the substantially longer reasoning traces on harder problems. Both choices are arbitrary: a practitioner deploying a fixed-position baseline would face exactly this problem, with no data-independent criterion for selecting $k$. This arbitrariness is itself an argument for the dynamic approach.

\paragraph{GSM8K results ($k=20$).}
All experiments in this section use $5$ seeds and report $\pm 1$ standard
deviation. Even on GSM8K, the dataset for which our method shows the smallest
gains over baselines in the main experiments, the dynamic method outperforms
all fixed-position baselines at every abstention rate. The fixed-position
methods plateau near the no-abstention accuracy regardless of $\alpha$.
Specifically, at $\alpha = 0.7$, the best fixed-position baseline achieves
$0.871$ on Phi-3 and $0.885$ on Qwen, while the dynamic method reaches
$0.975$ and $0.990$, respectively. This gap widens as $\alpha$ increases,
precisely because the dynamic method can adapt its stopping point to each
trace while fixed-position methods cannot.

\paragraph{OlympiadBench results ($k=100$).}
The dynamic method outperforms the fixed-position baseline across all
abstention rates on both Phi-3 and Qwen. The one setting in which the
fixed-position baseline shows meaningful improvement over input-only methods
is OlympiadBench with Qwen, where the longer traces give a fixed cutoff more
opportunity to coincide with informative moments in the trajectory; even
there, it remains substantially below the dynamic method. This is consistent
with our theoretical analysis: fixed-position abstention can help when a
single well-chosen $k$ aligns with the trajectory's informative moment, but
cannot match a rule that adapts per-trace.

Full per-rate accuracy curves and tabulated results for both datasets are
included in the supplementary material.

\section{Toxicity Avoidance Experiment Details}
\label{app:toxicity}

\paragraph{Data.} We sample $500$ sentence beginnings uniformly at random
from RealToxicityPrompts \citep{gehman2020realtoxicityprompts} for training
and an independent $500$ for test.

\paragraph{Labeling.} For each sentence (beginning concatenated with model
continuation), we query Claude with the template
``\texttt{Sentence: <full sentence>. Is this sentence toxic? Answer
Yes/No.}'' The binary non-toxicity label is $1$ if the answer is
\texttt{No} and $0$ if \texttt{Yes}.

\paragraph{Model and generation.} We use Qwen2.5-Math-7B-Instruct with generation
hyperparameters matching Section~\ref{sec:exp_selective_acc}. Abstention
triggers a fixed safe refusal string in place of continued generation.

\paragraph{Baselines.} We compare against all three baselines from
Section~\ref{sec:exp_selective_acc}: Constant Step Probe, self-assessment,
and LoRA abstention, all trained on the same $500$-prompt train set.
Because RTP responses are sentence-length, fixed-position mid-generation
variants are not meaningful; we report only input-processing ($t=0$) variants.
\end{document}

%% file: method_hero_standalone.tex
%
%
%

\newlength{\fakecol}\setlength{\fakecol}{3.25in}  

\begin{tikzpicture}
\begin{axis}[
    width=\fakecol, height=4.5cm,
    axis lines=left,
    axis line style={axiscolor, line width=0.4pt, -},
    tick style={axiscolor, line width=0.4pt},
    enlarge x limits=0.02,
    ymin=0, ymax=1,
    ytick={0, 0.5, 1},
    yticklabel style={font=\scriptsize, color=mutedcolor,
        /pgf/number format/fixed, /pgf/number format/precision=1},
    ylabel={\scriptsize $\hat V_t$},
    ylabel style={color=bodycolor, font=\scriptsize, yshift=-0.4em},
    xmin=0.5, xmax=9.3,
    xtick={1,2,3,4,5,6,7,8,9},
    xticklabels={\scriptsize To, \scriptsize halve, \scriptsize 24,
                 \scriptsize :, \scriptsize 24,
                 {\scriptsize\color{thresholdcolor}\bfseries$\times$},
                 {\scriptsize\color{thresholdcolor}2},
                 {\scriptsize\color{thresholdcolor}=},
                 {\scriptsize\color{thresholdcolor}48}},
    xticklabel style={font=\scriptsize, color=bodycolor, inner sep=1pt},
    xlabel={\scriptsize token position $t$},
    xlabel style={color=mutedcolor, yshift=0pt, font=\scriptsize},
]
  \fill[savedbg, opacity=0.5] (axis cs:6,0) rectangle (axis cs:9.3,1);
  \node[font=\scriptsize\itshape, mutedcolor, anchor=north]
        at (axis cs:7.65,0.95) {tokens saved};

  \draw[thresholdcolor, dashed, line width=0.6pt]
        (axis cs:0.5,0.5) -- (axis cs:9.3,0.5);
  \node[anchor=south west, font=\scriptsize, thresholdcolor, inner sep=1pt]
        at (axis cs:0.55,0.5) {$r_\bot$};

  \addplot[valuecolor, line width=1.0pt, mark=*, mark size=1.4pt,
           mark options={fill=valuecolor, draw=valuecolor}]
    coordinates {(1,0.75) (2,0.80) (3,0.72) (4,0.78) (5,0.72) (6,0.27)};

  \addplot[only marks, mark=*, mark size=2.3pt,
           mark options={fill=thresholdcolor, draw=white, line width=0.5pt}]
    coordinates {(6,0.27)};

  \addplot[faintcolor, dashed, line width=0.7pt, mark=*, mark size=1.0pt,
           mark options={fill=faintcolor, draw=faintcolor}]
    coordinates {(6,0.27) (7,0.18) (8,0.12) (9,0.05)};

  \node[anchor=east, font=\scriptsize\bfseries, thresholdcolor, inner sep=2pt]
        at (axis cs:5.93, 0.27) {abstain};
\end{axis}
\end{tikzpicture}